\newcommand{\rset}{\mathbb R}
\def\Proj{\mathbb{P}}
\newcommand{\ep}{{\varepsilon}}
\let\R\Real
\let\C\Complex
\def\E{\operatorname{E}}			
\def\rank{\operatorname{rank}}
\DeclareMathOperator{\Span}{span}
\DeclareMathOperator{\diag}{diag}
\def\Ceil#1{\left\lceil #1 \right\rceil}
\def\Set#1{\left\{ #1 \right\}}
\def\Abs#1{\left| #1 \right|}
\def\Card#1{\left| #1 \right|}
\def\Norm#1{\left\| #1 \right\|}
\def\Paren#1{\left( #1 \right)}		
\def\Brack#1{\left[ #1 \right]}		
\def\Bigbar#1{\mathrel{\left|\vphantom{#1}\right.\n@space}}
\definecolor{codegreen}{rgb}{0,0.6,0}
\definecolor{codegray}{rgb}{0.5,0.5,0.5}
\definecolor{codepurple}{rgb}{0.58,0,0.82}
\definecolor{backcolour}{rgb}{0.95,0.95,0.92}
\newlist{subproblems}{enumerate}{1}
\setlist[subproblems]{label=(\alph*)}
\def\E{\operatorname{\mathbb{E}}}
\newtheorem{theorem}{Theorem}
\newtheorem{lemma}[theorem]{Lemma}
\newtheorem{corollary}[theorem]{Corollary}
\theoremstyle{definition}
\newtheorem{definition}[theorem]{Definition}
\newtheorem{observation}[theorem]{Observation}
\theoremstyle{remark}
\newenvironment{proofof}[1]{ {\noindent \em Proof of #1.}\/}{\hfill\qedsymbol\bigskip}
\def\tpose{\mathsf{T}}
\newcommand{\BB}[1]{{B^{(#1)}}}
\renewcommand{\Proj}[1]{{\Pi^{(#1)}}}
\def\mm{{\mathbf{m}}}
\def\MM{{\mathbf{M}}}
\def\AA{{\mathbf{A}}}
\def\BB{{\mathbf{B}}}
\def\One{{\mathbb{1}}}
\DeclareMathOperator{\mom}{g}
\DeclareMathOperator{\emom}{{\tilde{g}}}
\def\calA{\mathcal{A}}
\def\calB{\mathcal{B}}
\newcommand{\CC}{\mathbf{C}}
\newcommand{\eCC}{{\tilde{\mathbf{C}}}}
\newcommand{\ev}{\tilde{v}}
\newcommand{\eu}{\tilde{u}}
\def\CBA{\CC_{\calB\calA}}
\def\eCBA{\eCC_{\calB\calA}}
\def\CBAp{\CC_{\calB'\calA}}
\def\eCBAp{\eCC_{\calB'\calA}}
\def\CBBA{\CC_{\calB+\calB', \calA}}
\def\eCBBA{\eCC_{\calB+\calB', \calA}}
\newcommand{\Normi}[1]{\Norm{#1}_{\infty}}
\newcommand{\tildecal}[1]{\tilde{\mathcal{#1}}}
\newcommand{\suppress}[1]{}
\renewcommand{\Proj}[1]{{\mathcal{P}_{#1}}}
\newcommand{\ProjPerp}[1]{{\mathcal{P}^{\perp}_{#1}}}
\newcommand{\be}{\begin{equation}}
\newcommand{\ee}{\end{equation}}
\newcommand{\bea}{\begin{eqnarray}}
\newcommand{\eea}{\end{eqnarray}}
\newcommand{\bean}{\begin{eqnarray*}}
\newcommand{\eean}{\end{eqnarray*}}
\DeclareMathOperator{\LearnPowerDistribution}{\textsc{LearnPowerDistribution}}
\DeclareMathOperator{\Vandermonde}{Vdm}
\def\eps{\varepsilon}
\def\given{\mid}
\begin{document}
\title{Source Identification for Mixtures of Product Distributions}

\author{
Spencer L. Gordon\thanks{Engineering and Applied Science, California Institute of Technology, {\tt slgordon@caltech.edu}.} \and
Bijan Mazaheri\thanks{Engineering and Applied Science, California Institute of Technology, {\tt bmazaher@caltech.edu}} \and
Yuval Rabani\thanks{The Rachel and Selim Benin School of Computer Science and Engineering, The Hebrew University of Jerusalem, Jerusalem 9190416, Israel, {\tt yrabani@cs.huji.ac.il}. Research supported in part by NSFC-ISF grant 2553-17 and by NSF-BSF grant 2018687. Part of this work was done while visiting Caltech.} \and
Leonard J. Schulman\thanks{Engineering and Applied Science, California Institute of Technology, {\tt schulman@caltech.edu}. Research supported in part by NSF grants CCF-1618795, 1909972.}
}

\maketitle

\begin{abstract}
We give an algorithm for source identification of a mixture of $k$
product distributions on $n$ bits. This is a fundamental problem in machine
learning with many applications. Our algorithm identifies the source parameters 
of an identifiable mixture, given, as input, approximate values of multilinear moments (derived, for 
instance, from a sufficiently large sample), using $2^{O(k^2)} n^{O(k)}$
arithmetic operations. Our result is the first explicit bound on the computational
complexity of source identification of such mixtures. The running time improves
previous results by Feldman, O'Donnell, and Servedio (FOCS 2005) and Chen
and Moitra (STOC 2019) that guaranteed only learning the mixture (without parametric identification of the source). Our
analysis gives a quantitative version of a qualitative characterization of 
identifiable sources that is due to Tahmasebi, Motahari, and Maddah-Ali (ISIT 2018).
\end{abstract}

 \thispagestyle{empty}
 \newpage
 \setcounter{page}{1}


\section{Introduction}\label{sec: intro}

The main result of this paper is an algorithm for source identification of finite 
mixtures of binary product distributions. Our algorithm identifies the source
parameters of a mixture of $k$ product distributions on $n\ge 3k-3$ 
observable bits that satisfy a sufficient condition for identifiability, which is that at least
$3k-3$ of the observable bits are $\zeta$-separated (definition below). The
algorithm uses, as inputs, empirical frequencies that are assumed to deviate from their 
true value by an error of at most $\eps < \zeta^{O(k^2\log k)}$. (This assumption will be satisfied, with high probability, for a sample of size roughly $1/\eps^2$.) The algorithm 
identifies the parameters of the mixture model to an accuracy of
$\zeta^{-O(k^2\log k)}\eps$. This is also roughly (up to a different constant hidden by
the big-Oh notation) the statistical distance between the empirical distribution and the output (i.e., the learned) distribution. The runtime of our algorithm
is $2^{O(k^2)} n^{O(k)}$ arithmetic operations.\footnote{We note that the runtime 
relates to the post-sampling computation, after aggregating the empirically observed 
frequencies. There are good reasons for making this distinction. Collecting the sample 
and computing the frequencies is computationally trivial. It can often be done under 
a streaming model or in parallel. Or the frequencies might be available from an 
external source.} \footnote{In this introduction we suppress dependence on mixture weights.}
Under a stronger assumption that \textit{all} observable bits are
$\zeta$-separated, the runtime improves to $2^{O(k^2)} n$.

To formulate the problem, consider observable random variables 
$X_1,\ldots,X_n$ that are distributed on a common range $R$. In a finite mixture 
model, the joint distribution on these random variables is governed by a {\em hidden} 
or {\em latent} random variable $H$ supported on $\{1,\ldots,k\}$, such that 
$X_1,\ldots,X_n$ are statistically independent conditional on $H$. We consider
the case of a finite range $R$. The hardest and most fundamental case is when the range is binary (i.e., $R=\{0,1\}$) and there are no further 
constraints relating the distributions at the observables; the case of larger $R$ 
reduces to this case (see~\cite{FOS08}, for example).

Finite mixture models were
pioneered in the late 1800s in~\cite{Newcomb86,Pearson94} in the context of 
applications in astronomy and the mathematical theory of evolution. It is difficult
to do justice to the vast literature in statistics on mixture models; see, e.g., the 
surveys~\cite{Everitt1981,TSM85,lindsay1995mixture,McLachlanLR19}. The 
computational complexity of learning mixture models was studied starting with 
the seminal papers~\cite{KMRRSS94,CGG01,Dasgupta99,FM99}. The machine 
learning community has shown recent interest in learning mixtures of product 
distributions for pattern recognition. Motivating applications abound in population 
genetics, bioinformatics, image recognition, text classification, and other areas, 
e.g.,~\cite{Pritchard945,JWLWC05,juan2004,juan2002use}. The algorithms in this 
context are primarily based on the method of iterative Expectation Maximization 
(EM) clustering (e.g.,~\cite{JGV04,LWPA16,PKM16,carreira2000practical}). As 
detailed further below, in this genre provable guarantees of source identification 
are provided in~\cite{najafi2020reliable}, but the runtime depends exponentially 
on the sample size, and a very large $n$ is required. In the theory of computing 
literature, special cases and variants of learning mixtures of product distributions 
were considered 
in~\cite{CGG01,FM99,CR08,AGM12,AFHKL12,AHK12,RSS14,LRSS15,KKMMR18,CM19,GMRS20}.
A more detailed account on the previous results~\cite{FOS08,CM19} on mixture learning is given below.

The question of \textit{source identification} dates back in the statistics literature at least to 
1950~\cite{KR50,Koopmans50,Teicher63,Blischke64,YS68}. This is still a thriving
area of research (see,
e.g.,~\cite{carreira2000practical,AMR09,VS15,TahmasebiMM18,RVS20,aragam2020}). 
From a computational 
perspective, the ``learning'' vs.\ ``source identification'' contrast was raised in~\cite{FM99}, in 
the context that interests us here of learning a mixture of several (in that work, two) binary product distributions.
``Learning'' means computing any hypothetical model that generates observable statistics
close to the empirical ones. ``Source identification'' means computing a model that
is close in parameter space to the true model underlying the empirical statistics.
Clearly, source identification is a more challenging goal that implies also learning.
Source identification is desirable for a variety of reasons. A key reason is the danger
of overfitting a model to the empirical data, thereby ruining its predictive guarantees (see,
for instance,~\cite{KF09} ch.~16). Another reason is that source identification is necessary for
the quantification of causal relations between the hidden variable and the observable variables
(see~\cite{pea09,SpGlS00} for an introduction to graphical causal models)---here identification, but not learning, provides explanatory value along with the possibility of effective intervention. Finally, whether in the causal context or otherwise, identification provides the statistician with \textit{semantics}---an actual characterization of the process generating the data---which may be far more useful than the mere ability to artificially generate samples from the same distribution.

The distinction between learning and source identification has two main aspects. First, 
a distribution on observable variables might be inherently explainable by two (or more) 
far-apart models, ruling out source identification. Thus, we must restrict our attention to
a class of {\em identifiable} models, for which the mapping of model to distribution on
observables is one-to-one. In this paper, we consider the case that there are at least 
$2k-1$ observables that are $\zeta$-separated. An observable bit $X$ is \textit{separated} if
the $k$ conditional probabilities $\Pr[X=1\mid H=j]$, $j=1,\dots,k$ are mutually distinct, and  
\textit{$\zeta$-separated} if every two values differ by at least $\zeta>0$.\footnote{Notice that
$\zeta$ cannot exceed $\frac{1}{k-1}$.}
Having $2k-1$ observables that are separated is a sufficient, and in general necessary
condition for identification~\cite{TahmasebiMM18}. (Depending on the parameters, 
fewer sources may suffice. But for example $2k-1$ are always necessary when the $X_i$ 
are iid conditional on $H$, which we call the ``power distribution'' case~\cite{RSS14}, since terms of the form $\prod_{i=1}^n \Pr[X_i=1\mid H=j]$ are replaced by $(\Pr[X_1=1\mid H=j])^n$.)

Second, even if a model is identifiable in the limit of perfect statistics (infinite sample size), 
 the available empirical statistics might be insufficiently accurate, yet still allow for the learning objective. 
It is obvious, though, that a learning algorithm that runs on data produced
by an \textit{identifiable} source and is required to achieve sufficiently high accuracy also implicitly
identifies the source.\footnote{This is because the space of source
parameters is compact and the mapping to observable statistics is
continuous.} 
However, a learning algorithm might learn
a mixture model without 
this implying that the empirical data enables identification.
In fact, the ground-breaking work of~\cite{FOS08} gives an $(nk/\eps)^{O(k^3)}$ time algorithm for {\em learning}
a model of $k$-mixture of $n$ binary product distributions that generates a distribution on the
observables within statistical distance $\eps$ of the empirical statistics. A faster $k^{O(k^3)}(n/\eps)^{O(k^2)}$
time learning algorithm is given in~\cite{CM19}.\footnote{Notice, in particular, footnote 2 in that paper.}
The algorithms in these papers are not guaranteed, under any assumptions, to identify the source to any particular accuracy $\eps$; in fact the algorithms succeed even in cases where the source is not identifiable.

Our contribution is twofold. First, we prove a quantitative 
version of the identifiability criterion: namely, for any given $\zeta>0$, we establish (roughly; 
see later for exact statements) that any two models which differ by $\eta$ in parameter space, 
differ in their statistics by at least $\eta \zeta^{O(k^2 \log k)}$. \footnote{To simplify the informal 
discussion, we state most of the bounds in the rest of this paragraph just for $\eta\ge \zeta^{O(k^2 \log k)}$.}
This, of course, implies that the algorithms of~\cite{FOS08,CM19}
can be used for source identification, assuming $\zeta$-separation and sufficiently small
target accuracy $\eps = \zeta^{O(k^2 \log k)}$. Second, we improve substantially over the runtime of these two 
algorithms, so that (even under the conditions under which our result imples that those algorithms can perform identification), our source identification algorithm is more efficient. Specifically, if we have at least $3k-3$ observables that are $\zeta$-separated, our algorithm requires 
empirical statistics accuracy $\zeta^{O(k^2 \log k)}$ (i.e., sample size $\zeta^{-O(k^2 \log k)}$) and has a
runtime of $2^{O(k^2)} n^{O(k)}$ arithmetic operations. Our algorithm can also identify the source using 
the minimum of $2k-1$ \, $\zeta$-separated observables; but then we require input accuracy 
$\zeta^{O(k^3)}$ (but the same $2^{O(k^2)} n^{O(k)}$ runtime). If \textit{all} observables are $\zeta$-separated, then the runtime 
improves to $2^{O(k^2)} n$ \footnote{Here we suppress a $O(\log \log (\eps^{-1}))$ term.} (with the same input accuracies, according to the number of variables). Our contributions establish quantitative bounds on the qualitative 
sufficiency of $\zeta$-separation of~\cite{TahmasebiMM18}; the results in that paper are entirely 
non-algorithmic. The only explicit algorithmic result on source identification that we are aware of 
is~\cite{najafi2020reliable}. Their algorithm, which runs under a somewhat more general assumption 
than $\zeta$-separation, requires complete enumeration over the choice of mixture constituent that
generated each sample point, for a sufficiently large sample. Thus, it is prohibitively expensive, requiring 
at least $n = \exp(k^2)$ observable random variables (as compared with $3k-3$), and runtime that is 
doubly exponential, namely $k^{\exp(k^2)}$.

The main idea underlying our algorithm is the following. Given sufficiently many $\zeta$-separated 
observables, for which we have sufficiently accurate empirical multilinear moments,\footnote{We call these empirical moments as we expect them to obtained by sampling; but our theorems depend only on their being sufficiently accurate.}
we
show how to construct ``synthetic bits'' for which we can compute highly accurate power moments, i.e., the moments that occur in the far more restricted problem of power distributions. 
  The higher moments of these synthetic bits are created out of linear 
combinations of multilinear moments of the original bits. This mechanism in its idealized form (i.e., 
for perfect statistics) suffices to re-prove the theorem of~\cite{TahmasebiMM18} that $2k-1$ 
separated observables suffice for source identification. The next challenge we face is to bound 
the coefficients of the multilinear monomials in these linear combinations, as this affects the required accuracy of the empirical statistics (thus, the
 required 
sample size). The synthetic bits method reduces the 
problem to the special case of identification of a mixture of $k$ power distributions, i.e., when the 
$X_i$-s are iid conditional on $H$. This special case is effectively an extension of the theory of orthogonal 
polynomials on the reals and the classical {\em moment problem}~\cite{schmudgen2017moment,Simon15}, 
and methods such as Prony's method or the Matrix Pencil method were shown to solve 
it~\cite{RSS14,LRSS15,KKMMR18,GMRS20}. Despite the power distributions case being so highly 
constrained, it has useful applications, e.g., in reconstructing population histories and in learning topic 
models (see the above references). We can use these existing algorithms to recover the mixture of power 
distributions on synthetic bits. That in turn enables recovery of the mixture of product distributions on the 
observable bits. The best algorithm for power distributions to date~\cite{GMRS20} requires estimates of 
the first $2k$ moments of the synthetic bits to within accuracy $\zeta^{O(k)}$, and has runtime $O(k^{2+o(1)})$. 
Thus, this component of our runtime is cheap; the runtime bound of our algorithm is dominated by the exhaustive 
search for $3k-3$ observables that are $\zeta$-separated (unless all observables are known to be $\zeta$-separated), and by the construction of the $2k$ synthetic bits. 

\section{Preliminaries and main theorem}\label{sec: pre}

\paragraph{Notation} There is a hidden variable $H$ ranging in $[k]$, and $n$ binary observable variables $X_i$; we write 
$\mm_{ij}:= \Pr[X_i=1\mid H=j]$.

Vectors are row vectors unless otherwise indicated. For $S \subseteq [n]$ define the random variable 
$X_S = \prod_{i\in S} X_i$. 
We make extensive use of \textit{Hadamard product} for 
vectors $u=(v_1,\ldots,v_k)$, $v=(v_1,\ldots,v_k)$:
\begin{align*}\odot: \rset^k \times \rset^k & \to \rset^k \\ u \odot v &= (u_1v_1,\ldots,u_kv_k) \end{align*}
The identity for this product is the all-ones vector $\One$. We associate with vector $u$ the linear
operator $u_\odot = \diag(u)$, a $k \times k$ diagonal matrix, so that 
\[ v \cdot u_\odot =u \odot v .\]

Let $\mm_i$ be the row vector $(\mm_{i1},\ldots,\mm_{ik})$. 
Let $\MM \in \R^{2^{[n]} \times k}$ be the matrix with rows indexed by subsets $S=\Set{i_1,\dotsc, i_s} \subseteq [n]$, with rows 
\[ \MM_S = \mm_{i_1} \odot \mm_{i_2} \odot \dotsb \odot \mm_{i_s}. \] 
In particular, $\MM_{\emptyset} = \One$ and $\MM_{\Set{i}} = \mm_i$ for all $i \in [m]$. 

Observe that source identification is not possible if $\MM$ has less than full column rank, i.e., $\rank \MM<k$, as then the mixing weights cannot be unique.

For a collection of subsets $\mathcal{S} \subseteq 2^{[n]}$, let $\MM[\mathcal{S}]$ denote the restriction of $\MM$ to the rows $\MM_S, S \in \mathcal{S}$. E.g., $\MM=\MM[2^{[n]}]$.
 
 \paragraph{The empirical multi-linear moments}
 For a finite sample drawn from the model, we let $\emom(S)$ be the empirical estimate of $\E[X_S]$, i.e., the fraction of samples for which $\prod_{i\in S}X_i=1$. These $\emom(S)$ for $S \subseteq [n]$ are the complete list of ``observables'' of the model. Each converges, in the infinite-sample limit, to the value $\mom(S):=\E[X_S]$,
\[ \mom(S)= \MM_S \pi^{\tpose} = \MM_S \pi_\odot \One^{\tpose}. \]

\paragraph{Comparison to the iid case}
The fact that we have access \textit{only} to multi-linear moments is the key constraint of the problem.
Compare with the ``power'' case, i.e., when we know in advance that all rows $\mm_i$ are identical. (That is, the $X_i$ are iid conditional on $H$.) Then observing bit $X_2$ alone tells us nothing about $X_1$ that we do not learn from observing bit $X_1$. But observing the product $X_1X_2$ does tell us something new: in fact it is distributed as would be a binary observable whose row had entries $(\mm_{11}^2,\ldots,\mm_{1k}^2)$. With $2k$ rows, we obtain the first $2k$ moments of the $k$-sparse distribution corresponding to $\mm_1$ (i.e., the real-valued, $k$-sparse distribution which places atomic probability $\pi_j$ at $\mm_{1j}\in \rset$, also called a ``$k$-spike'' distribution). From this point on, one may apply the time-honored method of Prony to identify the source. For details, a runtime analysis, and further references, see~\cite{GMRS20}. 

Much of the interest of the present problem, by contrast, is due precisely to the fact that we \textit{cannot} read out higher moments of the distributions corresponding to any of the bits $X_i$, because the various rows do not have any assumed relationship. How to nonetheless obtain higher moments of individual rows, is the challenge our algorithm tackles. 
 
\paragraph{Main theorem}
In what follows $\zeta$ is an assumed separation parameter, and $\pi_{\min}$ is an assumed lower bound on mixture weights.
 \begin{theorem} \mbox{}
 
($i$) Given access to the joint statistics of $n$ observable bits among which at least $3k-3$ which are $\zeta$-separated, with all statistics available to additive accuracy $\eps$ for $\eps\leq (\pi_{\min})^{O(\log k)} \zeta^{O(k^2 \log k)}$, our algorithm runs in time $2^{O(k^2)}n^{O(k)}$ and computes the model parameters ($\pi$ and all row values $\mm_{ij}$) to within accuracy $\eps  \zeta^{-O(k^2 \log k)}(\pi_{\min})^{-O(\log k)}$.

($ii$) If all rows are $\zeta$-separated, then a slightly simpler version of our algorithm identifies the source (to within the same accuracy, given the same input accuracy, as in ($i$)), in runtime $2^{O(k^2)}n$ (i.e., linear in $n$).

($iii$) If only $2k-1$ \, $\zeta$-separated rows are available, another simpler version of our algorithm also identifies the source, but the loss factor on the accuracy is $ (\pi_{\min})^{-k}\zeta^{-O(k^3)}$ and consequently one must start with $\ep\leq  (\pi_{\min})^{k}\zeta^{O(k^3)}$ which requires sample complexity comparable to prior work, but achieving the same improved runtime as in ($i$). 
 \end{theorem}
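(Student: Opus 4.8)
The plan is to reduce the problem to the far more constrained \emph{power distribution} case --- in which all rows $\mm_i$ coincide and one is simply handed the first $2k$ moments of a $k$-spike distribution --- for which a fast, robust algorithm already exists (\cite{GMRS20}, needing its $2k$ moments to accuracy $\zeta^{O(k)}$ and running in $O(k^{2+o(1)})$ time). Fix a set $T$ of $\zeta$-separated rows and single out one of them, $\mm_{i_0}$, as a \emph{pivot}. Since $\mm_{i_0}$ has $k$ distinct entries, $\One,\mm_{i_0},\dots,\mm_{i_0}^{\odot(k-1)}$ is a Vandermonde basis of $\rset^k$, so every other row of $T$ can be expanded in it; applying $\odot$ repeatedly then lets me rewrite each power $\mm_{i_0}^{\odot t}$, $t\le 2k-1$, as an explicit linear combination $\sum_S c^{(t)}_S\,\MM_S$ over subsets $S\subseteq T$ of \emph{distinct} rows. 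Taking $\pi$-inner products, the empirical quantities $\hat m_t := \sum_S c^{(t)}_S\,\emom(S)$ then approximate the power moments $m_t=\langle\mm_{i_0}^{\odot t},\pi\rangle$ of the $k$-spike that places weight $\pi_j$ at location $\mm_{i_0 j}$. I would feed $\hat m_0,\dots,\hat m_{2k-1}$ to the power-distribution solver to recover approximations of $\pi$ and of the pivot row $\mm_{i_0}$; the change-of-basis coefficients already in hand give the remaining rows of $T$; and for each observable $i\notin T$, the identities $\mom(S\cup\{i\})=\langle\mm_i\odot\MM_S,\pi\rangle$ over a spanning family $\{\MM_S:S\subseteq T\}$ form a (well-conditioned) linear system in $\mm_{i1},\dots,\mm_{ik}$, which I solve. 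In the exact-statistics case this already re-derives the sufficiency of $2k-1$ separated rows from~\cite{TahmasebiMM18}.

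The hard part will be bounding the $\ell_1$ norm of the coefficient vectors $c^{(t)}$, since the error in $\hat m_t$ is at most $\eps\cdot\sum_S|c^{(t)}_S|$ and this must land below the $\zeta^{O(k)}$ tolerance of~\cite{GMRS20} (relative to a $k$-spike separation that is at least $\zeta$). A naive construction that builds $\mm_{i_0}^{\odot t}$ one degree at a time uses exactly $2k-1$ separated rows, but each of its $\Theta(k)$ steps inflates the coefficient magnitude by a factor governed by the pivot Vandermonde's condition number ($\zeta^{-O(k)}$) and by $\pi_{\min}^{-1}$, for a total loss of $\pi_{\min}^{-k}\zeta^{-O(k^3)}$ --- this is exactly part~($iii$), hence its weaker accuracy demand. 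With at least $3k-3$ separated rows I would instead build the powers by a recursive (roughly doubling) scheme: a realization of $\mm_{i_0}^{\odot 2^{j}}$ is obtained as the $\odot$-product of two realizations of $\mm_{i_0}^{\odot 2^{j-1}}$ supported on \emph{disjoint} batches of rows --- legal precisely because disjointness keeps $\MM_S\odot\MM_{S'}=\MM_{S\cup S'}$ a genuine multilinear moment --- reaching degree $\approx 2k$ in only $O(\log k)$ rounds, each round squaring the coefficient $\ell_1$ norm, so the total loss drops to $\pi_{\min}^{-O(\log k)}\zeta^{-O(k^2\log k)}$: this is part~($i$). Pinning down exactly which such recursion is feasible with $3k-3$ rows, and making precise all the ``well-conditioned'' claims --- lower-bounding the relevant Vandermonde determinants and the invertibility of $[\MM_S\odot\pi]_S$ in terms of $\zeta$ and $\pi_{\min}$, and tracking condition numbers through the composition --- is the bulk of the technical work. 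Part~($ii$) is the same construction with no search at all, since every row is assumed separated.

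Finally I would assemble the error budget and account for the search. Propagating the coefficient bound through the $\zeta^{O(k)}$-robust guarantee of~\cite{GMRS20} and then through the back-substitution for the non-pivot rows (again Vandermonde- and $\pi_{\min}$-conditioned) yields: if $\eps\le\pi_{\min}^{O(\log k)}\zeta^{O(k^2\log k)}$ (resp.\ $\le\pi_{\min}^{k}\zeta^{O(k^3)}$ in part~($iii$)), every reconstructed parameter is within $\pi_{\min}^{-O(\log k)}\zeta^{-O(k^2\log k)}\eps$ (resp.\ $\pi_{\min}^{-k}\zeta^{-O(k^3)}\eps$) of its true value; compactness of the parameter space together with continuity of the parameters-to-moments map then makes this accuracy a \emph{certificate} of uniqueness. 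For parts~($i$) and~($iii$) I do not know in advance which rows are separated, so I run the construction over all subsets $T$ of the appropriate size --- $n^{O(k)}$ of them --- accepting the output of a given $T$ only when a pivot-Vandermonde-type matrix formed from the $\emom(S)$, $S\subseteq T$, is well conditioned (which validates the coefficient bounds) and the resulting model is consistent with all the data; the genuinely $\zeta$-separated subset passes. Since the power-distribution solver costs $O(k^{2+o(1)})$ and the remaining linear algebra is polynomial, the runtime is dominated by the $n^{O(k)}$ subsets times the $2^{O(k^2)}$ cost of forming the (up to $O(\log k)$-level) coefficient expansions, giving $2^{O(k^2)}n^{O(k)}$; in part~($ii$) the single fixed construction yields $2^{O(k^2)}n$.
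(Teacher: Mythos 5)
You have reconstructed the paper's architecture almost exactly: a pivot row, realizations of its Hadamard powers as linear combinations of genuine multilinear moments supported on disjoint batches of rows, a doubling recursion that cuts the depth from $\Theta(k)$ to $O(\log k)$ (which is precisely the source of the $\zeta^{-O(k^3)}$ versus $\zeta^{-O(k^2\log k)}$ dichotomy between the $2k-1$ and $3k-3$ variants), a hand-off to the power-distribution solver of \cite{GMRS20}, back-substitution for the remaining rows, and an $n^{O(k)}2^{O(k^2)}$ exhaustive search validated by a conditioning certificate. This is the paper's plan.

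The genuine gap is that you never explain how the coefficients $c^{(t)}_S$ are \emph{computed}: they are functions of the unknown rows, so ``expand each row in the pivot's Vandermonde basis'' is not an algorithmic step. The paper's central identity is the following. Take three pairwise disjoint batches and $\calA\subseteq 2^S$, $\calB\subseteq 2^T$ with $\AA=\MM[\calA]$, $\BB=\MM[\calB]$ invertible; then $\CC_{\calB\calA}=\BB\pi_\odot\AA^\tpose$ is an entirely observable matrix, and the coefficient vector $u$ with $u\BB=\mm_1^{\odot r}$ equals $v\,\CC_{\calB\calA}^{-1}$, where $v=\mm_1^{\odot r}\pi_\odot\AA^\tpose$ is also observable --- the unknown factor $\pi_\odot\AA^\tpose$ cancels. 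Without this (or an equivalent) identity the coefficients are not accessible from data. The same identity also supplies a re-projection step that your doubling recursion needs: as literally stated (``a realization of $\mm_{i_0}^{\odot 2^{j}}$ is the $\odot$-product of two realizations of $\mm_{i_0}^{\odot 2^{j-1}}$ on disjoint batches''), unwinding the recursion forces $2^{j}$ pairwise disjoint batches at the bottom level, i.e.\ $\Theta(k^2)$ rows rather than $3k-3$. The fix is to re-express each computed power over a \emph{fixed} batch $\calB$ via $v\mapsto v\,\CC_{\calB\calA}^{-1}$ before the next doubling, so only two disjoint ``$\BB$-type'' batches plus one ``$\calA$-type'' reference batch are ever used. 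Finally, the per-round loss is governed not by a plain Vandermonde condition number but by $\sigma_k$ of a $k\times k$ submatrix of $\MM[2^S]$, whose lower bound $\zeta^{O(k^2)}$ under $\zeta$-separation is itself a substantial separate argument (Section~\ref{sec:condition} of the paper), not something that follows from bounding Vandermonde determinants of a single separated row.
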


\section{Algorithm} \label{sec:alg}

\paragraph{Further definitions}
 Let $V_{(i)}$ be the subspace of $\rset^k$ spanned by the $i$'th standard basis vector,
 and $P_{(i)}$ the projection onto $V_{(i)}$ w.r.t.\ usual inner product (i.e., as a matrix, all $0$'s except a $1$ in entry $(i,i)$).
 For a separated vector $v=(v_1,\ldots,v_k)$ 
define the Lagrange interpolation polynomials $p_{v,i}$ by $p_{v,i}(v_j)=\delta_{ij}$. We then have the matrix equation \[ p_{v,i}(v_\odot)=P_{(i)}. \] Write $p_{v,i}(x)=\sum_{j=0}^{k-1} p_{v,i,j}x^j$. Define the ``coefficient norm'' of a polynomial by
$\|p_{v,i}\|_c=\sum_0^{k-1} j |p_{v,i,j}|$.

It will be very useful to put our observables in matrix form. 
Let $S,T\subseteq [n]$ be disjoint sets and take any $\mathcal{A}\subseteq 2^S, \mathcal{B} \subseteq 2^{T}$. Then the matrix 
$\CBA$ is observable (meaning every entry of it is a function of the joint statistics of the observable random variables $X_1,\ldots,X_n$), where
\be
\CBA \coloneqq \MM[\calB] \pi_\odot \MM[\calA]^\top. \label{CCBA}
\ee
Let $\eCBA \coloneqq \Brack{\emom(B \cup A)}_{B \in \calB, A \in \calA}$ be the corresponding matrix of empirical moments.

We'll use $\Norm{\cdot}$ to denote the operator norm for matrices (with respect to the Euclidean norm in the domain and range).

\paragraph{Strategy} If $\mm$ possesses $2k-1$ or $3k-3$ (depending on the version of the algorithm; this affects only runtime) $\zeta$-separated rows of $\mm$, and given empirical statistics within the required distance from perfect statistics, our algorithm will identify the model (to within similar accuracy). This includes even rows which are not themselves $\zeta$-separated; all we need is that some $2k-1$ or $3k-3$ rows be $\zeta$-separated.
The algorithm has the following structure: range over all $n^{O(k)}$ subsets of the rows; run the identification algorithm using that set. If the set does not suffice for identification (which can happen only if the set includes some non-$\zeta$-separated rows), this will be flagged by the algorithm. 
Any two such runs which do terminate successfully, must result in very close parameter reconstructions. 

\subsection{Synthetic bits and bootstrapping}
We start with the $2k-1$-observables version of the algorithm. Our final algorithm in section~\ref{sec:3k-3}, which uses $3k-3$ observables and achieves better runtime, will use a slightly more complicated construction, but the main ideas are present in the simpler variant here.

\subsubsection{Constructing higher moments of a row, using $2k-2$ other rows}
In what follows we show how to compute moments of arbitrary degree of the $k$-spike distribution associated with any bit $X_i$, given access to any additional $2k-2$ \, $\zeta$-separated observable bits. For concreteness, let $X_1$ (corresponding to the row $\mm_1$) be the variable for which we want to find higher moments. (We don't require that $\mm_1$ be $\zeta$-separated for the moment computation, although that will be needed in a subsequent step of the algorithm.) Let $S=\{2,\ldots,k\}$, $T=\{k+1,\ldots,2k-1\}$ be the indices of the $2k-2$ other $\zeta$-separated rows, partitioned into two sets of $k-1$ rows each. 

The only thing that our statistics tell us about row $\mm_1$, alone, is its first moment: $\E(X_1)=\mm_1 \pi_\odot \One^\top$. 
Equivalently this quantity is also the expectation the $k$-spike distribution associated with $X_1$. It will be critical to obtain higher moments of this distribution. The second moment  is equal to $(\mm_1 \odot \mm_1) \pi_\odot \One^\top$, and more generally (with $\mm_1^{\odot r}$ denoting the $r$-fold Hadamard product of $\mm_1$ with itself), the $r$'th moment is $\mm_1^{\odot r}\pi_\odot \One^\top$ for any $r$. We will need to have the $1,\dotsc,2k$'th moments in order to solve for $\mm_1$ and $\pi$.

We show in Section~\ref{sec:condition} that there exist subsets $\calA = \Set{A_1,\dotsc,A_k}\subseteq 2^S$, $\calB = \Set{B_1,\dotsc,B_k} \subseteq 2^T$ of size $k$ each such that $\AA \coloneqq \MM[\calA] \in \R^{k\times k}$ and $\BB  \coloneqq \MM[\calB]\in \R^{k\times k}$ are invertible. Moreover, we'll have $A_1 = \emptyset = B_1$ so that the first row of $\MM[A]$ and $\MM[B]$ will be the vector $\One$. Then the matrix $\CBA = \BB \pi_{\odot} \AA^{\tpose}$ (as defined in~\eqref{CCBA}) is an invertible, observable matrix.
 
Now consider the vector 
\[ v_1 \coloneqq \mm_1 \pi_\odot \AA^\tpose = (\E[X_1X_{A_1}],\dotsc,\E[X_1X_{A_k}]). \] Each coordinate $\E[X_1X_{A_i}] = \E[X_{A_i\cup\Set{1}}]$ is a multi-linear moment and is therefore observable. In the algorithm, our starting point will be approximations to multi-linear moments, and we will address that shortly, but for now we write down the algorithm as it would run with perfect statistics.

While we'd like to find $\mm_1^{\odot 2}$ directly, we won't be able to do so, and instead we'll settle for computing the vector $u_1$ satisfying 
\[u_1\BB \coloneqq \mm_1.\] This $u_1$ is unique since $\BB$ is invertible, and it gives the coefficients needed to express $\mm_1$ as a linear combination of the rows of $\BB$. We can in fact compute $u_1$ from observables, because \begin{equation}u_1 = v_1\CBA^{-1}. \label{u1v1CBA} \end{equation} We can verify~\eqref{u1v1CBA} algebraically: 
\begin{equation} v_1 (\CBA)^{-1} \BB = v_1 (A^\tpose)^{-1} \pi_\odot^{-1} \BB^{-1} \BB = 
v_1 (\AA^\tpose)^{-1} \pi_\odot^{-1} = \mm_1 \pi_\odot \AA^\tpose (\AA^\tpose)^{-1} \pi_\odot^{-1} =  \mm_1. \label{eq:u1B} \end{equation}

Conceptually what~\eqref{eq:u1B} means is that $u_1\BB$, a linear combination of rows of $\BB$, defines  a random variable that has been synthesized out of the $X_{B_\ell}$ random variables, and shares the same expectations as $X_1$ conditional on any setting of $H$. We call this the method of \textit{synthetic bits.} Our algorithm consists of repeatedly \textit{bootstrapping} synthetic bits.

To be more explicit, use $u_1=(u_{11},\dotsc,u_{1k})$ to define a new random variable $Y \coloneqq  \sum_{\ell=1}^k u_{1\ell} X_{B_\ell}$. 

$Y$ has been \textit{synthesized} out of $X_{B_1},\dotsc,X_{B_k}$. $Y$ has the same expectation as $X_1$, conditioned on the value of the hidden variable $H$, since \[\E(X_1\given H=j) =  (\mm_1)_j = \sum_{\ell=1}^k u_{1\ell} \E(X_{B_\ell} \given H=j)
 = \E(Y\given H=j).\] Moreover, given the value of $H$, $X_1$ and $Y$ are independent, 
because $\{1\}\cap B_\ell=\emptyset$.

\paragraph{Bootstrapping to obtain the second moment.} As a consequence of~\eqref{eq:u1B}, we can perform the bootstrapping step which is at the heart of our algorithm. We are interested in obtaining the vector 
\[v_2 \coloneqq \mm_1^{\odot 2} \pi_{\odot} \AA^\tpose \]
which is a linear image (under the mapping $\pi_{\odot} \AA^\tpose$) of a random variable distributed as the product of two random variables which are independent conditional on $H$; each distributed as $X_1$ conditional on $H$. This we get by setting
 \[v_2 \coloneqq (\E(X_1YX_{A_1}),\dotsc,\E(X_1YX_{A_k})), \] Since $A_1 = \emptyset$, the entry $(v_2)_1 = \E(X_1Y) = \mm_1^{\odot 2} \pi_{\odot} \One^\tpose$ is exactly our desired second moment. To get access to $v_2$, we observe that even though we don't necessarily have two independent copies of $\mm_1$ among our rows, our synthetic bit $Y$ provides the needed independence from $X_1$. 
As a matter of notation, for our collection $\calB \subseteq 2^T$, we define $\calB+\{1\}$ to consist of the sets $B_\ell \cup \{1\}$ for each $B_\ell \in \calB$.

 Now we can write 
\begin{equation}
 v_2 = \mm_1^{\odot 2}\pi_\odot \AA^\tpose =
(\mm_1 \odot (u_1\BB)) \pi_\odot \AA^\tpose 
= u_1 \CC_{\calB+\{1\},\calA}
 \label{boot2} \end{equation}
 Since we already have $u_1$, and since $\CC_{\calB+\{1\},\calA}$ is observable,  expression~\eqref{boot2} can be used to compute $v_2$. 

\paragraph{Bootstrapping to all moments.} The generalization of the second moment computation is this. Given vector $u_{r-1}$ defined by $u_{r-1}\coloneqq \mm_1^{\odot(r-1)}\BB^{-1}$, we define $v_r$ and $u_r$ and show how to compute them:
\begin{align} & \text{Definition} && && \text{Computation} \nonumber \\
 v_r &\coloneqq \mm_1^{\odot r}\pi_\odot \AA^\tpose &&\quad
= (\mm_1 \odot (u_{r-1}\BB)) \pi_\odot \AA^\top  
&& = u_{r-1} \CC_{\calB+\{1\},\calA}  \label{bootrperfectv}
\\
u_r &\coloneqq \mm_1^{\odot r}\BB^{-1} &&\quad= \mm_1^{\odot r} \pi_{\odot} \AA^\tpose {\AA^{\tpose}}^{-1} \pi_{\odot}^{-1} \BB^{-1} 
&& = v_r \CC_{\calB\calA}^{-1}.
 \label{bootrperfectu} \end{align}
 In the actual algorithm, we'll be working with empirical approximations of $v_r, u_r$, and $\CBA$, which we'll denote $\ev_r$, $\eu_r$, and $\eCBA$, respectively. A key part of the technical work will be in bounding error amplification.
 In order to compute $\ev_r$ and $\eu_r$, we'll use (following~\eqref{bootrperfectv},\eqref{bootrperfectu}) 
 the following assignments, where all quantities are empirical estimates:
 \begin{align}
 \ev_r & \coloneqq
\eu_{r-1} \eCC_{\calB+\{1\},\calA}
 \label{bootr1} \\
\eu_r & := \ev_r (\eCBA)^{-1} \label{bootr2} 
\end{align}

It is important to note that the bootstrapping can be performed only because $\eu_{r-1}$ places weight solely upon rows in $\calB \subseteq 2^T$.
Then the bootstrapping yields an expression for $\ev_r$ that contains moments involving $X_1$ as well as the entries in $T$. But once we compute $\eu_r$, we regain a coefficient vector
 for a synthetic bit,
 that again places weight solely upon rows in $\calB$.

The fact that both the computation of $\ev_r$ (in~\eqref{bootr1}) and of $\eu_r$ 
(in~\eqref{bootr2}) work stably and not only in the perfect-statistics limit, relies upon the following:
\begin{corollary} If all empirical multilinear moments are within $\eps < \zeta^{\Omega(k^2)}$ of their true values, then
$\Norm{\eCBA - \CBA} \leq k\eps$ and $\Norm{\eCBA^{-1} - \eCBA^{-1}} \leq \zeta^{-O(k^2)}\pi_{\min}^{-2}\eps$. 
\label{stab} \end{corollary}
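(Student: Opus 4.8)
The plan is to decouple the two bounds: the first is immediate from the entrywise accuracy hypothesis, while the second is the standard first-order perturbation estimate for matrix inverses, whose only non-routine input is an a priori upper bound on $\Norm{\CBA^{-1}}$.

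For the first bound, observe that $\eCBA - \CBA$ is a $k\times k$ matrix whose $(B,A)$ entry is $\emom(B\cup A) - \mom(B\cup A)$, which by hypothesis has absolute value at most $\eps$. Hence its Frobenius norm is at most $\sqrt{k^2\eps^2} = k\eps$, and since the operator norm is dominated by the Frobenius norm, $\Norm{\eCBA - \CBA}\le k\eps$.

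For the second bound (read as $\Norm{\eCBA^{-1} - \CBA^{-1}}$), set $E = \eCBA - \CBA$. By \eqref{CCBA}, $\CBA = \BB\,\pi_\odot\,\AA^\tpose$, so $\CBA^{-1} = (\AA^\tpose)^{-1}\pi_\odot^{-1}\BB^{-1}$ and therefore $\Norm{\CBA^{-1}}\le \Norm{\AA^{-1}}\,\Norm{\pi_\odot^{-1}}\,\Norm{\BB^{-1}} = \pi_{\min}^{-1}\Norm{\AA^{-1}}\,\Norm{\BB^{-1}}$, using $\Norm{\pi_\odot^{-1}} = \pi_{\min}^{-1}$. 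The key quantitative input is that $\calA,\calB$ may be chosen so that $\Norm{\AA^{-1}},\Norm{\BB^{-1}}\le \zeta^{-O(k^2)}$ — this is exactly what the invertibility analysis of Section~\ref{sec:condition} provides in quantitative form — giving $\Norm{\CBA^{-1}}\le \zeta^{-O(k^2)}\pi_{\min}^{-1}$. With this, the hypothesis $\eps < \zeta^{\Omega(k^2)}$ (the hidden constant taken large enough to absorb the $\poly(k)\cdot\pi_{\min}^{-1}$ loss) yields $\Norm{E}\,\Norm{\CBA^{-1}} \le k\eps\cdot\zeta^{-O(k^2)}\pi_{\min}^{-1}\le \tfrac12$, so $\eCBA = \CBA + E$ is invertible and the Neumann-series perturbation bound
\[ \Norm{\eCBA^{-1} - \CBA^{-1}}\;\le\;\frac{\Norm{\CBA^{-1}}^2\,\Norm{E}}{1 - \Norm{\CBA^{-1}}\,\Norm{E}}\;\le\;2\,\Norm{\CBA^{-1}}^2\,k\eps \]
applies; substituting $\Norm{\CBA^{-1}}\le \zeta^{-O(k^2)}\pi_{\min}^{-1}$ gives $\Norm{\eCBA^{-1} - \CBA^{-1}}\le \zeta^{-O(k^2)}\pi_{\min}^{-2}\eps$, as claimed.

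The only genuine obstacle is the a priori estimate $\Norm{\AA^{-1}},\Norm{\BB^{-1}}\le \zeta^{-O(k^2)}$; everything else is bookkeeping. If one wanted to prove that estimate in place rather than cite Section~\ref{sec:condition}, the natural route is to take a nested family such as $\calA = \Set{\emptyset,\{i_1\},\{i_1,i_2\},\dotsc}$ so that, after row operations, $\AA$ becomes a generalized-Vandermonde-type matrix in the $\zeta$-separated coordinates of the $\mm_i$, and then to bound the entries of its inverse by the Lagrange-interpolation / divided-difference formulas, each factor of which is the reciprocal of a difference of two coordinates and hence at most $\zeta^{-1}$, with $O(k^2)$ such factors. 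I would also verify that the constants hidden in the two occurrences of $\zeta^{-O(k^2)}$ are compatible with the constant hidden in the hypothesis $\eps<\zeta^{\Omega(k^2)}$, and carefully track the power of $\pi_{\min}$ (one factor from $\pi_\odot^{-1}$, squared by the perturbation bound).
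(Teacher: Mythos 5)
Your proposal is correct and follows essentially the same route as the paper: the first bound via the entrywise hypothesis and $\Norm{\cdot}\le\Norm{\cdot}_F$, and the second via the standard perturbation lemma $\Norm{\tilde M^{-1}-M^{-1}}\le 2\Norm{M^{-1}}^2\Norm{\tilde M-M}$ combined with the a priori bound $\Norm{\CBA^{-1}}\le\zeta^{-O(k^2)}\pi_{\min}^{-1}$, which the paper likewise obtains from the factorization $\CBA=\BB\pi_\odot\AA^\tpose$ and the conditioning of $\AA,\BB$ established in the core stability section (and you correctly read the second norm in the statement as $\Norm{\eCBA^{-1}-\CBA^{-1}}$, fixing an evident typo).
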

\begin{proof}
This will be an immediate consequence of Lemma~\ref{lem:operator norms for C}.
\end{proof}

From this, we can bound the increase in error due to each subsequent application of \eqref{bootr1} and \eqref{bootr2}.
\begin{lemma} If all empirical multilinear moments are within $\eps < \zeta^{\Omega(k^2)}$ of their true values, then for all $i$, $\Normi{\eu_i - u_i} < \zeta^{-O(k^2)}\pi_{\min}^{-2}\Normi{\eu_{i-1} - u_{i-1}}$ and $\Norm{\ev_r-v_r}\leq \zeta^{-O(k^2)}\pi_{\min}^{-2}\Normi{\ev_{i-1} - v_{i-1}}$.\end{lemma}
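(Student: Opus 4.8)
The plan is to unfold the two recursive assignments \eqref{bootr1} and \eqref{bootr2}, compare each to its perfect-statistics counterpart \eqref{bootrperfectv} and \eqref{bootrperfectu}, and bound the resulting errors by submultiplicativity of the operator norm together with Corollary~\ref{stab}. Conversions between $\Normi{\cdot}$ on $\R^k$ and the operator norm cost only factors of $k$, which we absorb into the $\zeta^{-O(k^2)}$ terms.

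For the $\ev$-step, subtracting \eqref{bootrperfectv} from \eqref{bootr1} gives
\[
\ev_r - v_r = (\eu_{r-1}-u_{r-1})\,\eCC_{\calB+\{1\},\calA} + u_{r-1}\bigl(\eCC_{\calB+\{1\},\calA} - \CC_{\calB+\{1\},\calA}\bigr),
\]
hence
\[
\Norm{\ev_r - v_r} \le k\,\Normi{\eu_{r-1}-u_{r-1}}\,\Norm{\eCC_{\calB+\{1\},\calA}} + k\,\Normi{u_{r-1}}\,\Norm{\eCC_{\calB+\{1\},\calA} - \CC_{\calB+\{1\},\calA}}.
\]
The rows of $\MM[\calB+\{1\}]$ are the Hadamard products $\mm_1\odot\MM_{B_\ell}$, still with entries in $[0,1]$, so the $\calB+\{1\}$ analogue of Corollary~\ref{stab} follows verbatim from Lemma~\ref{lem:operator norms for C}: $\Norm{\eCC_{\calB+\{1\},\calA} - \CC_{\calB+\{1\},\calA}}\le k\eps$ and $\Norm{\eCC_{\calB+\{1\},\calA}}=\poly(k)$. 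With the uniform bound $\Normi{u_{r-1}}\le\zeta^{-O(k^2)}$ from the last paragraph, this yields $\Norm{\ev_r - v_r}\le \zeta^{-O(k^2)}\bigl(\Normi{\eu_{r-1}-u_{r-1}}+\eps\bigr)$.

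For the $\eu$-step, subtracting \eqref{bootrperfectu} from \eqref{bootr2} gives $\eu_r - u_r = (\ev_r - v_r)(\eCBA)^{-1} + v_r\bigl((\eCBA)^{-1} - (\CBA)^{-1}\bigr)$, so
\[
\Normi{\eu_r - u_r} \le k\,\Norm{\ev_r - v_r}\,\Norm{(\eCBA)^{-1}} + k\,\Norm{v_r}\,\Norm{(\eCBA)^{-1} - (\CBA)^{-1}}.
\]
By Corollary~\ref{stab}, $\Norm{(\eCBA)^{-1}-(\CBA)^{-1}}\le \zeta^{-O(k^2)}\pi_{\min}^{-2}\eps$; and since $(\CBA)^{-1} = (\AA^\top)^{-1}\pi_\odot^{-1}\BB^{-1}$ with $\Norm{\pi_\odot^{-1}}=\pi_{\min}^{-1}$ and $\Norm{(\AA^\top)^{-1}},\Norm{\BB^{-1}}\le\zeta^{-O(k^2)}$ from the conditioning bounds of Section~\ref{sec:condition}, we get $\Norm{(\eCBA)^{-1}}\le\zeta^{-O(k^2)}\pi_{\min}^{-1}$. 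With $\Norm{v_r}=\poly(k)$ (last paragraph), this gives $\Normi{\eu_r - u_r}\le \zeta^{-O(k^2)}\pi_{\min}^{-2}\bigl(\Norm{\ev_r - v_r}+\eps\bigr)$. Composing the two displays produces the claimed recursions; the stray additive $\eps$ is absorbed because the error is nondecreasing along the recursion and already at least $\eps$ at the base case ($\ev_1$ is a vector of empirical multilinear moments, so $\Norm{\ev_1-v_1}\le\sqrt k\,\eps$).

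The step I expect to require the most care is the \emph{$r$-independent} bounds $\Normi{u_{r-1}}\le\zeta^{-O(k^2)}$ and $\Norm{v_r}=\poly(k)$, which is what makes the per-iteration amplification a fixed factor rather than one that grows with $r$. These use that every entry of $\mm_1$, hence of each Hadamard power $\mm_1^{\odot r}$, lies in $[0,1]$, so $\Normi{\mm_1^{\odot r}}\le 1$ uniformly in $r$; combined with $v_r = \mm_1^{\odot r}\pi_\odot\AA^\top$ and $\Norm{\pi_\odot},\Norm{\AA^\top}=\poly(k)$ this bounds $\Norm{v_r}$, and combined with $u_{r-1} = \mm_1^{\odot(r-1)}\BB^{-1}$ and $\Norm{\BB^{-1}}\le\zeta^{-O(k^2)}$ it bounds $\Normi{u_{r-1}}$. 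Everything else is submultiplicativity applied to the four identities above.
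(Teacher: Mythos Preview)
Your argument is correct and is essentially the same as the paper's: the paper's one-line proof points forward to Lemma~\ref{lem:bounds on u and v err}, whose proof uses exactly the telescoping decomposition of $\ev_r-v_r$ and $\eu_r-u_r$ you wrote down, together with the operator-norm bounds of Lemma~\ref{lem:operator norms for C} and the uniform bounds $\Norm{u_i}\le\zeta^{-O(k^2)}$, $\Norm{v_i}\le k$ of Lemma~\ref{lem:bound ui and vi}. Your version is simply a self-contained instantiation of that argument for the $2k-1$ recursion~\eqref{bootr1}--\eqref{bootr2}, with the same ingredients and the same error-absorption at the end.
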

\begin{proof}
This follows from Lemma~\ref{lem:bounds on u and v err}, which is itself a consequence of Lemma~\ref{lem:operator norms for C}.
\end{proof}

Nevertheless, if we compute each $\ev_r$ and $\eu_r$ as described above,
we would need to start with accuracies $\ep < \zeta^{\Omega(k^3)}$ in order to retain accuracy after bootstrapping $O(k)$ times, as is required by the above algorithm. 
To avoid this, we'll need to reduce the number of iterations by using $3k-3$, rather than $2k-1$, $\zeta$-separated rows. 

\subsubsection{Improved error control: constructing higher moments using $3k-3$ \, $\zeta$-separated rows} \label{sec:3k-3}
In order to avoid performing $k$ iterations to compute $\ev_k$ and $\eu_k$, we'll use another set of rows, $\calB' = \Set{B'_1,\dotsc,B'_k} \subseteq 2^{T'}$ of size $k$, where $T'$ is disjoint from $S$ and $T$, and $B'_1 = \emptyset$. Now we'll introduce $\BB' \coloneqq \MM[\BB']$ and $\CBAp \coloneqq \BB' \pi_{\odot} \AA^\tpose$, both of which will be invertible as before. 

Previously, $u_i$ was a linear combination of the rows of $\BB$ such that $u_i\BB = \mm_1^{\odot i}$. We'll introduce a new, but similar, sequence of vectors $u'_i$ where $u'_i\BB' = \mm_1^{\odot i}$ and $u'_i$ is obtained from $v_i$ by $u'_i = v_i\CBAp^{-1}$. In the algorithm, we'll only have access to the approximations $\eu'_i$ and $\eCBAp$ and we'll compute each successive $\eu'_i$ by $\eu'_i = \ev_i\eCBAp^{-1}$. 

The advantage we obtain over the more straightforward process of the previous section, results from our ability to compute $v_{2^i}$ using $u_{2^{i-1}}$ and $u'_{2^{i-1}}$; in that way, we are able to get away with performing only $1+\lg k$ iterations to compute any of $v_1,\dotsc, v_{2k}$.

To describe the computation, we first define the sum of two collections of subsets $\mathcal{U},\mathcal{V}\subseteq 2^{[n]}$ by \[\mathcal{U} + \mathcal{V} \coloneqq \Set{U\cup V: U\in \mathcal{U}, V\in \mathcal{V}}.\] Now let $x$ and $y$ be vectors indexed by the the subsets in $\calB$ and $\calB'$, respectively. Recall that $\calB \subseteq 2^T, \calB' \subseteq 2^{T'}$, where $T\cap T' = \emptyset$. Then $\Card{\calB + \calB'} = k^2$ and each subset in $\calB + \calB'$ can be uniquely written as $B_\ell \cup B'_j$ for $\ell,j \in [k]$. We define the Kronecker product $(x \otimes y) \in \R^{\calB + \calB'}$ to be the vector indexed by subsets in $\calB + \calB'$ given by 
\[ (x \otimes y)_{B_\ell \cup B'_j} \coloneqq x_{B_\ell}y_{B'_{j}} \] for any $\ell,j \in [k]$. To access $v_{2^i}$, we write 
\[ v_{2^i} = \mm_1^{\odot 2^i} \pi_{\odot} \AA^\tpose = ((u_{2^{i-1}}\BB)\odot (u'_{2^{i-1}}\BB)) \pi_{\odot} \AA^\tpose = (u_{2^{i-1}} \otimes u'_{2^{i-1}}) \CC_{\calB + \calB', \calA}, \] expressing the row $v_{2^i}$ as the linear of combination of the $k^2$ rows corresponding to the subsets in $\calB + \calB'$. 

Of course, we'll need to compute $v_\ell$ for $\ell$ not a power of $2$. We can do this using a slight modification of the recursive procedure where for $i=1,\dotsc,\lg k + 1$ and $j=1,\dotsc,2^i$, we compute
\begin{equation}
\ev_{\ell} \coloneqq (\eu_{j} \otimes \eu'_{2^i})\CC_{\calB + \calB', \calA}	
\end{equation}
 where $\ell = j + 2^i$, and we've computed $\eu_j$ in a prior iteration for all $j \leq 2^i$. 

Under this modification, each $\ev_i$ is produced in only $\log k$ iterations each of which involves a matrix multiplication by $\eCBA^{-1}$ or a convolution followed by multiplication by $\CC_{\calB + \calB', \calA}$, each step of which can increase the error in total by $\zeta^{-O(k^2\log k)}\pi_{\min}^{-2}$. By starting with empirical moments accurate to within $\zeta^{\Omega(k^2\log k)}\pi_{\min}^{2k}$, we can ensure that the resulting vectors $\ev_i$ and $\eu_i$ are sufficiently close to start solving for $\mm_1$ and $\pi$. 

We provide pseudocode for this ``$3k-3$ rows'' version of the algorithm, see Fig.~\ref{alg}. 

\paragraph{Flagging a failure condition} If the chosen rows $S \cup T \cup T'$ fail to all be $\zeta$-separated, the algorithm might fail. However, we will detect such failure. The conditions that we actually need so that the algorithm should work, are these: (a) $\eCBA$ and
$\eCC_{\calB',\calA}$ should have a large least singular value. (It does not actually matter whether all rows we use are $\zeta$-separated, that was merely a sufficient condition for this well-conditioning.) We compute this singular value explicitly and simply dismiss the triple $S,T,T'$ if this condition fails. (b) The first row of $\calB$, namely $\{1\}$ in the numbering used in the pseudocode for Algorithm~\ref{alg}, should be $\zeta$-separated. If condition (a) holds but this condition fails, we will detect the failure in line~\ref{hankel-line} of the algorithm, because the Hankel matrix will have insufficient eigenvalue gap (see Cor.~12 of~\cite{GMRS20}). 

\subsection{Solving the power distribution problem}
Once we've computed $\ev_1,\dotsc, \ev_{2k}$, we have access to all of the moments 
of the $k$-spike distribution corresponding to observable $X_1$. (The $i$th moment is the first entry of the corresponding vector $\ev_i$). Recall, these are the moments of a  mixture of $k$ Bernoulli random variables, where the $r$'th moment corresponds to drawing a mixture component $j$ with probability $\pi_j$, then setting the Bernoulli random variable to $1$ with probability $\mm_{1j}^r$. 
The problem of recovering the parameters (i.e., the vectors $\mm_1$ and $\pi$) from approximate moments of this form has been extensively studied, and many algorithms have been provided. We use the algorithm $\LearnPowerDistribution$ from \cite{GMRS20}, which on inputs accurate to within $\eps$, outputs parameters $\tilde{\mm}_1$ and $\tilde{\pi}$ to within accuracy $\frac{1}{\pi_{\min}} \zeta^{-O(k)} \eps$
 while running in time (arithmetic operations) $k^{2+o(1)}$. 

\subsection{Recovering the remaining parameters}
Once we have estimates for $\mm_1$ and $\pi$, we can simply solve for $\AA$, $\BB$, and $\BB'$ using the fact that $\AA^\tpose = \pi_{\odot}^{-1}(\Vandermonde(\mm_1))^{-1}V$ where 
\[ \tilde{V} = (v_0;\dotso ; v_{k-1}) \] is the matrix with rows $v_i$ and $\Vandermonde(\mm_1)$ is the Vandermonde matrix with rows $\mm_1^{\odot i}$ for $i=0,\dotsc, k-1$. Note that here we finally do require that $\mm_1$ be $\zeta$-separated. We can thus solve for $\AA^\tpose$. 

To solve for $\BB$ we use $\BB = \CBA (\AA^\tpose)^{-1}\pi_{\odot}^{-1}$. Likewise for $\BB'$
we use $\BB' = \CBAp (\AA^\tpose)^{-1}\pi_{\odot}^{-1}$.

Now for any row $i$ not already computed, we need only pick any other set $\mathcal{S}=\Set{S_1,\dotsc,S_k}$ of $k$ linearly independent rows supported on a set not containing $i$, and we can solve for $\mm_i$ by writing \[\mm_i = (\E[X_iX_{S_1}], \dotsc, \E[X_iX_{S_k}]){\MM[\mathcal{S}]^\tpose}^{-1}\pi_{\odot}^{-1}\] In particular, by setting $\mathcal{S} = \calA$ we can solve for all rows in $[n]\setminus S$ and by setting $\mathcal{S} = \calB$ we can solve for all rows in $[n]\setminus T$. Together, this suffices to solve for all rows.

\suppress{\subsection{Controlling error}
The algorithm requires inverting $\tilde{C}_{\mathcal{B'A}}$ and $\tilde{C}_{\mathcal{BA}}$, so the error will depend on these matrices being well conditioned. We will show in Section~\ref{sec:condition} that a well conditioned $\C_{\mathcal{B'A}}$ and $C_{\mathcal{BA}}$ can be found under the assumption of enough sources with well separated observables. We do this by first showing $\MM[2^S]$ is well conditioned and then showing that this implies $\AA$ is also well conditioned. The same arguments exist for $\BB$ and $\BB'$. When this is combined with lower bounded mixture weights $\pi$, we get well conditioned $C_{\mathcal{B'A}}$ and $C_{\mathcal{BA}}$, and thus well conditioned empirical $\tilde{C}_{\mathcal{B'A}}$ and $\tilde{C}_{\mathcal{BA}}$ provided the input statistics are within the required accuracy.}

\subsection{Runtime}
The algorithm contains three main parts:
\begin{enumerate}
\item Find disjoint $S, T, T' \subset [n]$ and $\mathcal{A} \subset 2^S$, $\mathcal{B} \subset 2^T$, and $\mathcal{B} ' \subset 2^{T'}$, which is complexity $n^{O(k)} 2^{O(k^2)}$. First we require $n^{O(k)}$ iterations to check all possible disjoint $S, T, T'$. Then, $2^{O(k^2)}$ operations are required in each iteration to check all size $k$ subsets of the $2^{k-1}$ rows of $\MM[2^S], \MM[2^T]$ and $\MM[2^{T'}]$.
\item Nested loops to compute higher order moments $\tilde{v}$, and the corresponding $\tilde{u}$. This step takes time $O(\text{poly}(k))$.

\item Applying the power distribution result. This can be done in time $O(k^{2 + o(1)} + k (\log^2 k)\log \log(\eps^{-1}))$ (see Corollary \ref{cor:applypowerdistribution}).
\end{enumerate}
This gives runtime complexity of $n^{O(k)} 2^{O(k^2)}  + O(k^{2 + o(1)} + k (\log^2k) \log\log(\eps^{-1}))$.
If all sources are $\zeta$-separated, we do not need to iterate over choices of $S, T, T'$, so the runtime improves to $2^{O(k^2)}  + O(k^{2 + o(1)} + k \log^2(k) \log\log (\eps^{-1}))$.

\begin{algorithm}[t]
\caption{Identifies a mixture of product distributions given $3k-3$ \, $\zeta$-separated observable bits}\label{alg}	
	\begin{algorithmic}[1]
		\State Let 
		 $\calB\subseteq 2^{\Set{1,\dotsc, k-1}}, \calB' \subseteq 2^{\Set{k,\dotsc, 2k-2}},\calA \subseteq 2^{\Set{2k-1,\dotsc,3k-3}}$,
		 with $\Card{\calA} = \Card{\calB} = \Card{\calB'} = k$ maximizing 
		 $\min\{ \sigma_k(\eCC_{\calB\calA}), \sigma_k(\eCC_{\calB'\calA}) \}$. If this min is below $\pi_{\min}\zeta^{O(k^2)}$, terminate.
		 Denote $\calB = \Set{B_1,\dotsc, B_k}$, $\calB' = \Set{B'_1,\dotsc,B'_k}$, and $\calA = \Set{A_1,\dotsc, A_k}$. Without loss of generality $B_1=\{1\}$. 
		\State $\ev_0 \gets (\emom(A_1);\dotso; \emom(A_k))$. 
		\State $\ev_1 \gets (\emom(A_1 \cup \Set{1}); \dotso; \emom(A_k \cup \Set{1}))$.
		\State $\eu_1 \gets \ev_1 (\eCBA)^{-1}$. 
		\State $\eu'_1 \gets \ev_1 (\eCC_{\calB'\calA})^{-1}$.
		\For{$i=2,\dotsc,\log k+1$}
			\For{$j=1,\dotsc,2^{i-1}$}
				\State $\ev_{2^{i-1}+j}\gets (\eu_j \otimes\eu'_{2^{i-1}}) \eCC_{\calB + \calB', \calA}$
				\State $\eu_{2^{i-1}+j} \gets \ev_{2^{i-1}+j}(\eCC_{\calB\calA})^{-1}$.
			\EndFor
			\State $\eu'_{2^i} \gets \ev_{2^i}(\eCC_{\calB'\calA})^{-1}$.
		\EndFor

		\State Let $H_{k+1}$ be the $(k+1)\times(k+1)$ Hankel matrix with entries given by 
			$[H_{k+1}]_{i,j=0}^k = (\ev_{i+j})_1$. If the second-smallest eigenvalue of $H_{k+1}$ is below $\frac{\pi_{\min}}{2} (\zeta/16)^{2k-2}$, terminate.
			\label{hankel-line} 
		\State $\tilde{\mm}_1, \tilde{\pi} \gets \LearnPowerDistribution(H_{k+1})$. 
		\State $\tilde{V} \gets (\ev_0;\dotso;\ev_{k-1})$. 
		\State $\Vandermonde(\tilde{\mm}_1) \gets (\tilde{\mm}_1^{\odot 0};\dotso;\tilde{\mm}_1^{\odot (k-1)})$.
		\State $\tilde{\AA}^{\tpose} \gets \tilde{\pi}_{\odot}^{-1}(\Vandermonde(\tilde{\mm}_1))^{-1}\tilde{V}$.
		\State $\tilde{\BB} \gets \eCBA(\tilde{\AA}^\tpose)^{-1}\pi_{\odot}^{-1}$.
		\State $\tilde{\BB'} \gets \eCBAp(\tilde{\AA}^\tpose)^{-1}\pi_{\odot}^{-1}$. 
		\State For every $i\in [n]\setminus [k]$, $\tilde{\mm}_i \gets \Paren{\emom(A_1\cup \Set{i}), \dotsc, \emom(A_k\cup \Set{i})}^{\tpose} {\Paren{\tilde{\AA}^{\tpose}}}^{-1} \tilde{\pi}_{\odot}^{-1}$. 
		\State For every $i \in \Set{2,\dotsc,k}$, $\tilde{\mm_i} \gets \Paren{\emom(B_1 \cup \Set{i}), \dotsc, \emom(B_k\cup \Set{i})}^{\tpose} {\Paren{\tilde{\BB}^{\tpose}}}^{-1} \tilde{\pi}_{\odot}^{-1}$.
\end{algorithmic}
\end{algorithm}

\section{Analyzing the Algorithm}
As we have seen the algorithm consists of bootstrapping steps which, as indicated in Eqns.~\eqref{bootr1},~\ref{bootr2}, lift us ``forward'' from $\eu_{r-1}$ to $\ev_r$ and then ``back'' to $\eu_r$. We must now control the loss in accuracy of the statistics, in each of these steps. It turns out that the first of these is easier and less expensive in accuracy; while the second, in which we ``invert'' from approximate statistics to obtain a linear combination of sources, is harder and also more expensive. In this section we show how to control these steps. We rely for this control on a condition number bound (which is not in itself algorithmic and is due entirely to the $\zeta$-separation), which will be given in Section~\ref{sec:condition}.
Throughout the analysis, we'll assume that every multi-linear moment we use is known with additive error bounded by 
\[  \eps \coloneqq \zeta^{C_1k^2\log k}\pi_{\min}^{C_2\log k} \] for constants $C_1,C_2$. Choosing $C_1 \coloneqq 60, C_2 \coloneqq 8$ is sufficient to give us final error $\zeta^{\Omega(k^2\log k)}$. 
\subsection{Bounding $\Norm{\eu_{j}-u_{j}}$ for $j \leq 2k$} 
\begin{definition} We define
		$\beta \coloneqq \frac{(\zeta/2)^{k-1}}{3k^3}$. We'll frequently use the bound $\beta \geq \zeta^{3k}$.  
\end{definition}
The following Lemma is a consequence of Theorem~\ref{core-stab} to be proven in the next section:
\begin{lemma}
When the input mixture contains $3k-3$ $\zeta$-separated rows can find disjoint sets $S, T, T' \subseteq [n]$ of size $k-1$ each and subsets $\calA \subseteq 2^S,\calB \subseteq 2^T, \calB' \subseteq 2^{T'}$ with $\Card{\calA},\Card{\calB} = k$ such that the matrices $\AA \coloneqq \MM[\calA]$, $\BB \coloneqq \MM[\calB]$, and $\BB' \coloneqq \MM[\calB']$ satisfy 
\begin{enumerate}
\item The first row of $\AA$, $\BB$, and $\BB'$ is the all-ones vector, $\One$. 
\item $\sigma_k(\AA),\sigma_k(\BB),\sigma_k(\BB') \geq \beta^k2^{-3k/2}k^{-3/2}$, and 
\item $\sigma_{\max}(\MM[\calA]), \sigma_{\max}(\MM[\calB]), \sigma_{\max}(\MM[\calB]) \leq k$.
\end{enumerate}
Moreover, the first row of each of these matrices is $\One$. 
Finally, the derived matrices $\CBA = \BB \pi_{\odot} \AA^\tpose$ and $\CBAp = \BB' \pi_{\odot} \AA^\tpose$ satisfy 
\begin{enumerate}
\item $\sigma_{\max}(\CBA), \sigma_{\max}(\CBAp) \leq k^2$, and 
\item $\sigma_{k}(\CBA), \sigma_{k}(\CBAp) \geq \beta^{2k}2^{-3k}k^{-3}\pi_{\min}$.
\end{enumerate}
 \label{lem:M subsets singular values}
\end{lemma}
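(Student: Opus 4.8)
\emph{Proof plan.} The plan is to derive the statement as a routine corollary of the ``core'' condition-number bound of Theorem~\ref{core-stab}, combined with elementary facts about singular values of products of $k\times k$ matrices. Since the hypothesis only supplies $3k-3$ $\zeta$-separated observable bits, the first move is to fix an arbitrary partition of (any $3k-3$ of) them into pairwise disjoint index sets $S,T,T'$ of size $k-1$ each; it then suffices to exhibit good subsets $\calA\subseteq 2^S$, $\calB\subseteq 2^T$, $\calB'\subseteq 2^{T'}$ for this one partition (the algorithm subsequently locates these, or ones at least as good, by exhaustive search). Next, apply Theorem~\ref{core-stab} separately to each of the three families $\{\mm_i\}_{i\in S}$, $\{\mm_i\}_{i\in T}$, $\{\mm_i\}_{i\in T'}$ of $k-1$ $\zeta$-separated rows. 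This produces $\calA,\calB,\calB'$, each of size $k$, each containing $\emptyset$, with $\sigma_k(\MM[\calA]),\sigma_k(\MM[\calB]),\sigma_k(\MM[\calB'])\ge\beta^k2^{-3k/2}k^{-3/2}$. Re-indexing each so that $A_1=B_1=B'_1=\emptyset$ makes the first row of $\AA=\MM[\calA]$, $\BB=\MM[\calB]$, $\BB'=\MM[\calB']$ equal to $\MM_\emptyset=\One$, giving item~1. (If Theorem~\ref{core-stab} is phrased as a bound on $\sigma_k$ of the full $2^{k-1}\times k$ matrix $\MM[2^S]$ rather than on a $k\times k$ submatrix, one passes to the $k$-row submatrix of maximal $|\det|$; by Cauchy--Binet together with the trivial upper bounds on $\sigma_{\max}(\MM[2^S])$ and on $\binom{2^{k-1}}{k}$, its least singular value obeys a bound of the same shape.)

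Item~3 is immediate: each entry of $\MM[\calA]$ is a product of numbers $\mm_{ij}\in[0,1]$ and so lies in $[0,1]$, whence $\sigma_{\max}(\MM[\calA])\le\Norm{\MM[\calA]}_F\le k$, and similarly for $\MM[\calB]$ and $\MM[\calB']$. To transfer the bounds to $\CBA=\BB\pi_\odot\AA^\tpose$ and $\CBAp=\BB'\pi_\odot\AA^\tpose$, note that $\pi_\odot=\diag(\pi)$ has $\sigma_{\max}=\max_j\pi_j\le 1$ and $\sigma_{\min}=\pi_{\min}$, and that for $k\times k$ matrices $\sigma_{\max}(XYZ)\le\sigma_{\max}(X)\sigma_{\max}(Y)\sigma_{\max}(Z)$ (submultiplicativity of the operator norm) while $\sigma_{\min}(XYZ)\ge\sigma_{\min}(X)\sigma_{\min}(Y)\sigma_{\min}(Z)$. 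Therefore $\sigma_{\max}(\CBA)\le k\cdot 1\cdot k=k^2$ and, using $\sigma_k(\AA^\tpose)=\sigma_k(\AA)$, $\sigma_k(\CBA)\ge\sigma_k(\BB)\,\pi_{\min}\,\sigma_k(\AA)\ge\bigl(\beta^k2^{-3k/2}k^{-3/2}\bigr)^2\pi_{\min}=\beta^{2k}2^{-3k}k^{-3}\pi_{\min}$; the estimates for $\CBAp$ are obtained identically.

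The only genuine difficulty sits entirely inside Theorem~\ref{core-stab}, deferred to the next section: from $k-1$ $\zeta$-separated rows one must select $k$ squarefree Hadamard-product monomials so that the corresponding evaluation matrix on the $k$ ``columns'' $(\mm_{ij})_{i\in S}$ is invertible with a quantitatively controlled smallest singular value. The natural approach there is a greedy/inductive extension of a linearly independent family, arguing that $\zeta$-separation of each individual row always leaves some fresh monomial that increases the rank while shrinking $\sigma_{\min}$ by at most a $\poly(\zeta,1/k)$ factor at that step (e.g., for $k=3$ one checks that $\mm_{s_2}$ and $\mm_{s_1}\odot\mm_{s_2}$ cannot both lie in the span of $\One$ and $\mm_{s_1}$ unless $\mm_{s_2}$ is constant, contradicting separation). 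Within the present lemma, the only points needing care are that the explicit constants $2^{-3k/2}k^{-3/2}$ survive the two multiplications feeding the $\CBA$-bound and, if the fallback submatrix extraction is used, that its loss is also absorbed --- both routine once Theorem~\ref{core-stab} is in hand.
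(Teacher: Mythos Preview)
Your proposal is correct and follows exactly the paper's approach: the paper's proof is the single sentence ``This follows immediately from Theorem~\ref{core-stab}, the definition $\CBA = \MM[\calB]\pi_{\odot}\MM[\calA]^{\tpose}$, and the min-max characterization of the first and last singular values,'' and you have simply (and correctly) unpacked that sentence. Your parenthetical about extracting a $k\times k$ submatrix via Cauchy--Binet is unnecessary here, since Theorem~\ref{core-stab} is already stated for the $k\times k$ submatrix $\MM[J]$.
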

\begin{proof} This follows immediately from Theorem~\ref{core-stab}, the definition  $\CBA = \MM[\calB]\pi_{\odot}\MM[\calA]^{\tpose}$, and the min-max characterization of the first and last singular values. 	
\end{proof}

\begin{corollary}
$\Norm{(\CBA)^{-1}} \leq \zeta^{-10k^2}\pi_{\min}^{-1}$. \label{cor:CBA inv norm}
\end{corollary}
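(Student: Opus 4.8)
The plan is to read the bound off directly from the singular-value estimates of Lemma~\ref{lem:M subsets singular values}. Since $\CBA$ is a $k\times k$ matrix, $\Norm{(\CBA)^{-1}} = 1/\sigma_k(\CBA)$, and Lemma~\ref{lem:M subsets singular values} supplies $\sigma_k(\CBA) \ge \beta^{2k}2^{-3k}k^{-3}\pi_{\min}$. Hence $\Norm{(\CBA)^{-1}} \le \beta^{-2k}2^{3k}k^{3}\pi_{\min}^{-1}$, and everything reduces to checking that $\beta^{-2k}2^{3k}k^{3} \le \zeta^{-10k^2}$.

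For that I would invoke the recorded bound $\beta \ge \zeta^{3k}$, which gives $\beta^{-2k} \le \zeta^{-6k^2}$, so it suffices to show the cruder inequality $2^{3k}k^{3} \le \zeta^{-4k^2}$. This is where the standing constraint $\zeta \le \tfrac{1}{k-1}$ (equivalently $\zeta^{-1}\ge k-1$) enters: it yields $\zeta^{-4k^2} \ge (k-1)^{4k^2}$, which dominates $2^{3k}k^{3}$ for every $k\ge 2$ (and the statement is trivial for $k=1$). Multiplying the two estimates together gives $\Norm{(\CBA)^{-1}} \le \zeta^{-6k^2}\cdot\zeta^{-4k^2}\cdot\pi_{\min}^{-1} = \zeta^{-10k^2}\pi_{\min}^{-1}$, as claimed.

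The only (very minor) obstacle is the bookkeeping: one has to be sure that the dimension-dependent slack $2^{3k}k^3$ coming out of Lemma~\ref{lem:M subsets singular values}, together with the polynomial-in-$k$ and power-of-$2$ factors hidden in $\beta$, really does get swallowed by a single clean power of $\zeta^{-1}$. Because $\zeta$ is small ($\zeta\le 1/(k-1)$), each such factor is at most a bounded power of $\zeta^{-1}$ up to lower-order terms, so the wasteful exponent $10k^2$ leaves ample room and no structural information about $\CBA$ beyond Lemma~\ref{lem:M subsets singular values} is needed.
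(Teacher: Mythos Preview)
Your proof is correct and follows exactly the paper's approach: both invoke Lemma~\ref{lem:M subsets singular values} to get $\Norm{(\CBA)^{-1}} \le \beta^{-2k}2^{3k}k^{3}\pi_{\min}^{-1}$, substitute $\beta \ge \zeta^{3k}$, and then absorb the leftover $2^{3k}k^3$ factor into $\zeta^{-4k^2}$. In fact you spell out the absorption step (via $\zeta \le 1/(k-1)$) more carefully than the paper does.
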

\begin{proof}
$\Norm{(\CBA)^{-1}} \leq (\zeta^{-3k})^{2k} 2^{3k} k^3 \pi_{\min}^{-1} \leq \zeta^{-10k^2}\pi_{\min}^{-1}$
\end{proof}

\begin{corollary}
$\Norm{\AA^{-1}}, \Norm{\BB^{-1}}, \Norm{(\BB')^{-1}} \leq \zeta^{-6k^2}$. \label{cor:bound ABB' inv}
\end{corollary}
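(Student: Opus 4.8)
\textbf{Proof proposal for Corollary~\ref{cor:bound ABB' inv}.}
The plan is to reduce the claim to the lower bound on least singular values already supplied by Lemma~\ref{lem:M subsets singular values}, exactly as Corollary~\ref{cor:CBA inv norm} does for $\CBA$. Recall that for an invertible $k\times k$ matrix $M$ one has $\Norm{M^{-1}} = 1/\sigma_k(M)$, since the operator norm of $M^{-1}$ is its largest singular value, which is the reciprocal of the smallest singular value of $M$. Each of $\AA = \MM[\calA]$, $\BB = \MM[\calB]$, $\BB' = \MM[\calB']$ is a $k\times k$ matrix (as $\Card{\calA}=\Card{\calB}=\Card{\calB'}=k$ and $\MM$ has $k$ columns), and Lemma~\ref{lem:M subsets singular values} guarantees each is invertible, so this identity applies to all three.

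Concretely, I would invoke item~2 of Lemma~\ref{lem:M subsets singular values}, namely $\sigma_k(\AA),\sigma_k(\BB),\sigma_k(\BB') \geq \beta^k 2^{-3k/2} k^{-3/2}$, and take reciprocals to get
\[
\Norm{\AA^{-1}},\ \Norm{\BB^{-1}},\ \Norm{(\BB')^{-1}} \;\leq\; \beta^{-k}\, 2^{3k/2}\, k^{3/2}.
\]
Then I would substitute the standing bound $\beta \geq \zeta^{3k}$ from the definition of $\beta$, which gives $\beta^{-k} \leq \zeta^{-3k^2}$, so that the right-hand side is at most $\zeta^{-3k^2}\, 2^{3k/2}\, k^{3/2}$.

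The only remaining point is to absorb the factor $2^{3k/2} k^{3/2}$ into the exponent so as to land on $\zeta^{-6k^2}$. Since $\zeta \leq \tfrac{1}{k-1}$ we have $\zeta^{-1} \geq k-1$, hence $\log_2(\zeta^{-1}) \geq \log_2(k-1)$; comparing $\log_2(2^{3k/2}k^{3/2}) = \tfrac{3k}{2} + \tfrac{3}{2}\log_2 k = O(k\log k)$ against $3k^2\log_2(\zeta^{-1}) \geq 3k^2\log_2(k-1)$ shows $2^{3k/2}k^{3/2} \leq \zeta^{-3k^2}$ for all $k$ in the relevant range (the small cases $k\le 2$ being degenerate and handled directly). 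Multiplying the two estimates yields $\Norm{\AA^{-1}},\Norm{\BB^{-1}},\Norm{(\BB')^{-1}} \leq \zeta^{-3k^2}\cdot\zeta^{-3k^2} = \zeta^{-6k^2}$. I do not anticipate any real obstacle here: the argument is a one-line reciprocal bound followed by a routine absorption of lower-order $k$-factors into the $\zeta$-exponent, entirely parallel to the proof of Corollary~\ref{cor:CBA inv norm}.
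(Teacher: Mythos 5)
Your proposal is correct and is exactly the argument the paper intends: the corollary is stated without proof, but it follows by taking reciprocals of the singular-value lower bound $\sigma_k(\AA),\sigma_k(\BB),\sigma_k(\BB')\geq \beta^k 2^{-3k/2}k^{-3/2}$ from Lemma~\ref{lem:M subsets singular values} and absorbing the $2^{3k/2}k^{3/2}$ factor via $\beta\geq\zeta^{3k}$, precisely as in the paper's proof of Corollary~\ref{cor:CBA inv norm}. No issues.
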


\begin{lemma}
$\Norm{u_i} \leq \zeta^{-6k^2}$, and $\Norm{v_i} \leq \zeta^{-1}$.\label{lem:bound ui and vi}
\end{lemma}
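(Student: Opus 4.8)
The plan is to bound each quantity directly from its definition, using the singular-value and operator-norm estimates already established. Recall from the algorithm's setup that $u_i \coloneqq \mm_1^{\odot i}\BB^{-1}$ and $v_i \coloneqq \mm_1^{\odot i}\pi_\odot\AA^\tpose$. The key elementary observation is that every entry of $\mm_1$ lies in $[0,1]$ (it is a conditional probability), so each entry of the Hadamard power $\mm_1^{\odot i}$ also lies in $[0,1]$, and hence $\Norm{\mm_1^{\odot i}} \leq \sqrt{k}$ (in fact $\Normi{\mm_1^{\odot i}}\le 1$). This uniform bound on $\mm_1^{\odot i}$, independent of $i$, is what makes both estimates hold for all $i$ simultaneously.

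For $\Norm{u_i}$: write $\Norm{u_i} = \Norm{\mm_1^{\odot i}\BB^{-1}} \leq \Norm{\mm_1^{\odot i}}\cdot\Norm{\BB^{-1}} \leq \sqrt{k}\cdot \zeta^{-6k^2}$ by Corollary~\ref{cor:bound ABB' inv}. The factor $\sqrt{k}$ is absorbed into the exponent (e.g.\ $\sqrt k \le \zeta^{-1} \le \zeta^{-k^2}$ for $\zeta$ small, or more carefully one notes the $6k^2$ in Corollary~\ref{cor:bound ABB' inv} is already a loose bound with room to spare), giving $\Norm{u_i}\le\zeta^{-6k^2}$ as claimed — one may alternatively state the cleaner bound $\Norm{u_i} \le \sqrt k\,\zeta^{-6k^2}$, which is what is actually needed downstream. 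For $\Norm{v_i}$: write $\Norm{v_i} = \Norm{\mm_1^{\odot i}\pi_\odot\AA^\tpose} \leq \Norm{\mm_1^{\odot i}}\cdot\Norm{\pi_\odot}\cdot\Norm{\AA^\tpose}$. Here $\Norm{\pi_\odot} = \max_j \pi_j \leq 1$ since $\pi$ is a probability vector, and $\Norm{\AA^\tpose} = \sigma_{\max}(\AA) \leq k$ by Lemma~\ref{lem:M subsets singular values}(3). So $\Norm{v_i} \leq \sqrt{k}\cdot 1\cdot k = k^{3/2}$, which is comfortably below $\zeta^{-1}$ since $\zeta \leq \tfrac{1}{k-1}$ (as noted in the paper's footnote), giving $k^{3/2} \le (k-1)^{-2} \le \zeta^{-2}$ for $k\ge 3$ — or one simply observes $k^{3/2} \le \zeta^{-1}$ directly for the relevant regime. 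Either way the stated bound $\Norm{v_i}\le\zeta^{-1}$ follows (and if one wants to be scrupulous about small $k$, replace it by the harmless $\Norm{v_i}\le k^{3/2}$).

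The main ``obstacle'' here is really just bookkeeping: confirming that the crude polynomial-in-$k$ factors ($\sqrt k$, $k$, $k^{3/2}$) can all be hidden inside the $\zeta$-exponents using $\zeta \le 1/(k-1)$, so that the clean forms $\zeta^{-6k^2}$ and $\zeta^{-1}$ are literally correct rather than merely correct up to a $\poly(k)$ factor. There is no structural difficulty — both bounds are immediate consequences of (i) the trivial fact that Hadamard powers of a sub-stochastic vector stay in $[0,1]$, and (ii) the conditioning estimates of Lemma~\ref{lem:M subsets singular values} and Corollary~\ref{cor:bound ABB' inv}. I would present it in three sentences: state the $[0,1]$ entrywise bound on $\mm_1^{\odot i}$, then chain the sub-multiplicativity of the operator norm for each of $u_i$ and $v_i$.
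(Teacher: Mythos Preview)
Your argument for $\Norm{u_i}$ is exactly the paper's: bound $\Norm{\mm_1^{\odot i}}\le\sqrt k$ and multiply by $\Norm{\BB^{-1}}$, absorbing the stray $\poly(k)$ factor into the gap between $\beta^{-k}2^{3k/2}k^{3/2}$ and $\zeta^{-6k^2}$.

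For $\Norm{v_i}$ the paper takes a slightly cleaner route than your sub-multiplicativity chain. Instead of bounding $\Norm{\mm_1^{\odot i}}\cdot\Norm{\pi_\odot}\cdot\Norm{\AA^\tpose}\le k^{3/2}$, it observes directly that each entry $(v_i)_j=\sum_\ell \pi_\ell\,\mm_{1\ell}^{\,i}\prod_{s\in A_j}\mm_{s\ell}$ is a convex combination of numbers in $[0,1]$ (equivalently, a moment of a product of Bernoulli variables), hence lies in $[0,1]$; this gives $\Norm{v_i}_2\le\sqrt k$ without ever invoking $\Norm{\AA}$. That sharper bound is what actually fits under $\zeta^{-1}$: with $\zeta\le 1/(k-1)$ one has $\sqrt k\le k-1\le\zeta^{-1}$ for $k\ge 3$, whereas your $k^{3/2}$ does \emph{not} satisfy $k^{3/2}\le\zeta^{-1}$ when $\zeta$ is near its maximum $1/(k-1)$. (Also, your displayed inequality ``$k^{3/2}\le (k-1)^{-2}$'' is written backwards; you presumably meant $(k-1)^2$, which would only yield $\zeta^{-2}$ anyway.) None of this matters downstream---$k^{3/2}$ would be perfectly adequate everywhere the lemma is used---but if you want the statement as written, use the entrywise-moment observation rather than the operator-norm chain.
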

\begin{proof}
We observe that $\Norm{u_i} = \Norm{\mm_1^{\odot i}\BB^{-1}} \leq k\Norm{\BB^{-1}} \leq \beta^{-k}2^{3k/2}k^{3/2}\leq \zeta^{-6k^2}$. On the other hand, $\Norm{v_i} \leq k \leq \zeta^{-1}$, since $v_i$ is a vector of moments of products of Bernoulli random variables.
\end{proof}
\begin{observation}
$\Norm{\CBA},\Norm{\CBBA} \leq k^3$. If all moments are within $\eps$ of their true values, $\Norm{\eCBA}, \Norm{\eCBBA}\leq 2k^3$.
\end{observation}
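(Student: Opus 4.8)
The plan is to bound each of the four operator norms by passing through the Frobenius norm of the constituent $\MM$-submatrices, using only the elementary facts that every entry of $\MM$ lies in $[0,1]$ and that $\Norm{\pi_\odot}=\max_j\pi_j\le 1$ (as $\pi$ is a probability vector).

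First, the exact matrices. Since $\CBA = \MM[\calB]\,\pi_\odot\,\MM[\calA]^\tpose$ and $\MM[\calB],\MM[\calA]$ are $k\times k$ matrices with entries in $[0,1]$, each has Frobenius norm at most $k$, hence operator norm at most $k$; combined with $\Norm{\pi_\odot}\le 1$ and submultiplicativity this gives $\Norm{\CBA}\le k\cdot 1\cdot k = k^2\le k^3$ (this is also immediate from Lemma~\ref{lem:M subsets singular values}). For $\CBBA = \MM[\calB+\calB']\,\pi_\odot\,\MM[\calA]^\tpose$ the only new ingredient is $\Norm{\MM[\calB+\calB']}$: this is a $k^2\times k$ matrix, its rows being the vectors $\MM_{B_\ell}\odot\MM_{B'_j}$ indexed by the $k^2$ sets $B_\ell\cup B'_j$, again with entries in $[0,1]$, so its Frobenius norm is at most $\sqrt{k^2\cdot k}=k^{3/2}$ and hence $\Norm{\MM[\calB+\calB']}\le k^{3/2}$. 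Therefore $\Norm{\CBBA}\le k^{3/2}\cdot 1\cdot k = k^{5/2}\le k^3$. (If one prefers a sharper bookkeeping than the crude Frobenius estimate, observe that $\MM[\calB+\calB']^\tpose\MM[\calB+\calB'] = (\MM[\calB]^\tpose\MM[\calB])\odot(\MM[\calB']^\tpose\MM[\calB'])$ as $k\times k$ matrices, because $\MM_{B_\ell\cup B'_j}=\MM_{B_\ell}\odot\MM_{B'_j}$ for disjoint index sets; then the Schur product theorem together with $(\MM[\calB]^\tpose\MM[\calB])_{ii}\le k$ and $\sigma_{\max}(\MM[\calB'])\le k$ from Lemma~\ref{lem:M subsets singular values} yields $\Norm{\MM[\calB+\calB']}^2\le k\cdot k^2=k^3$, the same bound.)

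For the empirical matrices, write $\eCBA = \CBA + E$, where $E$ has entries $\emom(B\cup A)-\mom(B\cup A)$, each of absolute value at most $\eps$ by hypothesis; thus $\Norm{E}\le\Norm{E}_F\le k\eps$ (this is exactly the first bound of Corollary~\ref{stab}). Since $\eps<\zeta^{\Omega(k^2)}\le 1$, we obtain $\Norm{\eCBA}\le k^2 + k\eps\le k^2+k\le 2k^3$. The identical argument, now with a $k^2\times k$ error matrix of Frobenius norm at most $k^{3/2}\eps$, gives $\Norm{\eCBBA}\le k^{5/2}+k^{3/2}\eps\le k^{5/2}+k^{3/2}\le 2k^3$. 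There is essentially no obstacle: the only mildly non-routine point is the norm of the rectangular matrix $\MM[\calB+\calB']$, and the crude Frobenius bound already suffices for that; everything else is submultiplicativity plus the trivial inequalities $\Norm{\pi_\odot}\le 1$ and $\eps<1$.
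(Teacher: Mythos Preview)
Your proof is correct; the paper itself states this as an Observation without proof, so there is no paper argument to compare against, but your Frobenius-norm-plus-submultiplicativity verification is exactly the intended elementary justification (and even slightly overachieves, getting $k^2$ and $k^{5/2}$ where $k^3$ suffices).
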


\begin{lemma}
If all multilinear moments are within $\eps$ of their true values, then 
\[ \Norm{\eCBA-\CBA}_{2} \leq \zeta^{-1}\eps,\qquad \Norm{\eCC_{\calB + \calB',\calA} - \CC_{\calB + \calB', \calA}} \leq \zeta^{-2}\eps, \]and 
\[ \Norm{\eCBA^{-1}-\CBA^{-1}} \leq \zeta^{-26k^2}\pi_{\min}^{-2}\eps.\] \label{lem:operator norms for C}
\end{lemma}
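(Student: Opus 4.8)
The plan is to prove the three bounds in order, from easiest to hardest, using only the structure of the matrices involved and the singular-value bounds from Lemma~\ref{lem:M subsets singular values} and its corollaries.

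For the first bound, I would note that $\eCBA - \CBA$ is a $k \times k$ matrix each of whose entries is $\emom(B \cup A) - \mom(B \cup A)$ for $B \in \calB$, $A \in \calA$, and these are disjoint (since $B \subseteq T$, $A \subseteq S$, $S \cap T = \emptyset$), so each entry is a single multilinear moment error, bounded by $\eps$ in absolute value. Hence the Frobenius norm is at most $k\eps$, and the operator norm is at most $k\eps \leq \zeta^{-1}\eps$ (using $\zeta \leq \frac{1}{k-1}$, so $k \leq \zeta^{-1} + 1 \leq 2\zeta^{-1}$, or just absorbing the constant). For the second bound, the same reasoning applies to $\eCC_{\calB+\calB',\calA} - \CC_{\calB+\calB',\calA}$, except now this is a $k^2 \times k$ matrix; each entry is still a single multilinear moment error (the subsets $B_\ell \cup B'_j \cup A$ are unions of pieces of the pairwise-disjoint sets $T$, $T'$, $S$), so the Frobenius norm is at most $k^{3/2}\eps$, and the operator norm is bounded by the same, which is $\leq \zeta^{-2}\eps$ for $k$ not too small (or by adjusting the constant in the $O$).

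The third bound is the main obstacle, since it involves the error in \emph{inverting}, which is where condition numbers enter. I would use the standard perturbation identity
\[ \eCBA^{-1} - \CBA^{-1} = -\CBA^{-1}(\eCBA - \CBA)\eCBA^{-1}, \]
so that $\Norm{\eCBA^{-1} - \CBA^{-1}} \leq \Norm{\CBA^{-1}}\cdot\Norm{\eCBA - \CBA}\cdot\Norm{\eCBA^{-1}}$. The first factor is at most $\zeta^{-10k^2}\pi_{\min}^{-1}$ by Corollary~\ref{cor:CBA inv norm}, and the middle factor is at most $\zeta^{-1}\eps$ by the first bound just proved. For the last factor, $\Norm{\eCBA^{-1}}$, I would argue that since $\Norm{\eCBA - \CBA} \leq \zeta^{-1}\eps$ and $\eps$ is small (here is where $\eps < \zeta^{\Omega(k^2)}$, indeed $\eps = \zeta^{C_1 k^2 \log k}\pi_{\min}^{C_2 \log k}$, is used), the smallest singular value of $\eCBA$ is at least $\sigma_k(\CBA) - \zeta^{-1}\eps \geq \frac{1}{2}\sigma_k(\CBA) \geq \frac{1}{2}\beta^{2k}2^{-3k}k^{-3}\pi_{\min}$ by Weyl's inequality, so $\Norm{\eCBA^{-1}} \leq 2\zeta^{-10k^2}\pi_{\min}^{-1}$ as well (up to the constant). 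Multiplying the three factors gives a bound of the form $\zeta^{-20k^2 - 1}\pi_{\min}^{-2}\eps$ times an absolute constant, which is absorbed into $\zeta^{-26k^2}\pi_{\min}^{-2}\eps$ (the extra slack in the exponent $26$ versus $20$ comfortably covers the constants and the conversion $k \leq 2\zeta^{-1}$).

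The only subtlety worth being careful about is checking that the hypothesis $\eps < \zeta^{\Omega(k^2)}$ is strong enough to guarantee $\zeta^{-1}\eps \leq \frac{1}{2}\sigma_k(\CBA)$; since $\sigma_k(\CBA) \geq \beta^{2k}2^{-3k}k^{-3}\pi_{\min} \geq \zeta^{O(k^2)}\pi_{\min}$ and $\eps \leq \zeta^{C_1 k^2 \log k}\pi_{\min}^{C_2\log k}$ with $C_1$ large, this holds with room to spare. I expect the whole proof to be under half a page, with essentially all the work being the bookkeeping on exponents in the third bound.
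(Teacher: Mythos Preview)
Your proposal is correct and follows essentially the same route as the paper: the first two bounds via $\Norm{\cdot}_2 \leq \Norm{\cdot}_F$ on a matrix with $\eps$-bounded entries, and the third via the perturbation identity for inverses combined with the singular-value lower bound on $\CBA$. The paper packages your third-bound argument as a standalone lemma (Lemma~\ref{lem:op norm for inv and perturb inv}), which encapsulates exactly your Weyl-inequality step and the identity $\tilde{M}^{-1}-M^{-1}=\tilde{M}^{-1}(M-\tilde{M})M^{-1}$, yielding $\Norm{\eCBA^{-1}-\CBA^{-1}}\leq 2\Norm{\CBA^{-1}}^2\Norm{\eCBA-\CBA}$; the resulting exponent bookkeeping is the same as yours.
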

\begin{proof}
The first two inequalities just use $\Norm{\cdot}_{2} \leq \Norm{\cdot}_F$. For the final inequality we use Lemma~\ref{lem:op norm for inv and perturb inv}:
\begin{align*} 
\Norm{\eCBA^{-1}-\CBA^{-1}} \leq 2\Norm{\CBA^{-1}}^2\Norm{\CBA-\eCBA} \leq \beta^{-4k}2^{6k+1}k^7\pi_{\min}^{-2}\eps
\end{align*}

\end{proof}

\begin{lemma} When the assumptions of Lemma~\ref{lem:operator norms for C} are satisfied, we have for any $i\in [2k]$ and $j = \Ceil{\log i}$, 
\[\Norm{\ev_i - v_i},\Norm{\eu_{j} - u_{j}},\Norm{\eu'_{j}-u'_{j}} \leq \zeta^{-42ik^2}\pi_{\min}^{-2i}\eps.\] \label{lem:bounds on u and v err}

\end{lemma}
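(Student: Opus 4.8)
The plan is to prove Lemma~\ref{lem:bounds on u and v err} by induction on $i$, tracking the error through the two operations that make up each bootstrapping step: the ``forward'' map $\ev_\ell = (\eu_j\otimes\eu'_{2^i})\eCC_{\calB+\calB',\calA}$ and the ``backward'' map $\eu_\ell = \ev_\ell\,\eCBA^{-1}$ (and the analogous $\eu'$ update). The base case $i=1$ is handled directly: $\ev_0$ and $\ev_1$ are literal empirical moments, so $\Norm{\ev_1 - v_1}\leq \sqrt{k}\,\eps\leq\zeta^{-1}\eps$, and then $\eu_1 = \ev_1\eCBA^{-1}$, $\eu'_1 = \ev_1\eCC_{\calB'\calA}^{-1}$, so I bound $\Norm{\eu_1-u_1}$ by the triangle inequality $\Norm{\ev_1\eCBA^{-1} - v_1\CBA^{-1}}\leq \Norm{\ev_1-v_1}\Norm{\eCBA^{-1}} + \Norm{v_1}\Norm{\eCBA^{-1}-\CBA^{-1}}$, plugging in Corollary~\ref{cor:CBA inv norm}, Lemma~\ref{lem:bound ui and vi}, and the last bound of Lemma~\ref{lem:operator norms for C}. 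Each such combination costs a factor of roughly $\zeta^{-O(k^2)}\pi_{\min}^{-2}$, which is absorbed into the claimed $\zeta^{-42ik^2}\pi_{\min}^{-2i}$.

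For the inductive step, suppose the bound holds for all indices below $i$ and write $\ell = 2^{m-1}+j = i$ with $m = \Ceil{\log i}$, so that $j\leq 2^{m-1}<i$. The forward step expresses $\ev_\ell$ via the Kronecker product $\eu_j\otimes\eu'_{2^{m-1}}$ against $\eCC_{\calB+\calB',\calA}$. Here I need two sub-estimates: first, that the Kronecker product is stable, i.e.\ $\Norm{\eu_j\otimes\eu'_{2^{m-1}} - u_j\otimes u'_{2^{m-1}}}\leq \Norm{\eu_j - u_j}\Norm{\eu'_{2^{m-1}}} + \Norm{u_j}\Norm{\eu'_{2^{m-1}} - u'_{2^{m-1}}}$, which follows from $\Norm{x\otimes y} = \Norm{x}\Norm{y}$ together with the inductive hypotheses for $\eu_j$ and $\eu'_{2^{m-1}}$ and the norm bound $\Norm{u_i}\leq\zeta^{-6k^2}$ from Lemma~\ref{lem:bound ui and vi}; second, that multiplication by $\eCC_{\calB+\calB',\calA}$ amplifies error by at most $\Norm{\eCC_{\calB+\calB',\calA}}\leq 2k^3$ plus a term $\Norm{u_j\otimes u'_{2^{m-1}}}\cdot\Norm{\eCC_{\calB+\calB',\calA}-\CC_{\calB+\calB',\calA}}$, controlled by the second inequality of Lemma~\ref{lem:operator norms for C}. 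Combining, $\Norm{\ev_\ell - v_\ell}$ is bounded by (inductive error at level $j$ plus inductive error at level $2^{m-1}$) times $\zeta^{-O(k^2)}$; since both $j$ and $2^{m-1}$ are at most $i-1$, this is at most $\zeta^{-42(i-1)k^2}\pi_{\min}^{-2(i-1)}\eps$ times a $\zeta^{-O(k^2)}$ overhead, leaving enough slack (the exponent jumps from $42(i-1)k^2$ to $42ik^2$, i.e.\ a budget of $42k^2$ extra, and $\pi_{\min}^{-2}$ of slack) to cover both the forward and the subsequent backward step. The backward step $\eu_\ell = \ev_\ell\eCBA^{-1}$ is then handled exactly as in the base case: triangle inequality, $\Norm{\eCBA^{-1}}\leq\zeta^{-10k^2}\pi_{\min}^{-1}$ from Corollary~\ref{cor:CBA inv norm}, $\Norm{v_\ell}\leq\zeta^{-1}$ from Lemma~\ref{lem:bound ui and vi}, and $\Norm{\eCBA^{-1}-\CBA^{-1}}\leq\zeta^{-26k^2}\pi_{\min}^{-2}\eps$ from Lemma~\ref{lem:operator norms for C}; the same applies to $\eu'_{2^m}$ using $\eCC_{\calB'\calA}^{-1}$.

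The main obstacle I anticipate is bookkeeping the exponents so that the per-step blow-up of $\zeta^{-O(k^2)}\pi_{\min}^{-2}$ accumulates to exactly $\zeta^{-42ik^2}\pi_{\min}^{-2i}$ rather than something worse. The subtlety is that in the forward step the error at level $\ell$ depends on errors at \emph{two} earlier levels ($j$ and $2^{m-1}$), not one, so a naive recursion could double the error each round and give an exponent growing like $2^m\approx k$ in the worst multiplicative sense; the point is that one takes the \emph{sum} of two errors each bounded by the level-$(i-1)$ bound, i.e.\ a factor of $2$, which is dwarfed by the $\zeta^{-O(k^2)}$ overhead and does not affect the claimed form. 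I would make sure the constant $42$ (and the choices $C_1 = 60$, $C_2 = 8$) are consistent: each iteration contributes at most $\zeta^{-26k^2}$ (from inverting) times $\zeta^{-10k^2}$ times lower-order $\zeta^{-O(k^2)}$ and $2^{O(k)}$, $k^{O(1)}$ factors, all of which are comfortably below $\zeta^{-42k^2}$ for $k$ not too small, so the induction closes; I would also note that $\pi_{\min}^{-2}$ per step over at most $i$ steps gives $\pi_{\min}^{-2i}$, matching. A minor point worth stating carefully is that $\Norm{\eu_i}\leq\Norm{u_i}+\Norm{\eu_i-u_i}\leq 2\zeta^{-6k^2}$ throughout (since $\eps$ is tiny), which is what licenses using $\Norm{\eu'_{2^{m-1}}}\leq 2\zeta^{-6k^2}$ in place of $\Norm{u'_{2^{m-1}}}$ inside the Kronecker-product estimate.
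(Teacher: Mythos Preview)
Your per-step decompositions match the paper's exactly: both use bilinearity of $\otimes$ for the forward step and the perturbation bound of Lemma~\ref{lem:operator norms for C} for the backward step, invoking Corollary~\ref{cor:CBA inv norm} and Lemma~\ref{lem:bound ui and vi} just as you describe.

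The one substantive divergence is the induction variable. You induct on the \emph{moment index} $i\in[2k]$, so the per-step cost $\zeta^{-O(k^2)}\pi_{\min}^{-2}$ compounds up to $2k$ times; your own remark ``$\pi_{\min}^{-2}$ per step over at most $i$ steps gives $\pi_{\min}^{-2i}$'' makes this explicit and yields the literal displayed bound $\zeta^{-42ik^2}\pi_{\min}^{-2i}\eps$ with $i\le 2k$, i.e.\ $\zeta^{-O(k^3)}\pi_{\min}^{-O(k)}$ at the top. The paper instead inducts on the \emph{outer-loop iteration} (recursion depth): in its proof the symbol ``$i$'' is the loop counter, not the moment index, and the inductive step passes from $\eu_{2^{i-1}},\eu'_{2^{i-1}}$ to $\ev_{2^i},\eu_{2^i},\eu'_{2^i}$. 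Since the depth is at most $1+\log k$, the same per-step cost compounds only $O(\log k)$ times, giving $\zeta^{-42(\log k+1)k^2}\pi_{\min}^{-2(\log k+1)}\eps$, which is precisely what the subsequent corollary states and what part~($i$) of the main theorem requires. (The lemma statement as printed conflates the two readings of ``$i$''; your reading is consistent with the displayed exponent, the paper's reading is consistent with its proof and its corollary.)

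This is not cosmetic: the entire purpose of the $3k-3$-row construction in Section~\ref{sec:3k-3} is to cut the number of bootstrapping iterations from $O(k)$ down to $O(\log k)$; inducting on the moment index discards that saving and recovers only the $\zeta^{-O(k^3)}$ accuracy of part~($iii$). To obtain the bound the paper actually uses downstream, recast your induction over the depth $m=\Ceil{\log i}$: both ingredients $\eu_j$ (for any $j\le 2^{m-1}$) and $\eu'_{2^{m-1}}$ were produced at depth $\le m-1$, so the depth-indexed hypothesis applies to each, and one round of your forward/backward estimate advances to depth~$m$.
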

\begin{proof} 
Recall that we initialize the algorithm with 
\[ \ev_1 \gets (\emom(A_1 \cup \Set{1}),\ \dotsc, \emom(A_k \cup \Set{1})),\qquad \eu_1 \gets \ev_1 (\eCBA)^{-1},\qquad \eu'_1 \gets \ev_1 (\eCC_{\calB'\calA})^{-1}. \]
First, we observe that $\Norm{\ev_1 - v_1} \leq \eps$ by assumption. Since $\eu_1,\eu'_1$ are computed in the same manner here as in the loop, we'll bound that error in the induction. Now assume that the claim holds up to $i-1$. Recall that in each iteration of the outer loop we compute
\[\ev_{2^i} \gets (\eu_{2^{i-1}} \otimes \eu'_{2^{i-1}})\eCBBA,\qquad \eu_{2^i} \gets \ev_{2^i}(\eCC_{\calB\calA})^{-1},\qquad \eu'_{2^i} \gets \ev_{2^i}\eCC_{\calB'\calA}.\] We'll first focus on bounding $\Normi{\ev_{2^i} - v_{2^i}}$. To do this we write 
\begin{align*}
\ev_{2^i}-v_{2^i}
&= (\eu_{2^{i-1}} \otimes \eu'_{2^{i-1}})\eCBBA - (u_{2^{i-1}} \otimes u'_{2^{i-1}})\CBBA
\end{align*} and letting $w = \eu_{2^{i-1}}-u_{2^{i-1}}$, $w' = \eu'_{2^{i-1}}-u_{2^{i-1}}$, and $E = \eCBBA - \CBBA$ we can bound the norm of the difference as follows, using the bilinearity of the Kronecker product, Lemma~\ref{lem:bound ui and vi}, and the induction hypothesis: 
\begin{align*}
\Norm{\ev_{2^i}-v_{2^i}}
&= \Norm{(\eu_{2^{i-1}} \otimes \eu'_{2^{i-1}})\eCBBA - (u_{2^{i-1}} \otimes u'_{2^{i-1}})\CBBA}\\
&\leq \Norm{(w\otimes \eu'_{2^{i-1}})\eCBBA} + \Norm{(\eu_{2^{i-1}}\otimes w')\eCBBA} + \Norm{(\eu_{2^{i-1}} \otimes \eu'_{2^{i-1}})E}\\
&\leq 2\zeta^{- 42(i-1)k^2}\pi_{\min}^{-2(i-1)}\zeta^{-6k^2}k^3\eps + \zeta^{-12k^2}\zeta^{-2}\eps\\
&\leq \zeta^{-(42(i-1)+16)k^2}\pi_{\min}^{-2(i-1)}\eps \end{align*}

Now we can bound $\Normi{\eu_{2^i}-u_{2^i}}$ by observing that  
\begin{align*} 
\eu_{2^i} - u_{2^i}
&= \ev_{2^i}\eCBA^{-1} - v_{2^i}\CBA^{-1}.
\end{align*} Let $z = \ev_{2^i} - v_{2^i}$ and let $D = \eCBA^{-1} - \CBA^{-1}$. The above equation becomes 
\begin{align*} 
\eu_{2^i} - u_{2^i}
= (v_{2^i}+z)(\CBA^{-1} + D) - v_{2^i}\CBA^{-1} = v_{2^i}\CBA^{-1} + z\CBA^{-1} + zD 
\end{align*} and after taking norms and using the triangle inequality we obtain 
\begin{align*}
\Norm{\eu_{2^i} - u_{2^i}}
&\leq \Norm{v_{2^i}D} + \Norm{z\CBA^{-1}} + \Norm{zD} 
\end{align*} By Corollary~\ref{cor:CBA inv norm}, Lemma~\ref{lem:bound ui and vi} and the induction hypothesis, we get 
\begin{align*}
\Norm{\eu_{2^i} - u_{2^i}}
&\leq \zeta^{-1}\zeta^{-26k^2}\pi_{\min}^{-2}\eps + \zeta^{-(42(i-1)+16)k^2}\pi_{\min}^{-2(i-1)}\eps \zeta^{-16k^2}\pi_{\min}^{-1} + \zeta^{-(42(i-1)+16)k^2}\pi_{\min}^{-2(i-1)}\zeta^{-26k^2}\pi_{\min}^{-2}\eps\\
&\leq \zeta^{-26k^2-1}\pi_{\min}^{-2}\eps + \zeta^{-(42(i-1)+20)k^2}\pi_{\min}^{-2(i-1)-1}\eps + \zeta^{-(42(i-1)+42)k^2}\pi_{\min}^{-2(i-1)}\pi_{\min}^{-2}\eps\\
&\leq \zeta^{-42i k^2}\pi_{\min}^{-2i} \eps
\end{align*}
For $j$ not a power of $2$, we can do the same analysis, and since the error bound is increasing in $j$, the result will follow.
\end{proof}
\begin{corollary} Algorithm~\ref{alg} will produce vectors $\ev_i$ for $i \leq 2k$ satisfying 
\[\Norm{\ev_i - v_i} \leq \zeta^{-42k^2(\log k+1)}\pi_{\min}^{-2(\log k + 1)}\eps.\]
\end{corollary}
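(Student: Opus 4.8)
The plan is to obtain the bound directly from Lemma~\ref{lem:bounds on u and v err}, the only additional ingredient being a count of how many bootstrapping rounds Algorithm~\ref{alg} spends before it has $\ev_i$ in hand; the whole point is that this count is $O(\log k)$ rather than $\Theta(k)$.

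First I would check that every moment vector $\ev_i$ with $i \le 2k$ is produced within at most $\log k + 1$ iterations of the outer loop. The vectors $\ev_0$ and $\ev_1$ are read directly off the empirical multilinear moments, and the doubling recursion of Section~\ref{sec:3k-3}, in its $t$-th round, computes all $\ev_\ell$ whose index $\ell$ lies in the dyadic block $(2^{t-1}, 2^t]$, each by one Kronecker-product step $\ev_\ell = (\eu_j \otimes \eu'_{\cdot})\,\eCC_{\calB+\calB',\calA}$ followed by one solve $\eu_\ell = \ev_\ell\,\eCBA^{-1}$. Hence $\ev_i$ is obtained after $\lceil \log i\rceil$ rounds, and $\lceil\log i\rceil \le \lceil\log(2k)\rceil \le \log k + 1$ with the base-$2$ convention (the same convention under which the loop bound ``$\log k + 1$'' in the pseudocode is meaningful). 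This $O(\log k)$ round count, versus the $\Theta(k)$ sequential bootstraps of the $2k-1$-row procedure of the previous subsection, is exactly what is bought by the auxiliary collection $\calB'$.

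Next I would feed this into Lemma~\ref{lem:bounds on u and v err}: its inductive proof establishes that after $t$ rounds the error obeys $\Norm{\ev_i - v_i} \le \zeta^{-42 t k^2}\pi_{\min}^{-2t}\eps$, each round contributing a multiplicative factor at most $\zeta^{-42k^2}\pi_{\min}^{-2}$ — this factor coming from one application each of the perturbation estimates of Lemma~\ref{lem:operator norms for C} and Corollary~\ref{cor:CBA inv norm}, weighed against the magnitude bounds $\Norm{u_i}\le\zeta^{-6k^2}$ and $\Norm{v_i}\le\zeta^{-1}$ from Lemma~\ref{lem:bound ui and vi}. Taking $t = \lceil\log i\rceil \le \log k + 1$ and using that the right-hand side is monotone increasing in $t$ (since $\zeta, \pi_{\min} < 1$) gives $\Norm{\ev_i - v_i} \le \zeta^{-42(\log k + 1)k^2}\pi_{\min}^{-2(\log k + 1)}\eps$ for all $i \le 2k$, which is the asserted inequality. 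Indices $i$ that are not powers of $2$ require nothing extra, as Lemma~\ref{lem:bounds on u and v err} already states the bound uniformly over $i \in [2k]$.

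I expect no genuine obstacle: the corollary is a bookkeeping consequence of the two preceding lemmas, which hold all the analytic content. The only place that calls for any care is purely combinatorial — matching the index arithmetic of the pseudocode's nested loops to the round count $\lceil\log i\rceil$ that appears in the exponent, and confirming $\lceil\log(2k)\rceil \le \log k + 1$ under the stated logarithm convention.
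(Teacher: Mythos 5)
Your proposal is correct and matches the paper's (implicit) argument: the corollary is stated without proof precisely because it is the bookkeeping step you describe — combine Lemma~\ref{lem:bounds on u and v err} with the observation that every $\ev_i$, $i\le 2k$, is produced within $\lceil\log i\rceil\le\log k+1$ doubling rounds, each contributing a factor $\zeta^{-42k^2}\pi_{\min}^{-2}$. You also correctly read the lemma's exponent as tracking the round count rather than the moment index (the lemma's statement is ambiguous on this, but its inductive proof is over outer-loop iterations, which is the reading needed for the corollary).
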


\subsection{Applying the power distribution result}
\begin{definition}Given a mixture $\mathcal{M}$ of $k$ Bernoulli random variables with probabilities $m_1,\dotsc,m_k$ and mixing probabilities $\pi_1,\dotsc, \pi_k$, respectively, let $[\mathcal{H}_{k+1}]_{i,j=0}^k = \mu_{i+j}$ be the matrix of moments of the distribution.	
\end{definition}
\begin{theorem}[Theorem~17 from \cite{GMRS20}]
Given a mixture $\mathcal{M} = (m,\pi)$ as above where $m$ is $\zeta$-separated, there is an algorithm, $\LearnPowerDistribution$, that takes a Hankel matrix $[\tildecal{H}_{k+1}]_{i,j=0}^k = \tilde{\mu}_{i+j}$ of approximate moments of $\mathcal{M}$ satisfying $\Norm{\tildecal{H}_{k+1} - \mathcal{H}_{k+1}}_2 \leq \pi_{\min}2^{-\gamma}\zeta^{16k}$ (for some $\gamma \geq 1$) and outputs a model $\tildecal{M} = (\tilde{m},\tilde{\pi})$ satisfying \[ \Normi{\tilde{m}-m},\Normi{\tilde{\pi}-\pi}\leq 2^{-\gamma} \] using $O(k^2\log k + k\log^2 k\cdot 
\log(\log\zeta^{-1} + \log \pi_{\min}^{-1} + \gamma))$ arithmetic operations.
\end{theorem}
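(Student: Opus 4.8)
The plan is to present \LearnPowerDistribution{} as a stabilized form of Prony's method (equivalently, the matrix-pencil method) and then carry through a quantitative perturbation analysis whose error bounds are driven entirely by the $\zeta$-separation of the atoms together with the lower bound $\pi_{\min}$ on the weights. Write $V \coloneqq \Vandermonde(m)$ for the $k\times k$ matrix whose $i$-th row is $m^{\odot i}$, $i=0,\dots,k-1$, and set $D \coloneqq \pi_\odot$. Splitting the moments $\mu_0,\dots,\mu_{2k}$ into the two $k\times k$ Hankel blocks $H_0 \coloneqq [\mu_{i+j}]_{0\le i,j\le k-1}$ and $H_1 \coloneqq [\mu_{i+j+1}]_{0\le i,j\le k-1}$ and expanding $\mu_\ell = \sum_r \pi_r m_r^\ell$ gives the factorizations $H_0 = V^\tpose D V$ and $H_1 = V^\tpose D\,\diag(m)\,V$, so the atoms $m_1,\dots,m_k$ are exactly the generalized eigenvalues of the pencil $(H_1,H_0)$ --- equivalently, the roots of the degree-$k$ Prony polynomial whose coefficient vector spans the one-dimensional left kernel of the $(k+1)\times k$ matrix $[\mu_{i+j}]_{0\le i\le k,\,0\le j\le k-1}$. (This also explains the role of the $(k+1)\times(k+1)$ matrix $\mathcal H_{k+1}$: it has rank exactly the number of distinct atoms, so a lower bound on its second-smallest eigenvalue certifies that the data really is drawn from a genuine $k$-spike distribution.) Once approximate atoms $\tilde m_1,\dots,\tilde m_k$ are extracted, the algorithm recovers the weights by solving the Vandermonde system $\sum_r \tilde\pi_r \tilde m_r^{\,i} = \tilde\mu_i$, $i=0,\dots,k-1$.

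The quantitative content lives in the conditioning and error-propagation estimates. Since the atoms lie in $[0,1]$ and are $\zeta$-separated, the rows of $V^{-1}$ are the coefficient vectors of the Lagrange interpolation polynomials at the nodes $m_1,\dots,m_k$, and the standard bound on those coefficients (the same coefficient-norm machinery that controls $\Norm{p_{v,i}}_c$ earlier) gives $\Norm{V^{-1}} \le \zeta^{-O(k)}$; together with $\Norm{V}\le k$ this yields $\sigma_k(V)\ge\zeta^{O(k)}$, hence $\sigma_k(H_0)\ge \pi_{\min}\zeta^{O(k)}$ and $\Norm{H_0},\Norm{H_1}\le \poly(k)$. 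Now suppose $\Norm{\tilde{\mathcal H}_{k+1}-\mathcal H_{k+1}}_2 \le \pi_{\min}2^{-\gamma}\zeta^{16k}$, so that $\Norm{\tilde H_0-H_0},\Norm{\tilde H_1-H_1}\le\pi_{\min}2^{-\gamma}\zeta^{16k}$. For the eigenvalue step one does \emph{not} invoke a generic coefficient-to-root bound (that costs a factor exponential in $\gamma$); instead one uses the pencil picture, where the eigenvectors are the columns of $V^{-1}$, already conditioned, so that a Bauer--Fike/Gershgorin argument for the generalized eigenproblem gives $\max_r \abs{\tilde m_r - m_{\sigma(r)}} \le \zeta^{-O(k)}\pi_{\min}^{-1}\cdot\pi_{\min}2^{-\gamma}\zeta^{16k}\le 2^{-\gamma}$ for some permutation $\sigma$; since the true atoms are $\zeta$-separated and this error is $\ll\zeta$, the matching $\sigma$ is forced. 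Feeding these atoms into the weight-recovery solve, $\Norm{\Vandermonde(\tilde m)^{-1}}\le\zeta^{-O(k)}$ as well, and propagating both the moment error and the atom error through that solve gives $\Normi{\tilde\pi-\pi}\le 2^{-\gamma}$. Tracking the absolute constants in the various $O(\cdot)$'s against the hypothesized input accuracy $\pi_{\min}2^{-\gamma}\zeta^{16k}$ closes the accuracy claim --- the exponent $16k$ is exactly the slack that makes the bookkeeping go through.

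For the runtime, every object is structured: forming and inverting the $k\times k$ Hankel blocks and reading off the Prony polynomial costs $O(k^2\log k)$ via Toeplitz/Hankel fast solvers, and solving the final Vandermonde system is $O(k\log^2 k)$. The dominant step is isolating the $k$ roots of the Prony polynomial to additive accuracy $2^{-\gamma}$; a numerically stable near-optimal root-finder does this in $O(k\log^2 k\cdot\log b)$ arithmetic operations when working with $b = O(\log\zeta^{-1}+\log\pi_{\min}^{-1}+\gamma)$ digits of precision, which is the source of both the $\log(\log\zeta^{-1}+\log\pi_{\min}^{-1}+\gamma)$ factor and the additive $O(k^2\log k)$ term in the statement.

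\textbf{Main obstacle.} The delicate step is the eigenvalue perturbation. The naive route --- perturb the moments, hence the Prony-polynomial coefficients, hence its roots --- is governed by polynomial root conditioning, which is exponentially bad in $\gamma$ in general; to obtain a bound only polynomial in $\gamma$ one must explicitly exploit the $\zeta$-separation of the \emph{true} atoms, either through the well-conditioned eigenbasis $V^{-1}$ in the pencil formulation or, equivalently, through the fact that the Prony polynomial has derivative bounded below by $\zeta^{O(k)}$ at each of its roots. Composing this with the Hankel and Vandermonde conditioning bounds while keeping the constants tight enough that an input error of $\pi_{\min}2^{-\gamma}\zeta^{16k}$ suffices is the part of the argument requiring care.
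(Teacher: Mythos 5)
The paper does not prove this statement: it is quoted verbatim as Theorem~17 of \cite{GMRS20} and used as a black box, so there is no in-paper proof to compare your argument against. That said, your sketch follows the same general strategy as the cited work --- a stabilized Prony/matrix-pencil method whose quantitative guarantees come from the $\zeta$-separation of the atoms (conditioning of the Vandermonde matrix and of the Hankel blocks $H_0=V^\tpose \pi_\odot V$, $H_1=V^\tpose\pi_\odot\diag(m)V$), followed by fast structured linear algebra and a near-optimal root finder, which is indeed where the $O(k^2\log k + k\log^2 k\cdot\log(\log\zeta^{-1}+\log\pi_{\min}^{-1}+\gamma))$ operation count comes from. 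Two caveats. First, what you have written is a plan, not a proof: the step you yourself flag as delicate --- converting a moment perturbation of size $\pi_{\min}2^{-\gamma}\zeta^{16k}$ into an atom perturbation of size $2^{-\gamma}$ via a generalized-eigenvalue (or derivative-of-the-Prony-polynomial) bound with \emph{explicit} constants that fit inside the $\zeta^{16k}$ slack --- is asserted rather than carried out, and that is precisely where the content of the cited theorem lies. Second, you should also account for the exact-arithmetic issue: the $\log(\cdots)$ factor in the operation count reflects the bit-precision/iteration count of the root isolation, and a complete proof must verify that the weight-recovery solve does not further degrade the accuracy below $2^{-\gamma}$ in $\Normi{\cdot}$, which your sketch waves at with ``propagating both the moment error and the atom error through that solve.'' As a reconstruction of the cited result your outline is credible and consistent with how such theorems are proved, but it cannot be accepted as a self-contained proof without the perturbation lemmas being stated and composed with explicit exponents.
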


\begin{corollary}
\label{cor:applypowerdistribution}
The output $(\tilde{\mm}_1,\tilde{\pi})$ of $\LearnPowerDistribution$ in line 14 of Algorithm~\ref{alg} will satisfy \[\Norm{\tilde{\mm}_1-\mm_1},\Norm{\tilde{\pi}-\pi} \leq \zeta^{-42k^2(\log k + 1)-16k-1}\pi_{\min}^{-2\log k -3}\eps. \] This step will use 
$O(k^2\log k + k\log^2 k\cdot 
\log\log(\eps^{-1}))$ arithmetic operations.  
\end{corollary}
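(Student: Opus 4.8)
The plan is to combine the preceding corollary (bounding $\Norm{\ev_i - v_i}$ for $i \leq 2k$) with Theorem~17 of~\cite{GMRS20} as restated above. First I would observe that the Hankel matrix $H_{k+1}$ built in line~\ref{hankel-line} of Algorithm~\ref{alg} has entries $[H_{k+1}]_{i,j} = (\ev_{i+j})_1$, which are exactly the approximate moments of the $k$-spike distribution $(\mm_1,\pi)$, since by construction $(v_r)_1 = \mm_1^{\odot r}\pi_\odot\One^\tpose = \mu_r$ is the true $r$'th moment. So the ideal Hankel matrix $\mathcal{H}_{k+1}$ of the mixture $(\mm_1,\pi)$ and the computed matrix $\tildecal{H}_{k+1} = H_{k+1}$ differ, entrywise, by at most $\max_{i \leq 2k}\Norm{\ev_i - v_i}_\infty \leq \max_{i\leq 2k}\Norm{\ev_i - v_i}$, which by the preceding corollary is at most $\zeta^{-42k^2(\log k + 1)}\pi_{\min}^{-2(\log k + 1)}\eps$. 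Passing from entrywise error to operator-norm error costs only a factor of $k+1$, so $\Norm{\tildecal{H}_{k+1} - \mathcal{H}_{k+1}}_2 \leq (k+1)\zeta^{-42k^2(\log k+1)}\pi_{\min}^{-2(\log k+1)}\eps$.

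Next I would check that, with the global choice $\eps = \zeta^{C_1 k^2\log k}\pi_{\min}^{C_2\log k}$ (with $C_1 = 60$, $C_2 = 8$), this operator-norm error is small enough to invoke Theorem~17: we need it to be at most $\pi_{\min}2^{-\gamma}\zeta^{16k}$ for some $\gamma \geq 1$, and in fact we want $\gamma$ large enough that $2^{-\gamma}$ absorbs the residual $\zeta$- and $\pi_{\min}$-powers. Concretely, setting $2^{-\gamma} = \zeta^{-42k^2(\log k+1)-16k-1}\pi_{\min}^{-2\log k - 3}\eps / (\text{the same quantity evaluated at }\gamma)$ — more cleanly, I would solve for the largest $\gamma$ such that the Theorem~17 hypothesis holds, namely $\gamma$ with $2^{-\gamma} \approx (k+1)\zeta^{-42k^2(\log k+1) - 16k}\pi_{\min}^{-2(\log k+1) - 1}\eps$, and verify this $\gamma$ is $\geq 1$ using $C_1 = 60 \gg 42$ and $C_2 = 8 > 2$ so there is slack in every exponent (the $k$ and $k+1$ polynomial factors and the $16k$ term are all dominated by the $\zeta^{C_1 k^2 \log k}$ factor in $\eps$ once $C_1$ exceeds $42$). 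Then Theorem~17 outputs $(\tilde{\mm}_1,\tilde{\pi})$ with $\Norm{\tilde{\mm}_1 - \mm_1}_\infty, \Norm{\tilde{\pi}-\pi}_\infty \leq 2^{-\gamma}$, and plugging back the value of $2^{-\gamma}$ (and bounding $\Norm{\cdot}_2 \leq \sqrt{k}\Norm{\cdot}_\infty$, absorbing the $\sqrt{k}$ into the already-slack exponents) gives the claimed bound $\zeta^{-42k^2(\log k+1)-16k-1}\pi_{\min}^{-2\log k - 3}\eps$.

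Finally, for the arithmetic-operation count, I would just substitute into the running-time bound of Theorem~17: it uses $O(k^2\log k + k\log^2 k\cdot\log(\log\zeta^{-1} + \log\pi_{\min}^{-1} + \gamma))$ operations, and since $\gamma = O(k^2\log k \cdot \log\zeta^{-1} + \log k \cdot \log\pi_{\min}^{-1} + \log(\eps^{-1}))$ and all of $\log\zeta^{-1}$, $\log\pi_{\min}^{-1}$ are themselves $O(\log(\eps^{-1}))$ up to $\poly(k)$ factors by the definition of $\eps$, the $\log(\cdots)$ term collapses to $O(\log\log(\eps^{-1}))$, yielding $O(k^2\log k + k\log^2 k\cdot\log\log(\eps^{-1}))$.

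I expect the only real obstacle to be the bookkeeping in the second paragraph: one must confirm that the constants $C_1 = 60$, $C_2 = 8$ leave enough room in every exponent simultaneously — after the $(k+1)$ operator-norm blowup, the $\zeta^{16k}$ slack demanded by Theorem~17, the extra $\zeta^{-16k-1}$ and $\pi_{\min}^{-1}$ factors that appear when one re-expresses $2^{-\gamma}$, and the $\sqrt{k}$ from $\ell_\infty$-to-$\ell_2$ — and that $\gamma \geq 1$ throughout. This is entirely mechanical given the explicit exponents, but it is where an off-by-a-polynomial-factor error would hide, so I would carry the exponents symbolically and check the dominant $k^2\log k$ term of the $\zeta$-exponent has a strictly larger coefficient ($60$) than everything it must dominate ($42(\log k + 1)$, which for the relevant range is safely below $60k^2\log k$ after accounting for the $+1$).
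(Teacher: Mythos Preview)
Your proposal is correct and follows essentially the same route as the paper's proof: bound the entrywise error in the Hankel matrix via the preceding corollary, lift to operator norm, invoke Theorem~17 of~\cite{GMRS20} to get an $\ell_\infty$ bound on $(\tilde{\mm}_1,\tilde{\pi})$, and then convert to the Euclidean norm. The paper is terser---it silently absorbs the $(k+1)$ entrywise-to-operator-norm factor and uses $\zeta^{-1}$ (rather than your $\sqrt{k}$, which is tighter since $\sqrt{k}\leq \zeta^{-1}$) for the $\ell_\infty$-to-$\ell_2$ conversion, which is precisely where the extra $-1$ in the $\zeta$-exponent comes from---but the logic is identical.
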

\begin{proof} Every entry $(\ev_i)_1$ satisfies $\Norm{(\ev_i)_1 - (v_i)_1} \leq \zeta^{-42k^2(\log k+1)}\pi_{\min}^{-2(\log k+1)} \eps$ so \[\Norm{\tildecal{H}_{k+1}-\mathcal{H}_{k+1}} \leq \zeta^{-42k^2(\log k+1)}\pi_{\min}^{-2(\log k+1)}\] which implies that 
\[\Normi{\tilde{\mm}_1 - \mm_1}, \Normi{\tilde{\pi} - \pi} \leq \zeta^{-42k^2(\log k + 1)-16k}\pi_{\min}^{-2(\log k + 1) - 1}\eps.\] Finally, we add a factor of $\zeta^{-1}$ to convert back to the Euclidean norm to get the stated bound.	
\end{proof}

\subsection{Solving for the rest of the model}
Once we've computed $\tilde{\mm}_1$ and $\tilde{\pi}$, we'll use them to compute the remaining model parameters. In this section we bound the additional error introduced by these computations. 

\begin{observation}
$\Norm{\Vandermonde(\tilde{\mm}_1) - \Vandermonde(\mm_1)} \leq \zeta^{-1}\Norm{\tilde{\mm}_1 - \mm_1}$.
\end{observation}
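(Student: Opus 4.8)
The plan is to bound the matrix $D := \Vandermonde(\tilde{\mm}_1) - \Vandermonde(\mm_1)$ entrywise and then pass to the operator norm. Its rows are indexed by $i=0,1,\dots,k-1$, the $i$-th being $\tilde{\mm}_1^{\odot i} - \mm_1^{\odot i}$; in particular the $i=0$ row is $\One-\One=0$. For each coordinate $j\in[k]$ I would use the elementary factorization
\[ \tilde{\mm}_{1j}^{\,i} - \mm_{1j}^{\,i} = (\tilde{\mm}_{1j} - \mm_{1j})\sum_{\ell=0}^{i-1}\tilde{\mm}_{1j}^{\,\ell}\,\mm_{1j}^{\,i-1-\ell}. \]
Since $\mm_{1j}=\Pr[X_1=1\mid H=j]\in[0,1]$ — and since we may assume $\tilde{\mm}_{1j}\in[0,1]$ as well, because clipping the output of $\LearnPowerDistribution$ to $[0,1]$ (a convex set containing the true value) only decreases $\Norm{\tilde{\mm}_1-\mm_1}$ coordinatewise — each of the $i$ summands lies in $[0,1]$, so the sum has absolute value at most $i\le k-1$. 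Hence $|D_{ij}|\le (k-1)\,|\tilde{\mm}_{1j}-\mm_{1j}|$ for all $i,j$.

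Passing to the operator norm is then routine: write $D = P\,\diag(\tilde{\mm}_1-\mm_1)$ with $P_{ij}=\sum_{\ell=0}^{i-1}\tilde{\mm}_{1j}^{\,\ell}\mm_{1j}^{\,i-1-\ell}$, whose entries lie in $[0,k-1]$. Then $\Norm{D}\le\Norm{P}\cdot\Norm{\tilde{\mm}_1-\mm_1}$, and $\Norm{P}$ is at most a fixed polynomial in $k$ (for instance $\Norm{P}\le\sqrt{\Norm{P}_1\Norm{P}_\infty}\le k(k-1)$, or bounding the column $\ell_2$-norms of $P$ gives $\Norm{P}<k^{3/2}$). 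Since $\zeta\le\frac{1}{k-1}$, this polynomial factor is absorbed into $\zeta^{-1}$ — at worst into a small constant power of $\zeta^{-1}$, which is inconsequential since every downstream accuracy loss is already of the form $\zeta^{-O(k^2\log k)}$ — which gives the claimed inequality.

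The only subtlety worth flagging is the boundedness of the entries of $\tilde{\mm}_1$: without it, the factor $\sum_\ell\tilde{\mm}_{1j}^{\,\ell}\mm_{1j}^{\,i-1-\ell}$ could grow exponentially, rather than linearly, in $k$. This is dispatched by the clipping remark above, or alternatively by Corollary~\ref{cor:applypowerdistribution}, which guarantees $\tilde{\mm}_{1j}$ is within a negligible additive error of $[0,1]$, so that each summand is at most $1+o(1)$ and the estimate degrades by only a $(1+o(1))^{k}$ factor. Beyond this there is no real obstacle: the content of the statement is exactly the one-line factoring identity.
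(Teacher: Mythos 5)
Your argument is correct and is exactly the intended one: the paper states this as an unproved observation, and the telescoping factorization $\tilde{\mm}_{1j}^{\,i}-\mm_{1j}^{\,i}=(\tilde{\mm}_{1j}-\mm_{1j})\sum_{\ell}\tilde{\mm}_{1j}^{\,\ell}\mm_{1j}^{\,i-1-\ell}$ together with the boundedness of the entries is the whole content. You are also right on the two points you flag: the honest operator-norm constant is $\poly(k)$ rather than $k-1$, so the literal $\zeta^{-1}$ should really be read as $\zeta^{-O(1)}$ (harmless against the downstream $\zeta^{-O(k^2\log k)}$ losses), and one does need $\tilde{\mm}_1\in[0,1]^k$, which the paper implicitly assumes of the output of $\LearnPowerDistribution$ and which your clipping remark supplies.
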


\begin{observation}[Claim 26 in \cite{GMRS20}]
$\Norm{\Vandermonde(\mm_1)^{-1}} \leq 2^k/\zeta^{k-1} \leq \zeta^{-2k}$ when $\mm_1$ is $\zeta$-separated. \label{obs:vdm condition}
\end{observation}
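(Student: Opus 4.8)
The plan is to use the classical closed form for the inverse of a Vandermonde matrix in terms of Lagrange basis polynomials; this is Claim~26 of \cite{GMRS20}, and I would recall the argument. Write $t_j \coloneqq \mm_{1j}$ for $j \in [k]$. These are the interpolation nodes defining $\Vandermonde(\mm_1)$, whose rows are $\mm_1^{\odot 0},\dots,\mm_1^{\odot(k-1)}$, so $\Vandermonde(\mm_1)_{ij} = t_j^{\,i}$ for $0 \le i \le k-1$ and $j \in [k]$. Since $\mm_1$ is $\zeta$-separated the $t_j$ are distinct, and the standard identity is that row $i$ of $\Vandermonde(\mm_1)^{-1}$ is exactly the coefficient vector of the Lagrange polynomial $p_{\mm_1,i}(x) = \prod_{l \ne i} \frac{x-t_l}{t_i-t_l}$ from Section~\ref{sec:alg}: indeed $\sum_j p_{\mm_1,i,j}\, t_l^{\,j} = p_{\mm_1,i}(t_l) = \delta_{il}$, which says precisely that the coefficient matrix $(p_{\mm_1,i,j})_{i,j}$ is a left inverse of $\Vandermonde(\mm_1)$. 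So the task reduces to bounding the coefficient sizes of the $p_{\mm_1,i}$.

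Next I would bound numerator and denominator of $p_{\mm_1,i}$ separately. Expanding $\prod_{l \ne i}(x - t_l)$ and using $|t_l| = |\mm_{1l}| \le 1$ (these are probabilities), the sum of absolute values of its coefficients is at most $\prod_{l \ne i}(1 + |t_l|) \le 2^{k-1}$. For the denominator, $\zeta$-separation gives $|t_i - t_l| \ge \zeta$ for each $l \ne i$, hence $\bigl|\prod_{l \ne i}(t_i - t_l)\bigr| \ge \zeta^{k-1}$. Combining, $\sum_{j=0}^{k-1} |p_{\mm_1,i,j}| \le 2^{k-1}/\zeta^{k-1}$ for every $i$; in particular every entry of $\Vandermonde(\mm_1)^{-1}$ has absolute value at most $2^{k-1}/\zeta^{k-1}$, and every row has $\ell_1$-norm at most $2^{k-1}/\zeta^{k-1}$. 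Passing from this row/entrywise bound to the spectral norm costs at most a $\sqrt k$ factor, e.g.\ via $\Norm{A} \le \Norm{A}_F \le \sqrt k \max_i \norm{A_i}_1$ (or via the Schur test $\Norm{A}^2 \le \Norm{A}_1\Norm{A}_\infty$), which gives $\Norm{\Vandermonde(\mm_1)^{-1}} \le \sqrt k\, 2^{k-1}/\zeta^{k-1}$; up to the immaterial polynomial-in-$k$ constant this is the asserted $2^k/\zeta^{k-1}$. Finally, since $\zeta \le \tfrac{1}{k-1}$, the factor $\zeta^{-(k+1)}$ alone dominates $\sqrt k\, 2^{k-1}$ (say for $k \ge 3$), so the bound tightens to $\sqrt k\, 2^{k-1}/\zeta^{k-1} \le \zeta^{-(k+1)}\zeta^{-(k-1)} = \zeta^{-2k}$.

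I do not expect a genuine obstacle: this is a routine Vandermonde-inverse estimate in which every step is an elementary inequality. The only points that need a sentence of care are (a) citing the correct form of the Vandermonde inverse (rows $=$ Lagrange coefficient vectors) and checking it against the row convention $\mm_1^{\odot 0};\dots;\mm_1^{\odot(k-1)}$ used in Algorithm~\ref{alg}, and (b) the bookkeeping that converts the coefficient-norm bound into an operator-norm bound and then verifies that $\zeta \le \tfrac{1}{k-1}$ absorbs the polynomial-in-$k$ slack, yielding the clean form $\Norm{\Vandermonde(\mm_1)^{-1}} \le \zeta^{-2k}$.
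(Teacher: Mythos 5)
Your argument is correct and is the standard Lagrange-interpolation proof of this fact; the paper itself gives no proof here, deferring entirely to Claim~26 of \cite{GMRS20}, so you have simply supplied the omitted routine argument. One small bookkeeping point: your passage from the row-$\ell_1$ bound $2^{k-1}/\zeta^{k-1}$ to the spectral norm yields $\sqrt{k}\,2^{k-1}/\zeta^{k-1}$, which exceeds the stated intermediate bound $2^k/\zeta^{k-1}$ once $k\geq 5$; this is harmless because, as you verify, the final bound $\zeta^{-2k}$ (the only one used downstream) still follows from $\zeta\leq \tfrac{1}{k-1}$.
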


\begin{lemma}
The computed $\tilde{\AA}$ produced by Algorithm~\ref{alg} will satisfy
$\Norm{\tilde{\AA} - \AA} \leq \zeta^{-42k^2(\log k + 1)-20k-6}\pi_{\min}^{-2\log k -5}\eps$. 
\end{lemma}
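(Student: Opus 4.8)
The plan is to compare the algorithm's formula $\tilde{\AA}^{\tpose}=\tilde{\pi}_{\odot}^{-1}\bigl(\Vandermonde(\tilde{\mm}_1)\bigr)^{-1}\tilde{V}$, with $\tilde{V}=(\ev_0;\dotso;\ev_{k-1})$, against the true $\AA^{\tpose}=\pi_{\odot}^{-1}\bigl(\Vandermonde(\mm_1)\bigr)^{-1}V$, $V=(v_0;\dotso;v_{k-1})$. Abbreviating $P=\pi_{\odot}^{-1}$, $W=\Vandermonde(\mm_1)^{-1}$, and $\tilde P,\tilde W$ for the hatted versions, use the telescoping identity
\[
\tilde P\tilde W\tilde V-PWV=(\tilde P-P)\,\tilde W\,\tilde V+P\,(\tilde W-W)\,\tilde V+P\,W\,(\tilde V-V).
\]
Since the operator norm is transpose-invariant, $\Norm{\tilde\AA-\AA}=\Norm{\tilde\AA^{\tpose}-\AA^{\tpose}}$ is at most the sum of the norms of these three products, each of which is one ``small'' perturbation multiplied by two bounded matrices. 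So the proof reduces to assembling size bounds for $P,\tilde P,W,\tilde W,V,\tilde V$ together with perturbation bounds for $\tilde P-P$, $\tilde W-W$, $\tilde V-V$.

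All ingredients are already available. From the corollary following Lemma~\ref{lem:bounds on u and v err}, $\Norm{\ev_i-v_i}\le\zeta^{-42k^2(\log k+1)}\pi_{\min}^{-2(\log k+1)}\eps$, hence $\Norm{\tilde V-V}\le\sqrt k\,\zeta^{-42k^2(\log k+1)}\pi_{\min}^{-2(\log k+1)}\eps$; and from Corollary~\ref{cor:applypowerdistribution}, $\Norm{\tilde\pi-\pi},\Norm{\tilde\mm_1-\mm_1}\le\zeta^{-42k^2(\log k+1)-16k-1}\pi_{\min}^{-2\log k-3}\eps$. For the size bounds: $\Norm{v_i}\le\zeta^{-1}$ (Lemma~\ref{lem:bound ui and vi}) gives $\Norm{V},\Norm{\tilde V}\le 2\zeta^{-2}$ once $\eps$ is small; $\Norm{W}\le\zeta^{-2k}$ by Observation~\ref{obs:vdm condition}; and since $\Norm{\tilde\mm_1-\mm_1}$ is far below $\zeta/2$, $\tilde\mm_1$ is still $(\zeta/2)$-separated, so Observation~\ref{obs:vdm condition} applied to $\tilde\mm_1$ gives $\Norm{\tilde W}\le(\zeta/2)^{-2k}$. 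The $\Norm{\tilde\pi-\pi}$ bound forces $\tilde\pi_j\ge\pi_{\min}/2$ for every $j$, so $\Norm{P}\le\pi_{\min}^{-1}$, $\Norm{\tilde P}\le2\pi_{\min}^{-1}$, and—$P-\tilde P$ being diagonal—$\Norm{\tilde P-P}\le2\pi_{\min}^{-2}\Norm{\tilde\pi-\pi}$. Finally, I would bound $\Norm{\tilde W-W}$ by the resolvent identity $\tilde W-W=\tilde W\bigl(\Vandermonde(\mm_1)-\Vandermonde(\tilde\mm_1)\bigr)W$ (which needs no smallness hypothesis) together with the Vandermonde-perturbation estimate $\Norm{\Vandermonde(\tilde\mm_1)-\Vandermonde(\mm_1)}\le\zeta^{-1}\Norm{\tilde\mm_1-\mm_1}$ from the observation preceding Observation~\ref{obs:vdm condition}; this yields $\Norm{\tilde W-W}\le\Norm{\tilde W}\Norm{W}\zeta^{-1}\Norm{\tilde\mm_1-\mm_1}$. (Alternatively one could invoke Lemma~\ref{lem:op norm for inv and perturb inv} as in the proof of Lemma~\ref{lem:operator norms for C}, but the resolvent identity avoids an extra ``$\eps$ small'' side condition.)

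Substituting the perturbation bounds into the three telescoped terms and using the size bounds, each term is of the form $\zeta^{-42k^2(\log k+1)-O(k)}\pi_{\min}^{-2\log k-O(1)}\eps$; adding the three, and absorbing the $\sqrt k$ and the numerical constants into a further power of $\zeta^{-1}$ (as is done throughout the paper), gives the claimed $\Norm{\tilde\AA-\AA}\le\zeta^{-42k^2(\log k+1)-20k-6}\pi_{\min}^{-2\log k-5}\eps$. I expect no conceptual difficulty here: the only real work is the exponent accounting across the three matrix products, and checking the two side conditions used above—that $\tilde\pi_j\ge\pi_{\min}/2$ and that $\tilde\mm_1$ remains $(\zeta/2)$-separated—both of which follow immediately from the standing assumption $\eps=\zeta^{C_1k^2\log k}\pi_{\min}^{C_2\log k}$ with $C_1=60,C_2=8$, since that exponent dominates everything appearing on the right-hand sides above.
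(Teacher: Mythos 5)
Your proposal is correct and follows essentially the same route as the paper: the same three-term telescoping of $\tilde{\pi}_{\odot}^{-1}(\Vandermonde(\tilde{\mm}_1))^{-1}\tilde{V}-\pi_{\odot}^{-1}(\Vandermonde(\mm_1))^{-1}V$, with the perturbation of each factor controlled via Corollary~\ref{cor:applypowerdistribution}, the corollary to Lemma~\ref{lem:bounds on u and v err}, Observation~\ref{obs:vdm condition}, and the inverse-perturbation identity (your resolvent identity is exactly the one inside Lemma~\ref{lem:op norm for inv and perturb inv}, which is what the paper invokes). The only differences are cosmetic bookkeeping choices, e.g.\ bounding $\Norm{\Vandermonde(\tilde{\mm}_1)^{-1}}$ via preserved $(\zeta/2)$-separation rather than via Lemma~\ref{lem:op norm for inv and perturb inv}.
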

\begin{proof}
First, we observe that $\Norm{\tilde{V}} \leq \Norm{V} \leq \zeta^{-2}$ and $\Norm{\Vandermonde(\tilde{\mm}_1)^{-1}} \leq \zeta^{-3k}$ by Lemma~\ref{lem:op norm for inv and perturb inv} and Observation~\ref{obs:vdm condition}. Now $\Norm{\tilde{\pi}_{\odot}^{-1}-\pi_{\odot}^{-1}} \leq \zeta^{-42k^2(\log k + 1)-16k-2}\pi_{\min}^{-2\log k -5}\eps$ by Lemma~\ref{lem:op norm for inv and perturb inv}.
Thus, \begin{align*}
 \Norm{\tilde{\pi}_{\odot}^{-1}-\pi_{\odot}^{-1}}\Norm{(\Vandermonde(\tilde{\mm}_1))^{-1}}\Norm{\tilde{V}} \leq \zeta^{-42k^2(\log k + 1)-18k-5}\pi_{\min}^{-2\log k -5}\eps.
\end{align*}
Now $\Norm{(\Vandermonde(\tilde{\mm}_1))^{-1}-(\Vandermonde(\mm_1))^{-1}} \leq \zeta^{-42k^2(\log k + 1)-20k-2}\pi_{\min}^{-2\log k -3}\eps$ by Lemma~\ref{lem:op norm for inv and perturb inv}, so 
\[ \Norm{\tilde{\pi}_{\odot}^{-1}}\Norm{(\Vandermonde(\tilde{\mm}_1))^{-1}-(\Vandermonde(\mm_1))^{-1}}\Norm{\tilde{V}} \leq \zeta^{-42k^2(\log k + 1)-20k-4}\pi_{\min}^{-2\log k -4}\eps. \] Finally, $\Norm{\tilde{V} - V} \leq \zeta^{-42k^2(\log k+1)-1}\pi_{\min}^{-2\log k + 1}\eps$ so that 
\[ \Norm{\pi_{\odot}^{-1}}\Norm{(\Vandermonde(\tilde{\mm}_1))^{-1}}\Norm{\tilde{V} - V} \leq \zeta^{-42k^2(\log k+1)-3k-1}\pi_{\min}^{-2\log k -3}\eps. \]
Putting these together, we easily obtain
\begin{align*}
\Norm{\tilde{\AA} - \AA} 
&= \Norm{\tilde{\pi}_{\odot}^{-1}(\Vandermonde(\tilde{\mm}_1))^{-1}\tilde{V}	- \pi_{\odot}^{-1}(\Vandermonde(\mm_1))^{-1}V}\\
&\leq \Norm{\tilde{\pi}_{\odot}^{-1}-\pi_{\odot}^{-1}}\Norm{(\Vandermonde(\tilde{\mm}_1))^{-1}}\Norm{\tilde{V}} + \Norm{\tilde{\pi}_{\odot}^{-1}}\Norm{(\Vandermonde(\tilde{\mm}_1))^{-1}-(\Vandermonde(\mm_1))^{-1}}\Norm{\tilde{V}}\\
&\qquad+ 
 \Norm{\pi_{\odot}^{-1}}\Norm{(\Vandermonde(\tilde{\mm}_1))^{-1}}\Norm{\tilde{V} - V}  
(\Norm{\Vandermonde(\mm_1)^{-1}E_2} + \Normi{E_1V}) + \Norm{w}\Norm{\Vandermonde(\tilde{\mm}_1)^{-1}}\Norm{\tilde{V}}\\
&\leq \zeta^{-42k^2(\log k + 1)-20k-6}\pi_{\min}^{-2\log k -5}\eps. \qedhere\end{align*}
\end{proof}

\begin{lemma} The matrices $\tilde{\AA},\tilde{\BB},$ and $\tilde{\BB}'$ satisfy 
\[ \Norm{\tilde{\AA}^{-1}}, \Norm{\tilde{\BB}^{-1}}, \Norm{(\tilde{\BB}')^{-1}} \leq \zeta^{-7k^2}. \]
\end{lemma}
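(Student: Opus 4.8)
The plan is to obtain all three bounds from a single perturbed‑inverse principle, using the fact that the \emph{unperturbed} matrices are already known to be well‑conditioned. Recall from Corollary~\ref{cor:bound ABB' inv} that $\Norm{\AA^{-1}},\Norm{\BB^{-1}},\Norm{(\BB')^{-1}}\leq\zeta^{-6k^2}$, and that Lemma~\ref{lem:op norm for inv and perturb inv} (the perturbed‑inverse bound already invoked in the proof of Lemma~\ref{lem:operator norms for C}) gives, for any invertible $\AA$ and any $E$ with $\Norm{E}\Norm{\AA^{-1}}\leq\tfrac12$, that $\AA+E$ is invertible with $\Norm{(\AA+E)^{-1}}\leq 2\Norm{\AA^{-1}}$ and $\Norm{(\AA+E)^{-1}-\AA^{-1}}\leq 2\Norm{\AA^{-1}}^2\Norm{E}$. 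So it suffices to show
\[ \Norm{\tilde{\AA}-\AA},\ \Norm{\tilde{\BB}-\BB},\ \Norm{\tilde{\BB}'-\BB'}\ \leq\ \tfrac12\zeta^{6k^2}, \]
after which each inverse is at most $2\zeta^{-6k^2}\leq\zeta^{-7k^2}$, the last step absorbing the factor $2$ via $2\leq\zeta^{-1}\leq\zeta^{-k^2}$ (valid since $\zeta\leq\frac1{k-1}\leq\frac12$ for $k\geq 3$).

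For $\tilde{\AA}$ this is immediate from the preceding lemma, which gives $\Norm{\tilde{\AA}-\AA}\leq\zeta^{-42k^2(\log k+1)-20k-6}\pi_{\min}^{-2\log k-5}\eps$; since $\eps=\zeta^{C_1k^2\log k}\pi_{\min}^{C_2\log k}$ with $C_1,C_2$ chosen large enough, the $\zeta$‑ and $\pi_{\min}$‑powers in $\eps$ dominate these exponents (plus the $6k^2$ slack and the $O(\log k)$ in the $\pi_{\min}$‑exponent) and the bound falls below $\tfrac12\zeta^{6k^2}$. For $\tilde{\BB}$ and $\tilde{\BB}'$ I would argue \emph{after} the $\tilde{\AA}$ case, so that $\Norm{\tilde{\AA}^{-1}}\leq\zeta^{-7k^2}$ and $\Norm{(\tilde{\AA}^\top)^{-1}-(\AA^\top)^{-1}}\leq 2\Norm{\AA^{-1}}^2\Norm{\tilde{\AA}-\AA}$ are both available. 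Starting from $\tilde{\BB}=\eCBA(\tilde{\AA}^\top)^{-1}\tilde{\pi}_\odot^{-1}$ and $\BB=\CBA(\AA^\top)^{-1}\pi_\odot^{-1}$ (and similarly for $\BB'$), expand $\tilde{\BB}-\BB$ as a three‑term telescoping sum and bound each term using $\Norm{\eCBA-\CBA}\leq\zeta^{-1}\eps$ (Lemma~\ref{lem:operator norms for C}), the perturbed‑inverse bound on $(\tilde{\AA}^\top)^{-1}$, the estimate $\Norm{\tilde{\pi}_\odot^{-1}-\pi_\odot^{-1}}\leq\zeta^{-42k^2(\log k+1)-16k-2}\pi_{\min}^{-2\log k-5}\eps$ from the proof of the $\tilde{\AA}$ lemma, and the factor norms $\Norm{\CBA}\leq k^2$ (Lemma~\ref{lem:M subsets singular values}), $\Norm{\AA^{-1}},\Norm{\tilde{\AA}^{-1}}\leq\zeta^{-7k^2}$, $\Norm{\pi_\odot^{-1}}\leq\pi_{\min}^{-1}$, $\Norm{\tilde{\pi}_\odot^{-1}}\leq 2\pi_{\min}^{-1}$. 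This yields $\Norm{\tilde{\BB}-\BB},\Norm{\tilde{\BB}'-\BB'}\leq\zeta^{-O(k^2\log k)}\pi_{\min}^{-O(\log k)}\eps\leq\tfrac12\zeta^{6k^2}$, again by the choice of $\eps$.

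There is no real obstacle here: the heart of the argument is two lines of Neumann‑series perturbation, resting on the already‑established well‑conditioning of $\AA,\BB,\BB'$. The one point that needs care — and where a slip would bite — is the exponent bookkeeping in the middle paragraph: one must check that the accumulated error exponents ($42k^2(\log k+1)$, $20k$, $16k$, the $O(\log k)$ powers of $\pi_{\min}$, together with the $6k^2$ of slack) are all dominated by $C_1k^2\log k$ and $C_2\log k$ for every $k\geq 3$, so that the three right‑hand sides are genuinely at most $\tfrac12\zeta^{6k^2}$. The only structural subtlety is ordering: the $\tilde{\BB}$ and $\tilde{\BB}'$ estimates consume the $\tilde{\AA}$ estimate (through $\Norm{\tilde{\AA}^{-1}}$ and $\Norm{(\tilde{\AA}^\top)^{-1}-(\AA^\top)^{-1}}$), so the three cases must be established in the order $\tilde{\AA}$, then $\tilde{\BB}$, $\tilde{\BB}'$.
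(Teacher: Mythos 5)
Your proof is correct and follows the same route as the paper, which disposes of this lemma in one line by citing exactly the two ingredients you use: the perturbed-inverse bound of Lemma~\ref{lem:op norm for inv and perturb inv} applied to the unperturbed norms $\Norm{\AA^{-1}},\Norm{\BB^{-1}},\Norm{(\BB')^{-1}}\leq\zeta^{-6k^2}$ from Corollary~\ref{cor:bound ABB' inv}, with the factor $2$ absorbed into $\zeta^{-k^2}$. Your extra care about ordering (establishing $\Norm{\tilde{\AA}^{-1}}$ before the $\tilde{\BB},\tilde{\BB}'$ perturbation bounds that consume it) is a legitimate tightening of the paper's presentation, which states the $\Norm{\tilde{\BB}-\BB}$ estimate only in the following lemma.
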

\begin{proof} By Lemma~\ref{lem:op norm for inv and perturb inv} and Corollary~\ref{cor:bound ABB' inv}.
\end{proof}	

\begin{lemma}
The matrices $\tilde{\BB}'$ and $\tilde{\BB}$ produced by Algorithm~\ref{alg} will satisfy \[\Norm{\tilde{\BB}' - \BB'}, \Norm{\tilde{\BB}-\BB} \leq \zeta^{-42k^2(\log k + 1)-14k^2-20k-16}\pi_{\min}^{-2\log k -6}\eps.\] 
\end{lemma}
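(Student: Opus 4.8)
The plan is to bound $\Norm{\tilde\BB - \BB}$ (the argument for $\tilde\BB'$ is identical, using $\eCBAp$ and $\CBAp$ in place of $\eCBA$ and $\CBA$) by expanding the defining formula $\tilde\BB = \eCBA(\tilde\AA^\tpose)^{-1}\pi_\odot^{-1}$ versus $\BB = \CBA(\AA^\tpose)^{-1}\pi_\odot^{-1}$ as a product of three factors and applying a standard three-term product-perturbation estimate: if $\tilde X = X + E_X$, etc., then $\Norm{\tilde X\tilde Y\tilde Z - XYZ}$ is at most $\Norm{E_X}\Norm{\tilde Y}\Norm{\tilde Z} + \Norm{X}\Norm{E_Y}\Norm{\tilde Z} + \Norm{X}\Norm{Y}\Norm{E_Z}$. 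Here the third factor $\pi_\odot^{-1}$ is exact (the algorithm uses the true $\pi$ in that line, or one can absorb $\tilde\pi_\odot^{-1}$ with an extra term of the same order), so effectively only two error terms survive.

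The inputs to this estimate are all already available. For the first factor, $\Norm{\eCBA - \CBA} \leq \zeta^{-1}\eps$ and $\Norm{\eCBA} \leq 2k^3$ by Lemma~\ref{lem:operator norms for C} and the Observation bounding $\Norm{\eCBA}$. For the second factor, we need $\Norm{(\tilde\AA^\tpose)^{-1} - (\AA^\tpose)^{-1}}$; since $\Norm{(\AA^\tpose)^{-1}} = \Norm{\AA^{-1}} \leq \zeta^{-6k^2}$ by Corollary~\ref{cor:bound ABB' inv} and $\Norm{\tilde\AA - \AA} \leq \zeta^{-42k^2(\log k+1)-20k-6}\pi_{\min}^{-2\log k-5}\eps$ by the previous Lemma, the perturbation-of-inverse bound (Lemma~\ref{lem:op norm for inv and perturb inv}) gives $\Norm{(\tilde\AA^\tpose)^{-1} - (\AA^\tpose)^{-1}} \leq 2\Norm{\AA^{-1}}^2\Norm{\tilde\AA - \AA} \leq \zeta^{-42k^2(\log k+1)-12k^2-20k-6}\pi_{\min}^{-2\log k-5}\eps$, and $\Norm{(\tilde\AA^\tpose)^{-1}} \leq \zeta^{-7k^2}$ by the immediately preceding Lemma. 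For the third factor, $\Norm{\pi_\odot^{-1}} \leq \pi_{\min}^{-1}$ and (if we track $\tilde\pi$) $\Norm{\tilde\pi_\odot^{-1} - \pi_\odot^{-1}} \leq \zeta^{-42k^2(\log k+1)-16k-2}\pi_{\min}^{-2\log k-5}\eps$ as used in the proof of the previous Lemma.

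Assembling: the dominant contribution comes from the middle term $\Norm{\CBA}\,\Norm{(\tilde\AA^\tpose)^{-1}-(\AA^\tpose)^{-1}}\,\Norm{\pi_\odot^{-1}}$, which is of order $k^3 \cdot \zeta^{-42k^2(\log k+1)-12k^2-20k-6}\pi_{\min}^{-2\log k-5}\eps \cdot \pi_{\min}^{-1}$; folding the $k^3$ and $\pi_{\min}^{-1}$ into the exponents and taking the other two terms (which are smaller, their $\zeta$-exponents being $-42k^2(\log k+1)-7k^2-O(k)$ and $-42k^2(\log k+1)-O(k)$) gives a total of at most $\zeta^{-42k^2(\log k+1)-14k^2-20k-16}\pi_{\min}^{-2\log k-6}\eps$, which is the claimed bound (the slack in the constants $14k^2$ and $20k$ and $16$ easily absorbs the polynomial factors). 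The main obstacle — really just bookkeeping rather than a genuine difficulty — is keeping the $\zeta$-exponents and $\pi_{\min}$-exponents consistent across the three terms and verifying that the stated exponents are large enough to dominate every term including the $\poly(k)$ norm factors; there is no conceptual subtlety beyond the product-perturbation inequality and the previously established norm and condition-number bounds.
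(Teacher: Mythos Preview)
Your proposal is correct and matches the paper's own argument: both proofs apply the three-factor product-perturbation estimate to $\tilde\BB = \eCBA(\tilde\AA^\tpose)^{-1}\tilde\pi_\odot^{-1}$ versus $\BB = \CBA(\AA^\tpose)^{-1}\pi_\odot^{-1}$, bound $\Norm{\tilde\AA^{-1}-\AA^{-1}}$ via Lemma~\ref{lem:op norm for inv and perturb inv} together with the previous lemma's bound on $\Norm{\tilde\AA-\AA}$, and observe that the middle term dominates. (One minor point: despite the pseudocode writing $\pi_\odot^{-1}$, the paper's proof in fact tracks the third term $\Norm{\tilde\pi_\odot^{-1}-\pi_\odot^{-1}}$ as you anticipated, so your parenthetical covering that case is exactly what is used.)
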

\begin{proof}
We'll prove the claim for $\tilde{\BB}$; the proof is identical for $\tilde{\BB}'$. We can bound $\tilde{\BB} - \BB$ using the same tools as in the previous bounds. First, we bound 
$\Norm{\eCBA-\CBA}\Norm{\tilde{\AA}^{-1}}\Norm{\tilde{\pi}_{\odot}^{-1}} \leq \zeta^{-7k^2-2}\pi_{\min}^{-1}\eps$. Now 
\[ \Norm{\eCBA}\Norm{\tilde{\AA}^{-1}-\AA^{-1}}\Norm{\tilde{\pi}^{-1}_{\odot}} \leq \zeta^{-4}\zeta^{-42k^2(\log k + 1)-14k^2-20k-11}\pi_{\min}^{-2\log k -6}\eps \] and 
\[ \Norm{\eCBA}\Norm{\tilde{\AA}^{-1}}\Norm{\tilde{\pi}_{\odot}^{-1}-\pi_{\odot}^{-1}} \zeta^{-4}\zeta^{-7k^2}\zeta^{-42k^2(\log k + 1)-7k^2-16k-6}\pi_{\min}^{-2\log k -5}\eps. \] The resulting bound is
\begin{align*}
\Norm{\tilde{\BB} -\BB}
&= \Norm{\eCBA(\tilde{\AA}^\tpose)^{-1}\pi_{\odot}^{-1}-\CBA(\AA^{\tpose})^{-1}\pi_{\odot}}\\
&\leq \Norm{\eCBA-\CBA}\Norm{\tilde{\AA}^{-1}}\Norm{\tilde{\pi}_{\odot}^{-1}} + \Norm{\eCBA}\Norm{\tilde{\AA}^{-1}-\AA^{-1}}\Norm{\tilde{\pi}^{-1}_{\odot}} + \Norm{\eCBA}\Norm{\tilde{\AA}^{-1}}\Norm{\tilde{\pi}_{\odot}^{-1}-\pi_{\odot}^{-1}}\\
&\leq\zeta^{-42k^2(\log k + 1)-14k^2-20k-16}\pi_{\min}^{-2\log k -6}\eps.
\end{align*}
\end{proof}

\begin{lemma}
Algorithm~\ref{alg} will compute $\tilde{\mm}_i$ satisfying $\Normi{\tilde{\mm}_i - \mm_i} \leq \zeta^{-42k^2(\log k + 1)-14k^2-20k-19}\pi_{\min}^{-2\log k -8}\eps$ for all $i$. 
\end{lemma}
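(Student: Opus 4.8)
The plan is a routine error-propagation argument that reuses the operator-norm bounds already assembled for $\tilde\mm_1$, $\tilde\pi$, $\tilde\AA$, $\tilde\BB$, and $\tilde\BB'$. The case $i=1$ is already handled by Corollary~\ref{cor:applypowerdistribution}, whose bound is dominated by the one claimed here, so I only need to treat $i\ge 2$. Recall that the algorithm recovers such a row in one of two ways: for $i$ outside the support of $\calA$ it uses the identity $\mm_i = (\E[X_iX_{A_1}],\dotsc,\E[X_iX_{A_k}])(\AA^\tpose)^{-1}\pi_\odot^{-1}$, and for $i$ outside the support of $\calB$ it uses the same identity with $\calB$ in place of $\calA$; since the two supports are disjoint $\zeta$-separated blocks that between them omit no index, every $\mm_i$ with $i\ge 2$ is produced by (at least) one of the two formulas. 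Each is a product of three factors: a vector $w_i$ of multilinear moments (of products of Bernoullis, so $\Norm{w_i}\le k\le\zeta^{-1}$, and the empirical version $\tilde w_i$ is within $\zeta^{-1}\eps$ of it), the inverse of $\AA^\tpose$ or $\BB^\tpose$, and $\pi_\odot^{-1}$.

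Next I would apply the three-factor perturbation inequality $\Norm{\tilde a\tilde b\tilde c-abc}\le\Norm{\tilde a-a}\Norm{\tilde b}\Norm{\tilde c}+\Norm{a}\Norm{\tilde b-b}\Norm{\tilde c}+\Norm{a}\Norm{b}\Norm{\tilde c-c}$ with $a=w_i$, $b=(\AA^\tpose)^{-1}$ or $(\BB^\tpose)^{-1}$, $c=\pi_\odot^{-1}$. The inputs are all in hand: $\Norm{\tilde w_i-w_i}\le\zeta^{-1}\eps$; for the middle factor, Lemma~\ref{lem:op norm for inv and perturb inv} gives $\Norm{\tilde\AA^{-1}-\AA^{-1}}\le\Norm{\tilde\AA^{-1}}\Norm{\AA^{-1}}\Norm{\tilde\AA-\AA}$ (likewise for $\BB$), which I bound using Corollary~\ref{cor:bound ABB' inv}, the bound $\Norm{\tilde\AA^{-1}},\Norm{\tilde\BB^{-1}}\le\zeta^{-7k^2}$ established above, and the preceding lemmas bounding $\Norm{\tilde\AA-\AA}$ and $\Norm{\tilde\BB-\BB}$; and for the last factor, $\Norm{\pi_\odot^{-1}}\le\pi_{\min}^{-1}$, $\Norm{\tilde\pi_\odot^{-1}}\le 2\pi_{\min}^{-1}$ (since $\Norm{\tilde\pi-\pi}$ is far below $\pi_{\min}$), and the already-derived bound on $\Norm{\tilde\pi_\odot^{-1}-\pi_\odot^{-1}}$. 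Multiplying out, the dominant contribution is the term carrying $\Norm{\tilde\BB-\BB}$ (which already contains the $14k^2$ exponent), and collecting the $\zeta$- and $\pi_{\min}$-powers yields the stated bound. If one wants the tightest constants it is cleaner to substitute $\tilde\BB=\eCBA(\tilde\AA^\tpose)^{-1}\tilde\pi_\odot^{-1}$ into the $\calB$-formula, collapsing it to $\tilde\mm_i=\tilde w_i(\eCBA^\tpose)^{-1}\tilde\AA^\tpose$, which lets one invoke the sharper bound on $\Norm{\eCBA^{-1}-\CBA^{-1}}$ from Lemma~\ref{lem:operator norms for C} and Corollary~\ref{cor:CBA inv norm} rather than re-inverting $\tilde\BB$.

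I do not expect a real obstacle: this is the last link in a chain of operator-norm perturbation estimates, and the only points requiring care are (a) checking that the rows reachable via $\calA$ together with those reachable via $\calB$ exhaust $[n]$, so that no $\mm_i$ is left uncomputed, and (b) keeping the bookkeeping of exponents honest, since every inversion step multiplies in another condition-number factor of size $\zeta^{-O(k^2)}$ or $\pi_{\min}^{-O(1)}$. Everything needed — the conditioning of $\AA,\BB,\CBA$ from Lemma~\ref{lem:M subsets singular values} and its corollaries, the accuracies of $\tilde\AA,\tilde\BB,\tilde\pi$, and the trivial $\zeta^{-1}$ bounds on moment vectors — has already been established.
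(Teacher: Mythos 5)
Your proposal is correct and follows essentially the same route as the paper: the paper's proof also writes $\tilde{\mm}_i$ as a three-factor product (moment vector, inverted matrix, $\tilde\pi_\odot^{-1}$), applies the same triangle-inequality decomposition, notes that the term carrying the perturbation of $\tilde\BB^{-1}$ (the worst of the two cases $\tilde\AA$ vs.\ $\tilde\BB$) dominates, and feeds in the previously established bounds on $\Norm{\tilde\BB-\BB}$, $\Norm{\tilde\BB^{-1}}$, and $\Norm{\tilde\pi_\odot^{-1}-\pi_\odot^{-1}}$. Your closing remarks on exhausting all rows via the two formulas and on the optional collapse through $\eCBA$ are consistent with, though not needed beyond, what the paper does.
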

\begin{proof}
We'll compute the bound using the inversion of $\tilde{\BB}$ since this will give us the worst case.  Let $\tilde{y} = (\emom(B_1\cup \Set{i}),\dotsc,\emom(B_k\cup\Set{1})$ and let $y = (\mom(B_1\cup \Set{i}),\dotsc,\mom(B_k\cup\Set{1})$. We note that $\Norm{\tilde{y} - y} \leq \zeta^{-1} \zeta^{-1}\eps$ by assumption, and $\Norm{\tilde{y}} \leq \zeta^{-2}$. Then 
\[  \Norm{\tilde{y}-y}\Norm{\tilde{\BB}^{-1}}\Norm{\tilde{\pi}_{\odot}^{-1}} \leq \zeta^{-7k^2-2}\pi_{\min}^{-2}\eps, \] 
\[ \Norm{\tilde{y}}\Norm{\tilde{\BB}^{-1}-\BB^{-1}}\Norm{\tilde{\pi}_{\odot}^{-1}} \leq \zeta^{-42k^2(\log k + 1)-14k^2-20k-19}\pi_{\min}^{-2\log k -8}\eps,\] and 
\[ \Norm{\tilde{y}}\Norm{\tilde{\BB}^{-1}}\Norm{\tilde{\pi}_{\odot}^{-1} - \pi_{\odot}^{-1}} \leq \zeta^{-42k^2(\log k + 1)-16k-7k^2-3}\pi_{\min}^{-2\log k -5}\eps^2. \] Of the three terms, the middle one clearly dominates so that we get 
\begin{align*}
\Norm{\tilde{\mm}_i - \mm_i}
&= \Norm{\tilde{y}\tilde{\BB}^{-1}\tilde{\pi}_{\odot}^{-1} - y\BB^{-1}\pi_{\odot}^{-1}}\\
&\leq \Norm{\tilde{y}-y}\Norm{\tilde{\BB}^{-1}}\Norm{\tilde{\pi}_{\odot}^{-1}} + \Norm{\tilde{y}}\Norm{\tilde{\BB}^{-1}-\BB^{-1}}\Norm{\tilde{\pi}_{\odot}^{-1}} + \Norm{\tilde{y}}\Norm{\tilde{\BB}^{-1}}\Norm{\tilde{\pi}_{\odot}^{-1} - \pi_{\odot}^{-1}}\\
&\leq \zeta^{-42k^2(\log k + 1)-14k^2-20k-19}\pi_{\min}^{-2\log k -8}\eps
\end{align*}

\end{proof}

\section{The core stability bound} \label{sec:condition}
Recall we have set $\beta \coloneqq (\zeta/2)^{k-1}/3k^2 \geq \zeta^{2k}2^{-k}\geq \zeta^{3k}$. 

\begin{theorem} \label{core-stab} 
Let $S$ be a set of $k-1$ $\zeta$-separated vectors $\mm_1,\ldots,\mm_{k-1}$. Then there exists a set 
$J\subseteq 2^S$, $|J|=k$, such that 
 $\sigma_k(\MM[J]) \geq \beta^k2^{-3k/2}k^{-3/2}$ and $\sigma_{\max}(\MM[J]) \leq k$. The first row of $\MM[J]$ is $\One$ (corresponding to $\emptyset \in J$). \end{theorem}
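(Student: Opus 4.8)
The bound $\sigma_{\max}(\MM[J])\le k$ is immediate and does not depend on which $k$-subset $J$ is chosen: every entry of $\MM$ is a product of conditional probabilities and so lies in $[0,1]$, hence each row of the $k\times k$ matrix $\MM[J]$ has Euclidean norm at most $\sqrt k$ and $\sigma_{\max}(\MM[J])\le\Norm{\MM[J]}_F\le k$. All the work is the lower bound on $\sigma_k$. The plan is to first lower-bound the smallest singular value of the \emph{full} matrix $Q:=\MM[2^S]\in\R^{2^{k-1}\times k}$, whose rows are indexed by $A\subseteq S$ (writing $S=\{1,\dots,k-1\}$) with $Q_{A,j}=\prod_{i\in A}\mm_{ij}$, and then to extract a $k\times k$ submatrix of comparable conditioning that still contains the $\emptyset$-row.

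For $Q$ I would exhibit an explicit left inverse of small norm. For each $j_0\in[k]$ pick an arbitrary bijection $\rho_{j_0}\colon[k-1]\to[k]\setminus\{j_0\}$ and set $P_{j_0}(x_1,\dots,x_{k-1})=\prod_{\ell=1}^{k-1}(x_\ell-\mm_{\ell,\rho_{j_0}(\ell)})/(\mm_{\ell,j_0}-\mm_{\ell,\rho_{j_0}(\ell)})$. This is multilinear (one affine factor per variable), so its coefficient vector $\psi_{j_0}\in\R^{2^{k-1}}$ is indexed by $A\subseteq S$, with $\psi_{j_0,A}=(-1)^{k-1-|A|}\big(\prod_{\ell\notin A}\mm_{\ell,\rho_{j_0}(\ell)}\big)/\prod_{\ell}(\mm_{\ell,j_0}-\mm_{\ell,\rho_{j_0}(\ell)})$. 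Each denominator factor is a difference of two distinct entries of the $\zeta$-separated vector $\mm_\ell$, hence $\ge\zeta$ in absolute value, and each numerator factor lies in $[0,1]$; therefore $|\psi_{j_0,A}|\le\zeta^{-(k-1)}$ and $\Norm{\psi_{j_0}}_2\le 2^{(k-1)/2}\zeta^{-(k-1)}$. Since each $\mm_\ell$ has distinct entries, $P_{j_0}$ evaluated at the point $(\mm_{1j},\dots,\mm_{k-1,j})$ equals $\delta_{j_0 j}$ for every $j\in[k]$ (for $j\ne j_0$ the factor $\ell=\rho_{j_0}^{-1}(j)$ vanishes; for $j=j_0$ every factor is $1$). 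Thus the $k\times 2^{k-1}$ matrix $L$ with rows $\psi_{j_0}$ satisfies $LQ=I_k$, so $\rank Q=k$ and $\sigma_k(Q)\ge\Norm{L}^{-1}\ge\Norm{L}_F^{-1}\ge \zeta^{k-1}2^{-(k-1)/2}k^{-1/2}$.

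To pass to a submatrix keeping $\emptyset$, let $J\ni\emptyset$ be the $k$-subset of rows of $Q$ maximizing $|\det\MM[J]|$ among all $k$-subsets containing $\emptyset$ (this is positive, since $\rank Q=k$ and $\One$ is a row of $Q$). Put $C:=Q\,\MM[J]^{-1}\in\R^{2^{k-1}\times k}$. The rows of $C$ indexed by $J$ form $I_k$, so $C$ has full column rank and $\mathrm{colspan}(Q)\subseteq\mathrm{colspan}(C)$; hence $\MM[J]=C^{+}Q$, and since $C^{+}$ shrinks norms of vectors in $\mathrm{colspan}(C)$ by at most $\Norm{C}$, we get $\sigma_k(\MM[J])\ge\sigma_k(Q)/\Norm{C}$. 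By the standard Cramer's-rule exchange argument, for a row $m$ of $Q$ not in $J$ its coordinates along the non-$\One$ rows of $\MM[J]$ are each bounded by $1$ (else swapping $m$ in would increase $|\det|$ while keeping $\emptyset\in J$), and its coordinate along $\One$ is bounded by $k$ since all entries involved lie in $[0,1]$; hence every row of $C$ has norm at most $k+1$ and $\Norm{C}\le\Norm{C}_F\le(k+1)2^{(k-1)/2}$. Combining, $\sigma_k(\MM[J])\ge \zeta^{k-1}2^{-(k-1)}k^{-1/2}(k+1)^{-1}$, and since $\beta=(\zeta/2)^{k-1}/3k^3$ gives $\beta^k\le(\zeta/2)^{k-1}$, this comfortably exceeds the claimed $\beta^k2^{-3k/2}k^{-3/2}$. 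Together with $\sigma_{\max}(\MM[J])\le k$ and $\emptyset\in J$ this proves the theorem.

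The crux — and the only place the $\zeta$-separation hypothesis is used — is the construction and coefficient bound for the interpolation polynomials $P_{j_0}$; once these dual certificates are in hand the remainder is routine linear algebra with a large numerical margin, so I would not track constants. (A self-contained alternative that also suffices but yields only $\sigma_k\ge\zeta^{O(k^2)}$ is to grow $J$ one vector at a time: if $W$ is the current row space with $\One\in W$ and $\dim W<k$, then $W$ is not a sum of eigenspaces of $\diag(\mm_i)$, so $\diag(\mm_i)W\ne W$ and some already-chosen row $\MM_T$ has $\mm_i\odot\MM_T\notin W$; a Davis--Kahan estimate bounds its distance from $W$ from below, and one iterates $k-1$ times. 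I prefer the dual-certificate route above, being both shorter and sharper.)
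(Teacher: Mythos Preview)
Your proof is correct and takes a genuinely different route from the paper's. The paper lower-bounds $\sigma_k(\MM[2^S])$ by an iterative subspace-growing argument: starting from $\One$, it shows (via univariate Lagrange interpolation and a projection lemma) that Hadamard multiplication by any $\zeta$-separated $\mm_\ell$ sends some unit vector of the current span at distance $\ge\beta$ from that span; iterating $k-1$ times yields coefficients bounded by $\beta^{-k}$ and hence $\sigma_k(\MM[2^S])\ge\beta^k 2^{-k}/k\sim\zeta^{\Theta(k^2)}$. It then quotes a column-selection lemma of Feldman--O'Donnell--Servedio to pass to a $k\times k$ submatrix. Your approach instead writes down an explicit left inverse of $\MM[2^S]$ as a product of $k-1$ affine factors---one per variable, each killing one of the $k-1$ ``wrong'' columns via a bijection $\rho_{j_0}$---which immediately gives $\sigma_k(\MM[2^S])\ge\zeta^{k-1}2^{-(k-1)/2}k^{-1/2}\sim\zeta^{\Theta(k)}$, an exponentially better exponent. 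For the submatrix you use a constrained max-determinant choice together with Cramer's rule, which has the virtue of transparently forcing $\emptyset\in J$ (the paper's invocation of the FOS lemma does not address this point explicitly). Both arguments use $\zeta$-separation only to bound interpolation denominators; yours is shorter, sharper, and self-contained, while the paper's iterative scheme is essentially the ``alternative'' you sketch at the end and is perhaps more adaptable to settings where one cannot peel off one clean factor per variable. If propagated through the rest of the analysis, your improved $\sigma_k$ bound would tighten the paper's accuracy requirement from $\zeta^{O(k^2\log k)}$ toward $\zeta^{O(k\log k)}$.
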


We start with some definitions. 
\begin{definition}
For any subspace $U\subseteq \R^n$, let $\Proj{U}$ denote the orthogonal projection onto $U$ and let $\ProjPerp{U}$ denote the orthogonal projection onto the orthogonal complement subspace. Then $I = \Proj{U} + \ProjPerp{U}$. 
\end{definition}
In the following, $\Norm{\cdot}$ will always denote the $2\to 2$ operator norm of a matrix. 
\begin{definition}[$U$-operator norm]
Given a subspace $U\subseteq \R^n$, we can define the $U$-operator norm of a matrix $C \in \R^{m\times n}$, denoted $\Norm{C}_{U}$, as follows: \[\Norm{C}_U \coloneqq \max_{0\neq u\in U}\Norm{Cu}/\Norm{u}.\]
\end{definition}

\paragraph{Bounding the condition number of $\MM[2^S]$.}
Most of our work will go into lower bounding the $k$'th singular value of $\MM[2^S]$. This is where the $\zeta$-separation condition is essential.
(Even the non-quantitative result that $\MM[2^S]$ has full column rank is not trivial; indeed, the result of~\cite{TahmasebiMM18} that sources can be identified from $2k-1$ separated bits, is implied by the non-quantitative version of this section, omitted here, which employs the same approach but is considerably shorter.)

\begin{lemma}\label{subadd} Let $U$ be any subspace of $\R^n$ and let $C \in \R^{n\times n}$. For all $n \geq 1$, $\Norm{\ProjPerp{U} C^n}_U
\leq n \|C\|^{n-1} \|\ProjPerp{U} C \|_U $.
 \end{lemma}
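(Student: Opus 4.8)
The plan is to prove the inequality by induction on the exponent (which I will call $n$, matching the statement, although it plays no role as the ambient dimension). The base case $n=1$ is the identity $\Norm{\ProjPerp{U}C}_U = 1\cdot\Norm{C}^{0}\cdot\Norm{\ProjPerp{U}C}_U$, so there is nothing to prove. For the inductive step I would fix a unit vector $u\in U$ and insert the resolution of the identity $I=\Proj{U}+\ProjPerp{U}$ immediately after the first copy of $C$: writing $C^{n}u = C^{n-1}\Proj{U}(Cu) + C^{n-1}\ProjPerp{U}(Cu)$ and then applying $\ProjPerp{U}$ gives
\[
\ProjPerp{U}C^{n}u \;=\; \ProjPerp{U}C^{n-1}\,\Proj{U}(Cu)\;+\;\ProjPerp{U}C^{n-1}\,\ProjPerp{U}(Cu).
\]

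Now I would bound the two summands separately. For the first term, $\Proj{U}(Cu)$ lies in $U$, so it is governed by the $U$-operator norm: $\Norm{\ProjPerp{U}C^{n-1}\Proj{U}(Cu)} \le \Norm{\ProjPerp{U}C^{n-1}}_U \Norm{\Proj{U}(Cu)} \le \Norm{\ProjPerp{U}C^{n-1}}_U \Norm{C}$, and by the induction hypothesis $\Norm{\ProjPerp{U}C^{n-1}}_U \le (n-1)\Norm{C}^{n-2}\Norm{\ProjPerp{U}C}_U$, giving a bound of $(n-1)\Norm{C}^{n-1}\Norm{\ProjPerp{U}C}_U$ on the first term. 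For the second term, $\ProjPerp{U}(Cu)$ lies in $U^{\perp}$, where the $U$-operator norm gives no information, so I instead pass to the ordinary operator norm: $\Norm{\ProjPerp{U}C^{n-1}\ProjPerp{U}(Cu)} \le \Norm{C^{n-1}}\Norm{\ProjPerp{U}Cu} \le \Norm{C}^{n-1}\Norm{\ProjPerp{U}C}_U$, using submultiplicativity, $\Norm{\ProjPerp{U}}\le 1$, and $u\in U$. Adding the two estimates yields $\Norm{\ProjPerp{U}C^{n}u} \le n\Norm{C}^{n-1}\Norm{\ProjPerp{U}C}_U$, and taking the supremum over unit $u\in U$ closes the induction.

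Equivalently, writing $a_n \coloneqq \Norm{\ProjPerp{U}C^{n}}_U$, the computation above is exactly the recursion $a_n \le \Norm{C}\,a_{n-1} + \Norm{C}^{n-1}a_1$; dividing by $\Norm{C}^{n}$ (the case $\Norm{C}=0$ being trivial, as then every $a_n$ vanishes) turns this into $b_n \le b_{n-1}+b_1$ for $b_n \coloneqq a_n/\Norm{C}^{n}$, whence $b_n \le n b_1$, i.e.\ $a_n \le n\Norm{C}^{n-1}a_1$. I do not expect a genuine obstacle; the single point requiring care is the bookkeeping of which norm applies to each of the two pieces, since projecting onto $U^{\perp}$ forfeits access to the $U$-operator norm and forces the lossier (but still adequate) estimate $\Norm{\ProjPerp{U}C^{n-1}}\le\Norm{C}^{n-1}$ on that summand — this is precisely the source of the linear factor $n$.
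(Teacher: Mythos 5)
Your proof is correct and is essentially the paper's argument: both proceed by induction, inserting $I=\Proj{U}+\ProjPerp{U}$ into $C^n$ to get two terms, one controlled by the $U$-operator norm and one by the crude bound $\Norm{C^{n-1}}\le\Norm{C}^{n-1}$, yielding the identical recursion $a_n\le\Norm{C}a_{n-1}+\Norm{C}^{n-1}a_1$. The only (immaterial) difference is that the paper inserts the identity after the leftmost factor, writing $C^n=C(\Proj{U}+\ProjPerp{U})C^{n-1}$ and applying the induction hypothesis to the $\ProjPerp{U}C^{n-1}u$ piece, whereas you insert it before the rightmost factor and apply the hypothesis to $\Norm{\ProjPerp{U}C^{n-1}}_U$ acting on $\Proj{U}(Cu)$.
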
 \begin{proof}  
 We need to show that
 $\forall n\geq 2 \; \max_{0\neq u\in U} \Norm{\ProjPerp{U} C^n u} / \Norm{u}
\leq n \Norm{C}^{n-1} \max_{0\neq u\in U} \Norm{\ProjPerp{U} C u} / \Norm{u}$.
Using that $\Norm{\Proj{U}},\Norm{\ProjPerp{U}}\leq 1$, we have \[\Norm{\ProjPerp{U} C^n u} =
\Norm{\ProjPerp{U} C (\Proj{U}+\ProjPerp{U}) C^{n-1} u} \leq
\Norm{\ProjPerp{U} C \Proj{U} C^{n-1} u} + \Norm{\ProjPerp{U} C \ProjPerp{U} C^{n-1} u} .\]
So (using $\Norm{C^{n-1}}\leq \Norm{C}^{n-1}$
and $\Norm{\Proj{U}}\leq 1$
 in the first term,
and $\Norm{\ProjPerp{U}}\leq 1$ in the second term)
\[\max_{0\neq u\in U} \Norm{\ProjPerp{U} C^n u} / \Norm{u} \leq
\Norm{C}^{n-1} \max_{0\neq u\in U} \Norm{\ProjPerp{U} C u} / \Norm{u} 
+ \Norm{C} \max_{0\neq u \in U} \Norm{\ProjPerp{U} C^{n-1} u}/ \Norm{u}
\]
and applying induction this is \[ \leq (\Norm{C}^{n-1} + (n-1)\Norm{C}^{n-1}) \max_{0\neq u\in U} \Norm{\ProjPerp{U} C u} / \Norm{u} . \qedhere \]
\end{proof}

Recall from Section~\ref{sec:alg} that the coefficient norms of the interpolation polynomials are
$\Norm{p_{v,i}}_c=\sum_0^{k-1} j \Abs{p_{v,i,j}}$.

\begin{lemma} $\Norm{\ProjPerp{U} P_{(i)}}_U 
\leq \Norm{p_{v,i}}_c  \cdot \Norm{\ProjPerp{U} v_\odot}_U $.
 \label{odot-to-projection}
\end{lemma}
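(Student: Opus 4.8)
The plan is to write $P_{(i)}$ as a polynomial in $v_\odot$ and control each power separately using Lemma~\ref{subadd}. Expanding $p_{v,i}(x)=\sum_{j=0}^{k-1}p_{v,i,j}x^j$, the matrix identity $p_{v,i}(v_\odot)=P_{(i)}$ reads $P_{(i)}=\sum_{j=0}^{k-1}p_{v,i,j}(v_\odot)^j$ with $(v_\odot)^0=I$. The first point to record is that the degree-$0$ term disappears in the quantity we care about: for any $u\in U$ we have $\ProjPerp{U}u=0$, hence $\ProjPerp{U}\big(p_{v,i,0}I\big)u=0$. This is precisely why the coefficient norm $\Norm{p_{v,i}}_c$ carries a weight of $j$ and omits the $j=0$ coefficient. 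So for $u\in U$ we get $\ProjPerp{U}P_{(i)}u=\sum_{j=1}^{k-1}p_{v,i,j}\,\ProjPerp{U}(v_\odot)^j u$, and the triangle inequality gives $\Norm{\ProjPerp{U}P_{(i)}u}\le\sum_{j=1}^{k-1}|p_{v,i,j}|\cdot\Norm{\ProjPerp{U}(v_\odot)^j u}$.

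Next, for each $j\ge 1$ apply Lemma~\ref{subadd} with $C=v_\odot$ and exponent $j$, obtaining $\Norm{\ProjPerp{U}(v_\odot)^j}_U\le j\,\Norm{v_\odot}^{j-1}\,\Norm{\ProjPerp{U}v_\odot}_U$. Since $v_\odot=\diag(v)$ and the entries of $v$ lie in $[0,1]$ (being conditional probabilities), $\Norm{v_\odot}=\max_j|v_j|\le 1$, so $\Norm{v_\odot}^{j-1}\le 1$ and $\Norm{\ProjPerp{U}(v_\odot)^j}_U\le j\,\Norm{\ProjPerp{U}v_\odot}_U$. Combining this with $\Norm{\ProjPerp{U}(v_\odot)^j u}\le\Norm{\ProjPerp{U}(v_\odot)^j}_U\,\Norm{u}$ for $u\in U$, we get $\Norm{\ProjPerp{U}P_{(i)}u}\le\big(\sum_{j=1}^{k-1}j\,|p_{v,i,j}|\big)\,\Norm{\ProjPerp{U}v_\odot}_U\,\Norm{u}=\Norm{p_{v,i}}_c\,\Norm{\ProjPerp{U}v_\odot}_U\,\Norm{u}$. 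Dividing by $\Norm{u}$ and taking the maximum over $0\neq u\in U$ yields $\Norm{\ProjPerp{U}P_{(i)}}_U\le\Norm{p_{v,i}}_c\cdot\Norm{\ProjPerp{U}v_\odot}_U$, as claimed.

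There is no genuine obstacle once Lemma~\ref{subadd} is available; the argument is a term-by-term application of the triangle inequality. The two things that require attention are (a) observing that the constant term of $p_{v,i}$ is annihilated by $\ProjPerp{U}$ on $U$, which is what makes the weight $j$ (rather than $1$) on $|p_{v,i,j}|$ exactly the right bookkeeping to match the factor of $n$ produced by Lemma~\ref{subadd}, and (b) using $\Norm{v_\odot}\le 1$ so that the $\Norm{C}^{j-1}$ factors from Lemma~\ref{subadd} are harmless. More generally, if one only had $\Norm{v_\odot}\le M$, the identical computation would give the bound with $\Norm{p_{v,i}}_c$ replaced by $\sum_{j}j\,M^{j-1}\,|p_{v,i,j}|$.
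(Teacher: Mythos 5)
Your proof is correct and follows essentially the same route as the paper: expand $P_{(i)}=p_{v,i}(v_\odot)$, apply the triangle inequality term by term, and invoke Lemma~\ref{subadd} with $\Norm{v_\odot}\le 1$ to pick up the factor $j$ that matches the coefficient norm. Your explicit remark that the constant term is annihilated by $\ProjPerp{U}$ on $U$ is left implicit in the paper (it is absorbed by the weight $j$ in $\Norm{\cdot}_c$), but it is the same argument.
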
\begin{proof} We are to show that
$\max_{0 \neq u \in U} \Norm{\ProjPerp{U} P_{(i)} u} / \Norm{u} \leq \Norm{p_{v,i}}_c \max_{0 \neq u \in U} \Norm{\ProjPerp{U} v_\odot  u}/ \Norm{u} $. We have
\[ \max_u \Norm{\ProjPerp{U} P_{(i)} u} / \Norm{u} =
\max_u \Norm{\ProjPerp{U} p_{v,i}(v_\odot) u}/ \Norm{u} \leq 
\sum_j \max_u \Abs{p_{v,i,j}} \cdot \Norm{\ProjPerp{U} (v_\odot)^j u}/ \Norm{u}. \] Note that $\Norm{v_\odot}\leq 1$. So applying Lemma~\ref{subadd}: 
\[\max_{0 \neq u \in U} \Norm{\ProjPerp{U} P_{(i)} u} / \Norm{u}
\leq \sum_j \Abs{p_{v,i,j}} j \max_{0 \neq u \in U}  \Norm{\ProjPerp{U} v_\odot u}/ \Norm{u} 
=\Norm{p_{v,i}}_c \max_{0 \neq u \in U} \Norm{\ProjPerp{U} v_\odot u}/ \Norm{u} . \qedhere \] \end{proof}

\begin{lemma} Let $\One \in U \subsetneq \rset^k$ (strict containment). Then there is an $i$ s.t.\
$\|\ProjPerp{U} P_{(i)} \One/\sqrt{k}\| > 1/3k^2$. \label{PiGrowth}
\end{lemma}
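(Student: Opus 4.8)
The goal is to show that for a subspace $U$ with $\One \in U \subsetneq \rset^k$, some coordinate projection $P_{(i)}$ moves $\One/\sqrt k$ a nonnegligible amount out of $U$: namely $\|\ProjPerp{U} P_{(i)} \One/\sqrt k\| > 1/(3k^2)$.

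\medskip

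\textbf{Plan.} Since $U$ is strictly contained in $\rset^k$, its orthogonal complement contains a nonzero vector; pick $w \in U^\perp$ with $\|w\|=1$. The quantity $\|\ProjPerp{U} P_{(i)} \One/\sqrt k\|$ is at least $|\langle w, P_{(i)} \One/\sqrt k\rangle| = |w_i|/\sqrt k$, because $P_{(i)}$ is the projection onto the $i$th coordinate axis, so $P_{(i)}\One = e_i$ and $\langle w, e_i\rangle = w_i$. Hence it suffices to find a coordinate $i$ with $|w_i|/\sqrt k > 1/(3k^2)$, i.e.\ $|w_i| > 1/(3k^{3/2})$. Since $\|w\|=1$, the largest coordinate satisfies $\max_i |w_i| \ge 1/\sqrt k$, which is comfortably larger than $1/(3k^{3/2})$ for all $k \ge 1$. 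So the bound follows immediately from $\max_i|w_i|\ge 1/\sqrt k$ and $1/\sqrt k > 1/(3k^{3/2})$.

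\medskip

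\textbf{Caveat on the use of $\One \in U$.} The argument above as stated only used $U \subsetneq \rset^k$, not $\One \in U$. If the intended statement is actually stronger — e.g.\ that the $i$ can be chosen so the projection is nonzero in a direction transverse to $\One$, or the displayed constant is meant to reflect that one must avoid the coordinate(s) aligned with $\One$ — then I would instead pick $w \in U^\perp$ with $w \ne 0$ (automatically $w \perp \One$, so $\sum_i w_i = 0$, which means $w$ has both positive and negative entries and in particular at least two nonzero coordinates). Normalizing $\|w\|=1$, again $\max_i|w_i| \ge 1/\sqrt k$, and the same chain $\|\ProjPerp{U}P_{(i)}\One/\sqrt k\| \ge |w_i|/\sqrt k \ge 1/k > 1/(3k^2)$ closes it. Either way the role of $\One \in U$ is (at most) to guarantee $w\perp\One$, which is not even needed for the stated inequality.

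\medskip

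\textbf{Main obstacle.} There is essentially no analytic obstacle here — the inequality $1/k > 1/(3k^2)$ (equivalently $1/\sqrt k > 1/(3k^{3/2})$) holds for every $k\ge 1$ with huge slack, so the only thing to get right is bookkeeping: confirm $P_{(i)}\One = e_i$ under the paper's convention ($P_{(i)}$ is the matrix with a single $1$ in entry $(i,i)$, acting on row vectors as $\One \cdot P_{(i)}$, or on column vectors — check which side), confirm $\ProjPerp{U}$ is $1$-Lipschitz so $\|\ProjPerp{U} x\| \ge |\langle w, x\rangle|$ for unit $w\in U^\perp$, and pick $i = \argmax_i |w_i|$. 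The only subtlety worth a sentence is why $U^\perp \neq \{0\}$: that is exactly the hypothesis $U \subsetneq \rset^k$.
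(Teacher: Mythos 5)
Your proof is correct, and it takes a genuinely different---and more elementary---route than the paper's. The paper argues by contradiction: assuming $\|\ProjPerp{U} P_{(i)} \One/\sqrt{k}\| \le 1/3k^2$ for every $i$, it shows that the $k$ vectors $\Proj{U}(e_i/\sqrt{k})$ have norms close to $1/\sqrt{k}$ and pairwise inner products of order $1/k^2$, so their Gram matrix is strictly diagonally dominant and nonsingular; this yields $k$ linearly independent vectors in $U$, contradicting $U \subsetneq \rset^k$. The constant $1/3k^2$ is precisely what makes that diagonal-dominance count close. You instead exhibit a direct witness: a unit vector $w \in U^\perp$ (which exists exactly because $U \subsetneq \rset^k$) gives $\|\ProjPerp{U} e_i/\sqrt{k}\| \ge |\langle w, e_i\rangle|/\sqrt{k} = |w_i|/\sqrt{k}$, and since $\|w\|=1$ some coordinate satisfies $|w_i| \ge 1/\sqrt{k}$, yielding the bound $1/k$---strictly stronger than the claimed $1/3k^2$. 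Your caveat about the hypothesis $\One \in U$ is also on target: it is not needed for the displayed inequality (the paper's own proof does not use it essentially either); it matters only downstream, in Lemma~\ref{Ugrowth}, where the conclusion is converted into a lower bound on the $U$-operator norm $\Norm{\ProjPerp{U} P_{(i)}}_U$, and for that the test vector $\One/\sqrt{k}$ must lie in $U$. Your sharper constant would propagate to a marginally better $\beta$ but does not change any asymptotics in the paper.
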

\begin{proof}
Let $\ep=1/3k^2$ and suppose to the contrary that $\|\ProjPerp{U} P_{(i)}\One/\sqrt{k}\|\leq \ep$ for all $i$.

Fix some $u\in U$ and let $v_i=P_{(i)} u$ and $u_i=\Proj{U} v_i = \Proj{U} P_{(i)} u$. Then 
since $\sum P_{(i)}=I$, $\sum u_i= \Proj{U} \sum P_{(i)}u=u$.
 
Taking $u=\One/\sqrt{k} \in U$, we find $\Norm{v_i}^2 \geq 1/k$ for all $1\leq i \leq k$. (On the other hand of course $\Norm{v_i} \leq 1$.) By assumption $\ep \geq \Norm{\ProjPerp{U} P_{(i)} \One/\sqrt{k} } = \Norm{\ProjPerp{U} v_i} = \Norm{\Proj{U} v_i - v_i}$. So $\Norm{\Proj{U}v_i}\geq 1/\sqrt{k}-\ep$. For $i \neq j$,
 $v_i^*v_j = 0$, so $\Abs{(\Proj{U} v_i)^*(\Proj{U} v_j)}
=\Abs{((\Proj{U} v_i-v_i) + v_i)^*((\Proj{U} v_j-v_j) + v_j)} \leq \Abs{v_i^*v_j} + 2 \ep+\ep^2=2\ep + \ep^2$. So we have $k$ vectors $\Proj{U} v_i$ in $U$, each of length $\geq 1/\sqrt{k}-\ep$, with inner products $\leq 2\ep+\ep^2$ in absolute value. Their Gram matrix has diagonal entries $\geq  (1/\sqrt{k}-\ep)^2\geq 1/k - 2\ep +\ep^2$ and off-diagonal entries $\leq 2\ep+\ep^2$ in absolute value. Since $\ep= 1/3k^2$, this matrix is strictly diagonally dominant and therefore nonsingular, contradicting the assumption that $U$ has dimension $<k$.
\end{proof}

\begin{lemma}  Let $\One \in U \subset \rset^k$. Then
$\|\ProjPerp{U} v_\odot \|_U > \beta$. \label{Ugrowth} \end{lemma}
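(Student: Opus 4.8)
The plan is to reduce the claim to Lemma~\ref{PiGrowth} via Lemma~\ref{odot-to-projection}, tracking only one thing carefully: the coefficient norms $\Norm{p_{v,i}}_c$ of the Lagrange interpolation polynomials for a $\zeta$-separated vector $v$. By Lemma~\ref{PiGrowth}, since $\One\in U\subsetneq\rset^k$ there is an index $i$ with $\Norm{\ProjPerp{U}P_{(i)}\One/\sqrt k}>1/3k^2$; since $\Norm{\One/\sqrt k}=1$ this gives $\Norm{\ProjPerp{U}P_{(i)}}_U>1/3k^2$. Lemma~\ref{odot-to-projection} then yields
\[
\frac{1}{3k^2} < \Norm{\ProjPerp{U}P_{(i)}}_U \leq \Norm{p_{v,i}}_c\cdot\Norm{\ProjPerp{U}v_\odot}_U,
\]
so $\Norm{\ProjPerp{U}v_\odot}_U > \frac{1}{3k^2\Norm{p_{v,i}}_c}$. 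It therefore suffices to show $\Norm{p_{v,i}}_c \leq (2/\zeta)^{k-1}/k$, since then $\frac{1}{3k^2\Norm{p_{v,i}}_c}\geq \frac{(\zeta/2)^{k-1}}{3k^3} = \beta$, giving the desired strict inequality.

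The bound on $\Norm{p_{v,i}}_c$ is the core estimate, and I expect it to be the main obstacle (though a routine one). Recall $p_{v,i}(x) = \prod_{j\neq i}\frac{x-v_j}{v_i-v_j}$. The denominator has absolute value $\prod_{j\neq i}\Abs{v_i-v_j}\geq \zeta^{k-1}$ by $\zeta$-separation. For the numerator $\prod_{j\neq i}(x-v_j)$, expanding gives a degree-$(k-1)$ polynomial whose coefficients are signed elementary symmetric functions of the $v_j\in[0,1]$; the sum of the absolute values of its coefficients is $\prod_{j\neq i}(1+\Abs{v_j})\leq 2^{k-1}$. Hence $\sum_{j}\Abs{p_{v,i,j}}\leq (2/\zeta)^{k-1}$. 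Since $\Norm{p_{v,i}}_c = \sum_{j=0}^{k-1} j\Abs{p_{v,i,j}} \leq (k-1)\sum_j\Abs{p_{v,i,j}} \leq (k-1)(2/\zeta)^{k-1}$, I get a bound with an extra factor of $k-1$ rather than a savings of $1/k$. I would fix this by being slightly more careful: the coefficient $p_{v,i,k-1}$ of the top-degree term is the leading coefficient $1/\prod_{j\neq i}(v_i-v_j)$, which contributes $(k-1)\Abs{p_{v,i,k-1}}\leq (k-1)\zeta^{-(k-1)}$, while the remaining terms have degree $\leq k-2$; a more refined split of $\prod_{j\neq i}(1+\Abs{v_j})$ tracking degrees should absorb the polynomial factors into the already-loose constant, or alternatively one simply notes $(k-1)(2/\zeta)^{k-1} \le (2/\zeta)^{k-1}\cdot$(something), and checks the definition of $\beta$ has enough slack. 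Given the definition $\beta = (\zeta/2)^{k-1}/3k^2$ actually used downstream (and the remark that $\beta\geq\zeta^{3k}$), the intended chain is almost certainly $\Norm{p_{v,i}}_c \leq k(2/\zeta)^{k-1}$ up to the constant, making $\frac{1}{3k^2\Norm{p_{v,i}}_c} \geq \frac{(\zeta/2)^{k-1}}{3k^3}$; I would match the paper's normalization of $\beta$ exactly at this point.

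To summarize the order of steps: (1) invoke Lemma~\ref{PiGrowth} to get an index $i$ with $\Norm{\ProjPerp{U}P_{(i)}}_U > 1/3k^2$; (2) apply Lemma~\ref{odot-to-projection} to convert this into a lower bound on $\Norm{\ProjPerp{U}v_\odot}_U$ in terms of $1/\Norm{p_{v,i}}_c$; (3) bound $\Norm{p_{v,i}}_c$ from above using $\zeta$-separation in the denominator ($\geq\zeta^{k-1}$) and $v_j\in[0,1]$ in the numerator (sum of coefficient magnitudes $\leq 2^{k-1}$), with a degree-weighting factor of at most $k-1$; (4) combine to conclude $\Norm{\ProjPerp{U}v_\odot}_U > \beta$ with $\beta$ as defined. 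The only genuine content is step (3); everything else is assembly of the two preceding lemmas.
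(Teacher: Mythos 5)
Your proposal is correct and follows the paper's proof exactly: invoke Lemma~\ref{PiGrowth} to get an index $i$ with $\Norm{\ProjPerp{U}P_{(i)}}_U>1/3k^2$, convert via Lemma~\ref{odot-to-projection}, and bound $\Norm{p_{v,i}}_c\leq(k-1)(2/\zeta)^{k-1}$ using $\zeta$-separation in the denominator and $v_j\in[0,1]$ in the numerator. Your worry about the extra factor of $k-1$ is resolved just as you guessed at the end: the paper's own bound is also $(k-1)(2/\zeta)^{k-1}$, and the definition $\beta=(\zeta/2)^{k-1}/3k^3$ (as given in Section~4; the $3k^2$ in the recap at the start of Section~\ref{sec:condition} is a typo) absorbs it.
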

\begin{proof}
By Lemma~\ref{PiGrowth}, there is an $i$ s.t.\
$\Norm{\ProjPerp{U} P_{(i)}}_U > 1/3k^2$. Applying Lemma~\ref{odot-to-projection} to this $i$, we have that 
$\Norm{\ProjPerp{U} v_\odot}_U \geq \frac{1}{3k^2 \Norm{p_{v,i}}_c}$. 
Now we need an upper bound on $\Norm{p_{v,i}}_c$. Recall $p_{v,i}(x)=\frac{\prod_{j\neq i}(x-\lambda_j)}{\prod_{j\neq i}(\lambda_i-\lambda_j)}$. Simply by upper-bounding all $\lambda_j$ by $1$ and lower-bounding all separations by $\zeta$,
we have the bound 
$\Norm{p_{v,i}}_c \leq \zeta^{1-k} \sum_{\ell'=0}^{k-1} \ell' \binom{k-1}{\ell'}\leq (k-1)(2/\zeta)^{k-1}$. 
\end{proof}

\begin{lemma} Fix $\zeta$-separated vectors $\mm_1,\dotsc,\mm_{k-1}$. Then there exists a $k\times k$ matrix $V$, with rows $v_1,\dotsc,v_k$, such that:
\begin{enumerate}
\item The row $v_1 = \One/\sqrt{k}$, and for $\ell \geq 1$, $v_{\ell+1} \coloneqq \mm_\ell \odot u$, where $u$ is a unit vector in $U_\ell \coloneqq \Span\Set{v_{1},\dotsc, v_{\ell}}$.
\item Any unit vector in $U_\ell$ can be formed as a linear combination of the rows in $[\mm_R]_{R\subseteq [\ell]}$ with coefficients bounded in maximum magnitude by $\beta^{-\ell}$. 
\end{enumerate} \label{lem:main}
\end{lemma}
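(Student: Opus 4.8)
The plan is to build the matrix $V$ greedily, one row at a time, invoking Lemma~\ref{Ugrowth} at each step to force the new row to have a component of norm more than $\beta$ orthogonal to the span of the earlier rows; property~1 then holds essentially by construction, and property~2 follows by induction on $\ell$. Concretely, I would set $v_1 \coloneqq \One/\sqrt{k}$, so $U_1 = \Span\{\One\}$ is one-dimensional and contains $\One$. Assuming $v_1,\ldots,v_\ell$ are defined with $\One\in U_\ell$ and $\dim U_\ell=\ell$, for $\ell\le k-1$ this makes $U_\ell$ a proper subspace of $\R^k$, so Lemma~\ref{Ugrowth} applies to $(U_\ell,\mm_\ell)$ and (the unit sphere of the finite-dimensional $U_\ell$ being compact) provides a unit vector $u\in U_\ell$ with $\Norm{\ProjPerp{U_\ell}(\mm_\ell\odot u)}>\beta$; I would set $v_{\ell+1}\coloneqq\mm_\ell\odot u$. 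Since every entry of $\mm_\ell$ lies in $[0,1]$ we get $\Norm{v_{\ell+1}}\le\Norm{u}=1$, and since its orthogonal component to $U_\ell$ is nonzero, $v_{\ell+1}\notin U_\ell$, so $\dim U_{\ell+1}=\ell+1$ and $\One\in U_{\ell+1}$. Running $\ell=1,\ldots,k-1$ yields $v_1,\ldots,v_k$ satisfying property~1 (and, as a byproduct used later, $\dim U_k=k$, i.e.\ $V$ is invertible).

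For property~2 I would prove by induction on $\ell$ the slightly sharper statement that every unit vector of $U_\ell$ is a linear combination of $\{\mm_R : R\subseteq\{1,\ldots,\ell-1\}\}$ — the natural index range, since $U_\ell$ is built only from $\mm_1,\ldots,\mm_{\ell-1}$, and a fortiori this gives the stated bound allowing $R\subseteq[\ell]$ — with all coefficients of magnitude at most some $f(\ell)$, while tracking $f$. The base case $\ell=1$ is immediate: $U_1=\Span\{\One\}$ and $\One=\mm_\emptyset$, so $f(1)=1/\sqrt k\le 1$.

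For the inductive step I would take a unit vector $w\in U_{\ell+1}$ and split it orthogonally as $w=p+q$ with $p=\Proj{U_\ell}w\in U_\ell$ and $q=\ProjPerp{U_\ell}w$. Because $\dim U_{\ell+1}-\dim U_\ell=1$, the space $U_{\ell+1}\cap U_\ell^{\perp}$ is spanned by $\ProjPerp{U_\ell}v_{\ell+1}$ (which lies in $U_{\ell+1}$, as $\Proj{U_\ell}v_{\ell+1}\in U_\ell\subseteq U_{\ell+1}$), so $q=t\,\ProjPerp{U_\ell}v_{\ell+1}$ for a scalar $t$, and $\Norm{q}\le 1$ together with $\Norm{\ProjPerp{U_\ell}v_{\ell+1}}>\beta$ forces $\Abs{t}<1/\beta$. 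Writing $\ProjPerp{U_\ell}v_{\ell+1}=v_{\ell+1}-\Proj{U_\ell}v_{\ell+1}$ and setting $w'\coloneqq p-t\,\Proj{U_\ell}v_{\ell+1}\in U_\ell$, one gets $w=w'+t\,v_{\ell+1}$ with $\Norm{w'}\le\Norm{p}+\Abs{t}\,\Norm{v_{\ell+1}}\le 1+1/\beta\le 2/\beta$. Applying the inductive hypothesis to $w'/\Norm{w'}$ expresses $w'$ as a combination of $\{\mm_R:R\subseteq\{1,\ldots,\ell-1\}\}$ with coefficients at most $(2/\beta)f(\ell)$; and since $v_{\ell+1}=\mm_\ell\odot u$ with $u=\sum_{R\subseteq\{1,\ldots,\ell-1\}}a_R\mm_R$, $\Abs{a_R}\le f(\ell)$, we have $v_{\ell+1}=\sum_R a_R\,\mm_{R\cup\{\ell\}}$, so $t\,v_{\ell+1}$ contributes coefficients at most $f(\ell)/\beta$ on the sets $R\cup\{\ell\}$. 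The families $\{R:R\subseteq\{1,\ldots,\ell-1\}\}$ and $\{R\cup\{\ell\}:R\subseteq\{1,\ldots,\ell-1\}\}$ are disjoint, so no coefficients add up, giving a representation of $w$ over $\{\mm_{R'}:R'\subseteq\{1,\ldots,\ell\}\}$ with coefficients at most $f(\ell+1)\coloneqq(2/\beta)f(\ell)$. Unrolling from $f(1)=1/\sqrt k$ gives $f(\ell)\le(2/\beta)^{\ell-1}$, and since $\beta=(\zeta/2)^{k-1}/3k^2\le 2^{1-k}$ (using $\zeta\le 1$) we have $2^{\ell-1}\le 2^{k-1}\le\beta^{-1}$ for all $\ell\le k$, hence $(2/\beta)^{\ell-1}\le\beta^{-\ell}$, which is exactly the bound in property~2.

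The only step I expect to require care is the constant-factor loss in the induction for property~2: the estimate $\Norm{w'}\le 1+1/\beta$ costs a factor of about $2/\beta$, not $1/\beta$, per level, so a careless bound would grow like $(2/\beta)^{k-1}$. The point that makes everything go through is that $\beta$ is small enough that $2^{k-1}\le\beta^{-1}$, so the accumulated powers of $2$ are absorbed into a single extra factor of $\beta^{-1}$; keeping this clean requires using only the crude estimate $\Norm{\mm_\ell\odot u}\le\Norm{u}$ (valid because the $\mm_i$ are probability vectors) and the disjointness of the ``old'' monomials $\mm_R$ from the ``new'' monomials $\mm_{R\cup\{\ell\}}$, without which one would lose a further factor at every step.
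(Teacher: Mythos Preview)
Your proof is correct and follows essentially the same approach as the paper: build $V$ greedily using Lemma~\ref{Ugrowth} to guarantee an orthogonal gain of more than $\beta$ at each step, then induct on $\ell$ to bound the coefficients in the expansion over $\{\mm_R\}$. In fact your treatment of the constants is a bit more careful than the paper's: the paper asserts directly that a unit vector in $U_{\ell}$ decomposes as $c_1 u^{(\ell-1)}+c_2 v_{\ell}$ with $|c_1|,|c_2|\le\beta^{-1}$ (appealing to the operator norm of a certain $2\times2$ inverse), whereas you honestly obtain $|t|\le\beta^{-1}$ and $\Norm{w'}\le 1+\beta^{-1}\le 2/\beta$, then absorb the accumulated $2^{\ell-1}$ into a single extra $\beta^{-1}$ via $\beta\le 2^{1-k}$; this yields the same final bound $\beta^{-\ell}$.
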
 
\begin{proof}
We'll use induction over $\ell$. For $\ell > 1$, we'll apply Lemma~\ref{Ugrowth} to find a unit vector $u^{*}\in U_{\ell - 1}$ such that $\Norm{\ProjPerp{U} (\mm_{\ell} \odot u^{*})} > \beta$. We'll set $v_{\ell} \coloneqq \mm_{\ell} \odot u^{*}$. Then $U_{\ell} = \Span(U_{\ell-1},\ProjPerp{U}(\mm_{\ell} \odot u^{*}))$. 
Now $\Norm{\mm_{\ell} \odot u^{*}} \leq 1$. Any vector $u \in U_{\ell}$ can be written as $c_1 u^{(\ell-1)} + c_2(\mm_\ell \odot u^{*})$ for $u^{(\ell-1)}\in U_{\ell-1}$ a unit vector and where $\Abs{c_2} \leq \beta^{-1}$ and $\Abs{c_1} \leq \beta^{-1}$. (This is just the operator norm of the inverse of $\begin{bsmallmatrix} 1 & 0\\ \sqrt{1-\beta^2} & \beta\end{bsmallmatrix}$.)
By the induction hypothesis, we can expand $u^{(\ell-1)}$ and $u^{*}$ in terms of ${\mm_{S}}_{R\subseteq [\ell-1]}$ as follows: 
\begin{align*}
u 
&= c_1 u^{(\ell-1)} + c_2(\mm_\ell \odot u^{*})\\
&= c_1 \sum_{R\subseteq[\ell-1]}\alpha_R \mm_R + c_2 \mm_\ell \sum_{R\subseteq [\ell-1]} \alpha'_R \mm_R\\
&= \sum_{R\subseteq[\ell-1]}\Paren{c_1\alpha_R \mm_R + c_2 \alpha'_R\mm_{R\cup\Set{\ell}}}
\end{align*}
where $\Norm{\alpha}_{\infty},\Norm{\alpha'}_\infty \leq \beta^{\ell-1}$. The claim follows immediately.
\end{proof}

\begin{corollary}
For any unit column vector $z\in \R^k$, the matrix $\MM[2^S]$ satisfies $\Norm{\MM[2^S] z}_{\infty} \geq \beta^{k}2^{-k}$. 
\end{corollary}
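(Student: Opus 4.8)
Fix a unit column vector $z\in\R^k$ and set $\delta:=\Norm{\MM[2^S]z}_\infty=\max_{R\subseteq S}\Abs{\mm_R z}$, the quantity to be bounded below. The plan is to invoke Lemma~\ref{lem:main} to get the $k\times k$ matrix $V$ with rows $v_1,\dots,v_k$ (so $v_1=\One/\sqrt k$), and to extract from its statement and proof three facts. First, by part~2 of Lemma~\ref{lem:main}, each $v_\ell$ is a linear combination of at most $2^{\ell-1}\le 2^{k-1}$ rows $\mm_R$ of $\MM[2^S]$ with coefficients of magnitude at most $\beta^{-(\ell-1)}$; here one needs the (routine) observation, read off from the construction, that $\mm_\ell$ is only ever Hadamard-multiplied against vectors supported (in the $\mm_R$-basis) on sets $R$ with $\ell\notin R$, so that no squared Hadamard factor $\mm_\ell^{\odot 2}$ ever appears and the $v_\ell$ are genuinely combinations of (squarefree) rows of $\MM[2^S]$. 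Second, $\Norm{v_\ell}\le 1$ for every $\ell$. Third, by the construction (which uses Lemma~\ref{Ugrowth} to choose each new direction), the distance from $v_\ell$ to $\Span\set{v_1,\dots,v_{\ell-1}}$ exceeds $\beta$ for $\ell\ge 2$ (and equals $1$ for $\ell=1$); in particular $V$ is invertible.

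Next I would run a Gram--Schmidt orthonormalization of the flag $\Span\set{v_1}\subset\dots\subset\Span\set{v_1,\dots,v_k}=\R^k$, obtaining an orthonormal basis $q_1,\dots,q_k$, and write $z=\sum_i z_i q_i$ with $\sum_i z_i^2=1$. The matrix $T$ with entries $T_{\ell i}=\langle v_\ell,q_i\rangle$ ($i\le\ell$) is lower triangular with $\Abs{T_{\ell\ell}}=\Norm{\ProjPerp{U_{\ell-1}}v_\ell}\ge\beta$ and $\Abs{T_{\ell i}}\le\Norm{v_\ell}\le 1$, and $\langle v_\ell,z\rangle=\sum_{i\le\ell}T_{\ell i}z_i$. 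By the first fact above, $\Abs{\langle v_\ell,z\rangle}\le 2^{\ell-1}\beta^{-(\ell-1)}\delta$. Forward-substitution then gives $\Abs{z_\ell}\le\beta^{-1}\bigl(2^{\ell-1}\beta^{-(\ell-1)}\delta+\sum_{i<\ell}\Abs{z_i}\bigr)$, and a routine unrolling of this recursion (the homogeneous part accumulates a factor $(2/\beta)^{k-1}$, and each inhomogeneous term $2^{\ell-1}\beta^{-\ell}\delta$ is then amplified only by $(2/\beta)^{k-1-\ell}$, so every contribution has the shape $2^{O(k)}\beta^{-(k-1)}\delta$) yields $\max_\ell\Abs{z_\ell}\le k\,2^{k-1}\beta^{-(k-1)}\delta$. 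Since $1=\sum_\ell z_\ell^2\le k\max_\ell\Abs{z_\ell}^2$, this forces $\delta\ge\beta^{k-1}/(k^{3/2}2^{k-1})$, and this is $\ge\beta^k 2^{-k}$ because $\beta=(\zeta/2)^{k-1}/3k^2\le 1/3k^2\le 2/k^{3/2}$.

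The one step that requires genuine care — and where a lazier argument fails — is the bookkeeping in the triangular solve. If one bounds $\Abs{\langle v_\ell,z\rangle}$ using the \emph{worst}-case coefficient norm $\beta^{-(k-1)}$ for every row, and separately bounds $\Norm{V^{-1}}$ by roughly $\beta^{-(k-1)}$ (e.g.\ via $\sigma_k(V)\ge\Abs{\det V}/\sigma_1(V)^{k-1}\ge\beta^{k-1}k^{-(k-1)/2}$, using $\Abs{\det V}\ge\beta^{k-1}$ from the ``heights'' and $\sigma_1(V)\le\Norm{V}_F\le\sqrt k$), then the two $\beta^{-(k-1)}$ factors multiply and one only obtains $\delta\gtrsim\beta^{2(k-1)}/\poly(k)$, which is far too weak. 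Running the elimination explicitly avoids this: the coefficient norm of $v_\ell$ is only $\beta^{-(\ell-1)}$, and from step $\ell$ onward the substitution amplifies by merely $\beta^{-1}$ per step, so the exponents of $\beta$ add up to $k-1$ rather than doubling. Besides this, the only thing to verify carefully is the squarefree-ness claim (no $\mm_\ell^{\odot 2}$ terms), which makes $V=W\MM[2^S]$ with $W$ having the stated entry bounds.
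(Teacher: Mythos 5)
Your route through Gram--Schmidt and a triangular solve is genuinely different from the paper's, and it is workable in principle, but the bookkeeping you flag as the delicate step is in fact off by one power of $\beta$, and with the corrected exponent the argument no longer reaches the stated constant. Concretely: the inhomogeneous term injected at step $\ell$ of the forward substitution is $\beta^{-1}\cdot 2^{\ell-1}\beta^{-(\ell-1)}\delta = 2^{\ell-1}\beta^{-\ell}\delta$, and the term injected at the \emph{last} step, $\ell=k$, appears in $\abs{z_k}$ undamped (your claimed amplification factor $(2/\beta)^{k-1-\ell}$ is less than $1$ when $\ell=k$, which cannot be right). So the dominant contribution is $2^{k-1}\beta^{-k}\delta$, not $2^{O(k)}\beta^{-(k-1)}\delta$, and the unrolled bound is $\max_\ell\abs{z_\ell}\leq k2^{k-1}\beta^{-k}\delta$. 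Feeding this into $1\leq k\max_\ell\abs{z_\ell}^2$ gives only $\delta\geq \beta^{k}/(k^{3/2}2^{k-1})$, which is smaller than the claimed $\beta^k2^{-k}$ by a factor of $k^{3/2}/2$ (your final comparison ``$\beta\leq 2/k^{3/2}$'' rescued exactly this factor, but only because you had the spare power $\beta^{k-1}$ rather than $\beta^{k}$). The shortfall is only polynomial in $k$ and would be absorbed by the slack in Corollary~\ref{sigk2S} and Theorem~\ref{core-stab}, but as written your proof does not establish the corollary with the stated constant. Your side remarks are fine: the squarefree-ness of the expansions is indeed guaranteed by the inductive construction in the proof of Lemma~\ref{lem:main} (each $\mm_\ell$ is Hadamard-multiplied only against combinations of $\mm_R$ with $R\subseteq[\ell-1]$), and $\abs{T_{\ell\ell}}\geq\beta$ follows from Lemma~\ref{Ugrowth}.

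The paper's own proof is a two-line argument that avoids the triangular solve entirely: part~2 of Lemma~\ref{lem:main}, applied to the unit vector $z^{\tpose}$ itself (which lies in $U_k=\R^k$), writes $z^{\tpose}=\lambda^{\tpose}\MM[2^S]$ with $\Norm{\lambda}_{\infty}\leq\beta^{-k}$; then $1=\Norm{z}^2=\sum_{R\subseteq S}\lambda_R\,\mm_R z$, and since there are at most $2^{k-1}$ terms, some $R$ has $\abs{\lambda_R\,\mm_R z}\geq 2^{-k}$, whence $\abs{\mm_R z}\geq\beta^k2^{-k}$. In effect, the lemma's part~2 already packages the triangular elimination you are redoing by hand (that is exactly how its inductive proof produces the $\beta^{-\ell}$ coefficient bounds), so invoking it directly on $z$ is both shorter and avoids paying the extra $\beta^{-1}$ and $\poly(k)$ losses. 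If you want to keep your decomposition, either settle for the $\poly(k)$-weaker constant (harmless downstream) or route the final step through the lemma's part~2 as the paper does.
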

\begin{proof}
By Lemma~\ref{lem:main}, we know that we can write $z^\tpose = \lambda^\tpose\MM$ where $\lambda \in \R^{2^k}$ and $\Norm{\lambda}_{\infty} \leq \beta^{-k}$. Thus, $1 = \Norm{z}^2 = \sum_{R\subseteq S} \lambda_R\mm_R z$. There must be some $R \subseteq S$ for which $\Abs{\lambda_R\mm_Rz} \geq 1/2^k$. Since $\Abs{\lambda_R} \leq \beta^{-k}$ we immediately get that $\Norm{\MM z}_{\infty} \geq \Abs{\mm_Rz} \geq \beta^{k}2^{-k}$. 
\end{proof}
For a subset $T$ with $\Card{T} > k-1$, the bound on the largest singular value of $\MM[2^T]$ increases to $k2^{\Card{T}}$ while the lower bound remains unchanged.

\begin{corollary} \label{sigk2S}
$\sigma_{\max}(\MM[2^S]) \leq k2^{k-1}$ and $\sigma_{k}(\MM[2^S]) \geq \beta^{k}2^{-k}/k$.
\end{corollary}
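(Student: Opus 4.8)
The plan is to obtain both bounds directly from the corollary immediately preceding the statement, together with elementary comparisons among the $\ell_2$, $\ell_\infty$, and Frobenius norms; essentially all of the substantive work is already contained in Lemmas~\ref{subadd}--\ref{lem:main} and that corollary.

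For $\sigma_{\max}(\MM[2^S])$ I would argue as follows. The matrix $\MM[2^S]$ has $2^{k-1}$ rows, indexed by the subsets $R \subseteq S$, and $k$ columns; every entry of $\MM_R = \mm_{i_1} \odot \dotsb \odot \mm_{i_s}$ is a product of numbers $\mm_{ij} \in [0,1]$ and hence lies in $[0,1]$. Consequently $\Norm{\MM_R} \le \sqrt{k}$ for each $R$, and for an arbitrary unit column vector $z \in \R^k$ one has $\Norm{\MM[2^S] z}^2 = \sum_{R \subseteq S} \Abs{\MM_R z}^2 \le \sum_{R \subseteq S} \Norm{\MM_R}^2 \le 2^{k-1} k$. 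Taking the maximum over unit $z$ and then square roots gives $\sigma_{\max}(\MM[2^S]) \le \sqrt{2^{k-1} k} \le k 2^{k-1}$, which is the claimed bound (with room to spare).

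For $\sigma_k(\MM[2^S])$ I would invoke the preceding corollary, which asserts that $\Norm{\MM[2^S] z}_\infty \ge \beta^k 2^{-k}$ for every unit column vector $z$. Since $\Norm{w}_2 \ge \Norm{w}_\infty$ for any vector $w$, this yields $\Norm{\MM[2^S] z}_2 \ge \beta^k 2^{-k}$ for every such $z$, and the min--max characterization of singular values then gives $\sigma_k(\MM[2^S]) = \min_{\Norm{z}_2 = 1} \Norm{\MM[2^S] z}_2 \ge \beta^k 2^{-k} \ge \beta^k 2^{-k}/k$.

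I do not expect any real obstacle here: the argument is a two-line deduction from results already in hand, and the only point to watch is that the stated bounds are deliberately loose (a spurious factor $1/k$ in the lower bound, and $k2^{k-1}$ in place of $\sqrt{k2^{k-1}}$ in the upper bound), so one may freely use whichever crude norm comparison is most convenient. For the more general assertion noted just before the corollary -- a subset $T$ with $\Card{T} > k-1$ -- the identical argument applies with $2^{k-1}$ replaced by $2^{\Card{T}}$ in the Frobenius-norm estimate, giving $\sigma_{\max}(\MM[2^T]) \le k 2^{\Card{T}}$, while the lower bound on the $k$th singular value is untouched since it descends from the pointwise estimate $\Norm{\MM[2^S] z}_\infty \ge \beta^k 2^{-k}$, which does not see the extra rows.
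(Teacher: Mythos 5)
Your proof is correct and takes essentially the same route as the paper's: the upper bound is the elementary estimate from all entries lying in $[0,1]$ (the paper simply calls it ``easy''), and the lower bound is read off from the $\ell_\infty$ bound of the immediately preceding corollary, which is the quantitative content of Lemma~\ref{lem:main} that the paper cites directly. Your observation that the stated bounds are loose (the $1/k$ and the gap between $\sqrt{k2^{k-1}}$ and $k2^{k-1}$) is accurate and harmless.
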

\begin{proof}
The largest singular value of $\MM[2^S]$ is easily upper bounded by $k2^{k-1}$. Lemma~\ref{lem:main} gives the bound on $\sigma_k$. 	
\end{proof}

\paragraph{Bounding the condition number of a $k\times k$ submatrix of $\MM[2^S]$.}
We can now use the following result from \cite{FOS08} to find a $k\times k$ submatrix that is similarly well-conditioned. 
\begin{lemma}[Corollary~6 in \cite{FOS08}] \label{lem-fos}
Let $A\in \R^{k\times n}$ with $k < n$, and let $\sigma_k(A) \geq \eps$. Then there exists a subset of the columns $J\subseteq [n]$ with $\Card{J} = k$ such that $\sigma_k(A_J) \geq \eps / \sqrt{k(n-k) + 1}$. 
\end{lemma}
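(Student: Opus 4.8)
The plan is the standard volume-maximization (``determinant of the largest submatrix'') argument. Among all index sets $J\subseteq[n]$ with $\Card{J}=k$, choose $J^*$ maximizing $\Abs{\det(A_J)}$; since $\sigma_k(A)\ge\eps>0$ forces $A$ to have full row rank $k$, at least one $k\times k$ submatrix is nonsingular, so $\det(A_{J^*})\ne 0$. The claim is that this $J^*$ already satisfies the stated bound (the case $\eps=0$ being vacuous).

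First I would bound the coefficients expressing the discarded columns in the chosen basis. Write $a_j\in\R^k$ for the $j$th column of $A$. For each $j\notin J^*$, since $A_{J^*}$ is invertible there is a unique $c_j\in\R^k$ with $a_j=A_{J^*}c_j$, and by Cramer's rule each coordinate $(c_j)_i$ equals $\det(\tilde A_{i,j})/\det(A_{J^*})$, where $\tilde A_{i,j}$ is $A_{J^*}$ with its $i$th column replaced by $a_j$. The columns of $\tilde A_{i,j}$ are exactly those indexed by the size-$k$ set $(J^*\setminus\{i\})\cup\{j\}$, up to a reordering, so $\Abs{\det(\tilde A_{i,j})}\le\Abs{\det(A_{J^*})}$ by maximality of $J^*$; hence $\Abs{(c_j)_i}\le 1$ for every $i$, giving $\Norm{c_j}^2\le k$.

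Next I would assemble a Loewner-order identity. From $\sigma_k(A)\ge\eps$ we have $AA^\top=\sum_{j=1}^n a_ja_j^\top\succeq\eps^2 I_k$. Peeling off the terms with $j\in J^*$ (which sum to $A_{J^*}A_{J^*}^\top$) and substituting $a_j=A_{J^*}c_j$ for $j\notin J^*$ gives $AA^\top=A_{J^*}MA_{J^*}^\top$ with $M\coloneqq I_k+\sum_{j\notin J^*}c_jc_j^\top$. Since $M$ is PSD with $\lambda_{\max}(M)\le 1+\sum_{j\notin J^*}\Norm{c_j}^2\le 1+(n-k)k$, and conjugation by $A_{J^*}$ preserves the Loewner order, we get $A_{J^*}MA_{J^*}^\top\preceq(k(n-k)+1)\,A_{J^*}A_{J^*}^\top$. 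Chaining, $\eps^2 I_k\preceq(k(n-k)+1)\,A_{J^*}A_{J^*}^\top$, so $\lambda_{\min}(A_{J^*}A_{J^*}^\top)\ge\eps^2/(k(n-k)+1)$, i.e. $\sigma_k(A_{J^*})\ge\eps/\sqrt{k(n-k)+1}$, and the columns of $A_{J^*}$ are the desired $A_J$.

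There is no real obstacle; the only points requiring a moment of care are the nonsingularity of $A_{J^*}$ (so Cramer's rule applies and nothing is divided by zero) and the bookkeeping that the matrix appearing in Cramer's rule is itself one of the candidate submatrices $A_J$, so that maximality of $J^*$ can be invoked. Everything else is routine: the PSD manipulations only use that $X\preceq Y\Rightarrow A_{J^*}XA_{J^*}^\top\preceq A_{J^*}YA_{J^*}^\top$ together with $M\preceq\lambda_{\max}(M)I$.
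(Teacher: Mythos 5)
Your proof is correct, and the constant $\sqrt{k(n-k)+1}$ comes out exactly right. Note that the paper itself gives no proof of this statement—it is imported verbatim as Corollary~6 of the cited work of Feldman, O'Donnell, and Servedio—and your maximal-volume argument (Cramer's rule giving $\Abs{(c_j)_i}\le 1$ by maximality, then the Loewner-order conjugation $AA^\top = A_{J^*}MA_{J^*}^\top$ with $\lambda_{\max}(M)\le 1+k(n-k)$) is the standard derivation underlying that result, so there is nothing to add.
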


\begin{proofof}{Theorem~\ref{core-stab}}
The upper bound is trivial since all entries are in $[0,1]$. The lower bound
follows by applying Lemma~\ref{lem-fos} to Corollary~\ref{sigk2S}.
\end{proofof}

\appendix 
\section{Miscellaneous Proofs}
\begin{lemma}
For an invertible $n\times n$ matrix $M$ and a perturbed matrix $\tilde{M}$, if $\Norm{\tilde{M} - M} = \eps \leq \sigma_n(M)/2$, then \[ \Norm{\tilde{M}^{-1} - M^{-1}} \leq 2\Norm{M^{-1}}^2\eps,\quad\text{ and } \Norm{\tilde{M}^{-1}} \leq 2\Norm{M^{-1}}. \] \label{lem:op norm for inv and perturb inv}
\end{lemma}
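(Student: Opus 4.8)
The plan is to factor the perturbation through $M^{-1}$ and invoke a Neumann-series bound. First I would write $\tilde M = M + E$ with $E \coloneqq \tilde M - M$, so that $\Norm{E} = \eps$, and observe that $\tilde M = M(I + M^{-1}E)$. Since $\Norm{M^{-1}E} \leq \Norm{M^{-1}}\Norm{E} = \eps/\sigma_n(M) \leq 1/2$, the matrix $I + M^{-1}E$ is invertible with inverse given by the convergent Neumann series $\sum_{j \geq 0}(-M^{-1}E)^j$, whence $\Norm{(I + M^{-1}E)^{-1}} \leq \sum_{j \geq 0}(1/2)^j = 2$.

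From this factorization, $\tilde M^{-1} = (I + M^{-1}E)^{-1}M^{-1}$ exists and satisfies $\Norm{\tilde M^{-1}} \leq \Norm{(I + M^{-1}E)^{-1}}\Norm{M^{-1}} \leq 2\Norm{M^{-1}}$, which is the second claim. For the first claim I would use the resolvent identity $\tilde M^{-1} - M^{-1} = \tilde M^{-1}(M - \tilde M)M^{-1} = -\tilde M^{-1}EM^{-1}$, verified by left-multiplying by $\tilde M$ and right-multiplying by $M$, and then bound $\Norm{\tilde M^{-1} - M^{-1}} \leq \Norm{\tilde M^{-1}}\Norm{E}\Norm{M^{-1}} \leq 2\Norm{M^{-1}} \cdot \eps \cdot \Norm{M^{-1}} = 2\Norm{M^{-1}}^2\eps$.

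There is essentially no obstacle here; the only point that needs care is to note that the hypothesis $\eps \leq \sigma_n(M)/2$ is exactly what yields $\Norm{M^{-1}E} \leq 1/2$ (using $\Norm{M^{-1}} = 1/\sigma_n(M)$), and that this is what produces the clean constant $2$ in both bounds. As an alternative to the Neumann series, one can invoke Weyl's inequality for singular values, $\sigma_n(\tilde M) \geq \sigma_n(M) - \Norm{E} \geq \sigma_n(M)/2$, to conclude $\Norm{\tilde M^{-1}} = 1/\sigma_n(\tilde M) \leq 2/\sigma_n(M) = 2\Norm{M^{-1}}$ directly, and then finish with the resolvent identity exactly as above.
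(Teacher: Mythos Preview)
Your proof is correct and essentially matches the paper's argument: both use the resolvent identity $\tilde M^{-1} - M^{-1} = \tilde M^{-1}(M - \tilde M)M^{-1}$ together with the bound $\Norm{\tilde M^{-1}} \leq 2\Norm{M^{-1}}$. The paper obtains the latter via the singular-value perturbation inequality $\sigma_n(\tilde M) \geq \sigma_n(M) - \eps \geq \sigma_n(M)/2$, which is exactly the alternative you sketch at the end; your Neumann-series route is an equally valid way to reach the same bound.
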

\begin{proof}
First, we observe that \[\Norm{\tilde{M}^{-1}} = \frac{1}{\sigma_n(\tilde{M})} \leq \frac{1}{\sigma_n(M) - \sigma_n(M)/2} \leq 2\Norm{M^{-1}}.\] 

We use the identity $\tilde{M}^{-1} - M^{-1} = \tilde{M}^{-1}\Paren{M - \tilde{M}}M^{-1}$.
\[\Norm{\tilde{M}^{-1} - M^{-1}} = \Norm{\tilde{M}^{-1}\Paren{M - \tilde{M}}M^{-1}} \leq 2\Norm{M^{-1}}^{2}\Norm{M - \tilde{M}}.\]

\end{proof}

\newpage
\bibliographystyle{alpha}
\bibliography{refs}
\end{document}